\documentclass[journal]{IEEEtran}
%\documentclass{article}
%
% If IEEEtran.cls has not been installed into the LaTeX system files,
% manually specify the path to it like:
% \documentclass[journal]{../sty/IEEEtran}

% Some very useful LaTeX packages include:
% (uncomment the ones you want to load)

% *** MISC UTILITY PACKAGES ***
%
%\usepackage{ifpdf}
% Heiko Oberdiek's ifpdf.sty is very useful if you need conditional
% compilation based on whether the output is pdf or dvi.
% usage:
% \ifpdf
%   % pdf code
% \else
%   % dvi code
% \fi
% The latest version of ifpdf.sty can be obtained from:
% http://www.ctan.org/tex-archive/macros/latex/contrib/oberdiek/
% Also, note that IEEEtran.cls V1.7 and later provides a builtin
% \ifCLASSINFOpdf conditional that works the same way.
% When switching from latex to pdflatex and vice-versa, the compiler may
% have to be run twice to clear warning/error messages.
%\usepackage{setspace}
%\doublespacing
\usepackage[utf8]{inputenc} %ajout Nil pour éviter tous les pb d'accents
\usepackage[pdftex]{graphicx} 

\usepackage{epstopdf}

\usepackage{amssymb}
\usepackage{amsthm}

\usepackage{amsfonts}
\usepackage{amssymb}
\usepackage{indentfirst}
\usepackage{dsfont}
\usepackage{color,colortbl}
\usepackage{stmaryrd}
\usepackage{cancel}

\usepackage{dsfont}
\usepackage{mathrsfs}
\usepackage{tikz}
\newtheorem{theorem}{Theorem}[section]
\newtheorem{lem}[theorem]{Lemma}
\newtheorem{prop}[theorem]{Proposition}

\newtheorem{condition}[theorem]{Condition}

\newenvironment{example}[1][Example]{\begin{trivlist}
\item[\hskip \labelsep {\bfseries #1}]}{\end{trivlist}}
\newenvironment{rmq}[1][Remark]{\begin{trivlist}
\item[\hskip \labelsep {\bfseries #1}]}{\end{trivlist}}

%watermark
%\renewcommand{\gitMark}{Commit: \gitAbbrevHash ~ by \gitAuthorName \\ \: at \gitAuthorIsoDate} %marque de bas de page avec les métadonnées git personnalisée

%ajout Nil
\newcommand{\E}{\operatorname{\mathbb{E}}}

\newcommand{\Var}{\operatorname{Var}}
\newcommand{\Cov}{\operatorname{Cov}}

\newcommand{\argmin}{\operatorname{argmin}}
\newcommand{\W}{\mathcal{W}_2(\mathbb{R})}
\newcommand{\Tr}{Tr}

\def\bbP{\mathbb{P}}

\def\bbR{\mathbb{R}}

\def\ML{\hat{\theta}_{ML}}
\def\st{\sup\limits_{\theta \in \Theta}}
\def\mi{\max\limits_{i=1\cdots p}}
\def\yc{\hat{Y}_{\ML}}
\def\ycz{\hat{Y}_{\theta_0}}

% *** CITATION PACKAGES ***
%
\usepackage{cite}
% cite.sty was written by Donald Arseneau
% V1.6 and later of IEEEtran pre-defines the format of the cite.sty package
% \cite{} output to follow that of IEEE. Loading the cite package will
% result in citation numbers being automatically sorted and properly
% "compressed/ranged". e.g., [1], [9], [2], [7], [5], [6] without using
% cite.sty will become [1], [2], [5]--[7], [9] using cite.sty. cite.sty's
% \cite will automatically add leading space, if needed. Use cite.sty's
% noadjust option (cite.sty V3.8 and later) if you want to turn this off
% such as if a citation ever needs to be enclosed in parenthesis.
% cite.sty is already installed on most LaTeX systems. Be sure and use
% version 4.0 (2003-05-27) and later if using hyperref.sty. cite.sty does
% not currently provide for hyperlinked citations.
% The latest version can be obtained at:
% http://www.ctan.org/tex-archive/macros/latex/contrib/cite/
% The documentation is contained in the cite.sty file itself.

% *** GRAPHICS RELATED PACKAGES ***
%
\ifCLASSINFOpdf
  % \usepackage[pdftex]{graphicx}
  % declare the path(s) where your graphic files are
  % \graphicspath{{../pdf/}{../jpeg/}}
  % and their extensions so you won't have to specify these with
  % every instance of \includegraphics
  % \DeclareGraphicsExtensions{.pdf,.jpeg,.png}
\else
  % or other class option (dvipsone, dvipdf, if not using dvips). graphicx
  % will default to the driver specified in the system graphics.cfg if no
  % driver is specified.
  % \usepackage[dvips]{graphicx}
  % declare the path(s) where your graphic files are
  % \graphicspath{{../eps/}}
  % and their extensions so you won't have to specify these with
  % every instance of \includegraphics
  % \DeclareGraphicsExtensions{.eps}
\fi
% graphicx was written by David Carlisle and Sebastian Rahtz. It is
% required if you want graphics, photos, etc. graphicx.sty is already
% installed on most LaTeX systems. The latest version and documentation
% can be obtained at: 
% http://www.ctan.org/tex-archive/macros/latex/required/graphics/
% Another good source of documentation is "Using Imported Graphics in
% LaTeX2e" by Keith Reckdahl which can be found at:
% http://www.ctan.org/tex-archive/info/epslatex/
%
% latex, and pdflatex in dvi mode, support graphics in encapsulated
% postscript (.eps) format. pdflatex in pdf mode supports graphics
% in .pdf, .jpeg, .png and .mps (metapost) formats. Users should ensure
% that all non-photo figures use a vector format (.eps, .pdf, .mps) and
% not a bitmapped formats (.jpeg, .png). IEEE frowns on bitmapped formats
% which can result in "jaggedy"/blurry rendering of lines and letters as
% well as large increases in file sizes.
%
% You can find documentation about the pdfTeX application at:
% http://www.tug.org/applications/pdftex

% *** MATH PACKAGES ***
%
\usepackage[cmex10]{amsmath}
\usepackage{array}
\usepackage{dblfloatfix}

\usepackage{color,colortbl}
\usepackage{hyperref}

\begin{document}
%
% paper title
% can use linebreaks \\ within to get better formatting as desired
% Do not put math or special symbols in the title.
\title{A Gaussian Process Regression Model\\ for Distribution Inputs}
%
%
% author names and IEEE memberships
% note positions of commas and nonbreaking spaces ( ~ ) LaTeX will not break
% a structure at a ~ so this keeps an author's name from being broken across\
% two lines.
% use \thanks{} to gain access to the first footnote area
% a separate \thanks must be used for each paragraph as LaTeX2e's \thanks
% was not built to handle multiple paragraphs
%

\author{Fran\c{c}ois~Bachoc, Fabrice~Gamboa, Jean-Michel~Loubes and Nil~Venet% <-this % stops a space
\thanks{The four authors are affiliated to the Institute of Mathematics of Toulouse, Universit\'e Paul Sabatier, Toulouse, France. NV is also affiliated to, and completely funded by CEA. E-mail: (francois.bachoc, jean-michel.loubes, fabrice.gamboa,nil.venet)@math.univ-toulouse.fr. An overview of the main results of this article was described in the conference article \cite{SFDS}. }% <-this % stops a space
}

\maketitle

% As a general rule, do not put math, special symbols or citations
% in the abstract or keywords.
\begin{abstract}
Monge-Kantorovich distances, otherwise known as Wasserstein distances, have received a growing attention in statistics and machine learning as a powerful discrepancy measure for probability distributions. In this paper, we focus on forecasting a Gaussian process indexed by probability  distributions. For this, we provide a family of positive definite kernels built using transportation based distances. We provide a probabilistic understanding of these kernels and characterize the corresponding stochastic processes.  We prove that the Gaussian processes indexed by distributions corresponding to these kernels can be efficiently forecast, opening  new perspectives in Gaussian process modeling.
\end{abstract}

% Note that keywords are not normally used for peerreview papers.
\begin{IEEEkeywords}
Gaussian process, Positive definite kernel, Kriging, Monge-Kantorovich distance, Fractional Brownian motion
\end{IEEEkeywords}

% For peer review papers, you can put extra information on the cover
% page as needed:
% \ifCLASSOPTIONpeerreview
% \begin{center} \bfseries EDICS Category: 3-BBND \end{center}
% \fi
%
% For peerreview papers, this IEEEtran command inserts a page break and
% creates the second title. It will be ignored for other modes.
\IEEEpeerreviewmaketitle

\section{\textbf{Introduction}} \label{section:introduction}
% The very first letter is a 2 line initial drop letter followed
% by the rest of the first word in caps.
% 
% form to use if the first word consists of a single letter:
% \IEEEPARstart{A}{demo} file is ....
% 
% form to use if you need the single drop letter followed by
% normal text (unknown if ever used by IEEE):
% \IEEEPARstart{A}{}demo file is ....
% 
% Some journals put the first two words in caps:
% \IEEEPARstart{T}{his demo} file is ....
% 
% Here we have the typical use of a "T" for an initial drop letter
% and "HIS" in caps to complete the first word.
\IEEEPARstart{O}{riginally} used in spatial statistics  (see for instance \cite{MR1127423} and references therein), Kriging has become very popular in many fields such as machine learning or computer experiment, as described in \cite{MR2514435}. It consists in predicting the value of a function at some point by a linear combination of observed values at different points. The unknown function is modeled as the realization of a random process, usually Gaussian, and the Kriging forecast can be seen as the posterior mean, leading to the optimal linear unbiased predictor of the random process. \vskip .1in
%Covariance for GP
Gaussian process models rely on the definition of a covariance function that characterizes the correlations between values of the process at different observation points.
As the notion of similarity between data points is crucial, \textit{i.e.} close location inputs are likely to have similar target values,  covariance functions are the key ingredient in using Gaussian processes, since they define nearness or similarity.
In order to obtain a satisfying model one need to chose a covariance function (\textit{i.e.} a positive definite kernel) that respects the structure of the index space of the dataset.
Continuity of the covariance is a minimal assumption, as one may ask for additional properties such as stationarity or stationary increments with respect to a distance. These stronger assumptions allow to obtain a model where the correlations between data points depend on the distance between them.

First used in Support Vector (see for instance \cite{vapnik1997support}), positive definite kernels are nowadays used for a wide range of applications. There is a huge statistical literature dealing with the construction and  properties of  kernel functions over $\mathbb{R}^d$ for $d \geq 1$ (we refer for instance to \cite{scholkopf2002learning} or \cite{cristianini2000support} and references therein).  Yet the construction of kernels with adequate properties on more complex spaces is still a growing field of research (see for example \cite{cohenlifshits}, \cite{istas2011manifold}, \cite{feragen2015geodesic}). \vskip .1in

%% case of distribution inputs
Within this framework, we tackle the problem of forecasting  a process  indexed by one-dimensional distributions. Our motivations come from a variety of applied problems: in the classical ecological inference problem (see \cite{flaxman2015supported}), outputs are not known for individual inputs but for groups, for which the distribution of a covariate is known. This situation happens for instance in political studies, when one wants to infer the correlation between a vote and variables such as age, gender or wealth level, from the distributions of these covariates in different states (see for example \cite{flaxman2015supported}). The problem of causal inference can also be considered in a distribution learning setting (see \cite{lopez2015towards}).

As \cite{muandet2017kernel} remarks, learning on distribution inputs offers two important advantages in the big data era. By bagging together individual inputs with similar outputs, one reduces the size of a dataset and anonymizes the data. Doing so results on learning on the distribution of the inputs in the bags.

Another application arises in numerical code experiments,  when  the prior knowledge of the input conditions may not be an exact value but rather a set of acceptable values that will be modeled using a prior distribution. Hence we observe output values for such probability distributions and want to forecast the process for other ones. A similar application of distribution inputs for numerical code experiments is given by non-negative functional inputs. We give a detailed example of this situation in Section \ref{section:IRSN}.  \vskip .1in

Several approaches already exist to deal with distribution inputs regression. An important class of methods relies on some notion of divergence between distributions (see \cite{poczos2012nonparametric,poczos2013distribution,barnabas2012nonparametric}). Other methods have been proposed, such as kernel mean embedding \cite{muandet2017kernel} and kernel ridge regression methods \cite{szabo2015two}. In this paper we focus on Gaussian process regression method.  \vskip .1in

The first issue when considering Gaussian process regression for distribution inputs is to define a covariance function, which will allow to compare the similarity between probability distributions. Several approaches can be considered here. The simplest method is to compare a set of parametric features built from the probability distributions, such as the mean or the higher moments. This approach is limited as the effect of such parameters do not take into account the whole shape of the law. Specific kernels should be designed in order  to map distributions into a reproducing kernel Hilbert space  in which the whole arsenal of kernel methods can be extended to probability measures. This issue has recently been considered in \cite{2016arXiv160509522M} or \cite{KolouriZouRohde}. 
% trouver les refs $f$-Divergences are also considered to measure distance between distributions as well  as maximum mean discrepancy
%% wassertsein a la rescousse

In the past few years, transport based distances such as the Monge-Kantorovich or Wasserstein distance have become a growing way to assess similarity between probability measures  and are used for numerous applications in learning and forecast problems. Since such distances are defined as a cost to transport one distribution to the other one, they appear to be a very relevant way to measure similarities between probability measures. Details on Wasserstein distances and their links with optimal transport problems can be found in ~\cite{villani2009optimal}.  Applications in statistics are developed in \cite{MR1625620}, \cite{MR3338645,Gouic2016}  while kernels have been developed in  \cite{KolouriZouRohde} or \cite{peyre2016gromov}. \vskip .1in

In this paper, we construct covariance functions in order to obtain Gaussian processes indexed by  probability measures. We provide a class of covariances which are functions of the Monge-Kantorovich distance, corresponding to stationary Gaussian processes. We also give covariances corresponding to the fractional Brownian processes indexed by probability distributions, which have stationary increments with respect to the Monge-Kantorovich distance. Furthermore we show original nondegeneracy results for these kernels.
Then, in this framework, we focus on the selection of a stationary covariance kernel in a parametric model through maximum likelihood. We prove the consistency and asymptotic normality of the covariance parameter estimators. We then consider the Kriging of such Gaussian processes. We prove the asymptotic accuracy of the Kriging prediction under the estimated covariance parameters. In simulations, we show the strong benefit of the studied kernels, compared to more standard kernels operating on finite dimensional projections of the distributions. In addition, we show in the simulations that the Gaussian process model suggested in this article is significantly more accurate that the kernel smoothing based predictor of \cite{poczos13distribution}.
Our results consolidate the idea that the Monge-Kantorovich distance is an efficient tool to assess variability between distributions, leading to sharp predictions of the outcome of a Gaussian process with distribution-type inputs.  \vskip .1in

The paper falls into the following parts. In Section \ref{section:generalities} we recall generalities on the Wasserstein space, covariance kernels and stationarity of Gaussian processes. Section \ref{section:kernels} is devoted to the construction and analysis of an appropriate kernel for probability measures on $\mathbb{R}$. Asymptotic results on the estimation of the covariance function and properties of the prediction of the associated Gaussian process are presented in Section \ref{section:modeling}. Section \ref{section:simulation} is devoted to numerical applications while the proofs are postponed to the appendix.

\section{An applicative case from nuclear safety} \label{section:IRSN}

The research that conducted to this article have been partially funded by CEA, and is motivated by a nuclear safety application, which we detail here.

A standard problem for used fissile storage process is the axial
burn up analysis of fuel pins \cite{radulescu2009sensitivity}.
In this case study, fuel pins may be seen as one-dimensional curves
$X:[0,1] \to \mathbb{R}^+$ \cite{cacciapouti2000axial}. These curves correspond to the axial
irradiation profiles for fuel in transportation or storage packages
which define the neutronic reactivity of the systems. From a curve $X$,
corresponding to a given irradiation profile, it is then possible to
compute the resulting neutron multiplication factor $k_{eff}(X)$ by
numerical simulation \cite{bowman2003scale}.
It can be insightful, for profiles with a given total irradiation
$\int_{0}^1 X(t)dt$, to study the impact of the shape of the irradiation
curve $X$ on the multiplication factor $k_{eff}(X)$. This type of study
can be addressed by considering $k_{eff}$ as a realization of a Gaussian
process indexed by one-dimensional distributions. \vskip .1in
\section{\textbf{Generalities}} \label{section:generalities}
In this section we recall some basic definitions and properties of the Wasserstein spaces and of covariance kernels.\vskip .1in

\paragraph{The Monge-Kantorovich distance} \label{par:Wass}
Let us consider  the set $\W$ of probability measures on $\mathbb{R}$ with a finite moment of order two.
For two $\mu,\nu$ in $\mathcal{W}_2\left( \mathbb{R} \right)$ , we denote by $\Pi(\mu, \nu)$ the set of all
probability measures $\pi$ over the product set $\mathbb{R} \times\mathbb{R} $
with first (resp. second) marginal $\mu$ (resp. $\nu$).

The transportation cost with quadratic cost function, or quadratic transportation cost,
between these two measures
$\mu$ and $\nu$   is defined as

\begin{equation} \label{eq:quadratic_cost}
	\mathcal{T}_2(\mu, \nu) = \inf_{ \pi \in \Pi(\mu, \nu)} \int \left| x - y\right| ^2 d \pi(x,y).
\end{equation}
This transportation cost allows to endow the set $\mathcal{W}_2\left(\mathbb{R}\right)$
with a metric by defining  the quadratic  Monge-Kantorovich, or quadratic Wasserstein distance between $\mu$ and $\nu$ as
\begin{equation} \label{eq:wasserstein_distance}
 W_2(\mu, \nu)  = \mathcal{T}_2(\mu, \nu) ^{1/2}.
\end{equation}
A probability measure $\pi$ in $\Pi(\mu,\nu)$ realizing the infimum in \eqref{eq:quadratic_cost} is called an optimal coupling.
This vocabulary transfers to a random vector $(X_1,X_2)$ with distribution $\pi$.
We will call $\W$ endowed with the distance $W_2$ the Wasserstein space.

We will consider on several occasions the collection of random variables $(F^{-1}_\mu(U))_{\mu \in \W}$, where $F^{-1}_\mu$ defined as 
$$  F^{-1}_\mu(t) = \inf \{u, F_\mu(u) \geq t \}$$ denotes the quantile function of the distribution $\mu$, and $U$ is an uniform random variable on $[0,1]$. For every $\mu,\nu \in \W$, the random vector $((F^{-1}_\mu(U)),(F^{-1}_\nu(U)))$ is an optimal coupling (see \cite{villani2009optimal}). Notice that the random variable $F^{-1}_\mu(U)$ does not depend on $\nu$, so that $(F^{-1}_\mu(U))_{\mu \in \W}$ is an optimal coupling between every distribution of $\W$.

More details on Wasserstein distances and their links with optimal transport problems can be found in  
\cite{rachev} or \cite{villani2009optimal} for instance. \vskip .1in
%For every $\mu \in \W$ we use the following notations $$m(\mu)=\int x d\mu(x), \quad  \sigma^2(\mu)=\int x^2 d\mu(x)$$ and $$ {\rm Var}(\mu)=\sigma^2(\mu)- \left(m(\mu)\right)^2.$$\vskip .1in

\paragraph{Covariance kernels}
Let us recall that the law of a Gaussian random process $(X(x))_{x\in E}$ indexed by a set $E$ is entirely characterized by its mean and covariance functions
$$ M: x \mapsto \E (X(x))$$
and $$K: (x,y) \mapsto \Cov(X(x) X(y))$$
(see $e.g.$ \cite{lifshits2012lectures}).

A function $K$ is actually the covariance of a random process if and only if it is a \emph{positive definite kernel}, that is to say for every $x_1,\cdots,x_n \in E$ and $\lambda_1,\cdots,\lambda_n \in \mathbb{R}, $
\begin{equation} \label{eq:ineq_positive_definite} \sum_{i,j=1}^n \lambda_i \lambda_j K(x_i,x_j) \geq 0. \end{equation}

In this case we say that $K$ is a \emph{ covariance kernel}.

On the other hand, any function can be chosen as the  mean of a random process. Hence without loss of generality we focus on centered random processes in Section \ref{section:kernels}.

Positive definite kernels are closely related to negative definite kernels. A function $K:E \times E \rightarrow \mathbb{R}$ is said to be a \emph{negative definite kernel} if for every $x\in E$,
\begin{equation} K(x,x)=0\end{equation}
and for every $x_1,\cdots,x_n \in E$ and $c_1,\cdots,c_n \in \mathbb{R}$ such that $\sum_{i=1}^n c_i =0$,

\begin{equation} \label{eq:ineq_negative_definite} \sum_{i,j=1}^n c_i c_j K(x_i,x_j) \leq 0. \end{equation}

\begin{example} The variogram $(x,y)\mapsto \E(X(x)-X(y))^2$ of any random field $X$ is a negative definite kernel. \end{example}

If the inequality \eqref{eq:ineq_positive_definite} (resp. \eqref{eq:ineq_negative_definite}) is strict as soon as not every $\lambda_i$ (resp. $c_i$) is null and the $x_i$ are two by two distinct, a positive definite (resp. negative definite) kernel is said to be \emph{nondegenerate}. Nondegeneracy of a covariance kernel is equivalent to the fact that every covariance matrix built with $K$ is invertible. We will say that a Gaussian random process is nondegenerate  if its covariance function is a nondegenerate kernel. Nondegeneracy is is usually a desirable condition for Kriging, since the forecast is built using the inverse of the covariance matrix of the observations. In addition, Gaussian process models with degenerate kernels have structural restrictions that can prevent them for being flexible enough. We give the nondegeneracy of the fractional Brownian motion indexed by the Wasserstein space in Section \ref{section:kernels}.\vskip .1in

\paragraph{Stationarity} Stationarity is a property of random processes that is standard in the Kriging literature. Roughly speaking, a stationary random process behaves in the same way at every point of the index space. It is also an enjoyable property for technical reasons. In particular it is a key assumption for the proofs of the properties we give in Section \ref{section:modeling}.

We say that a random process $X$ indexed by a metric space $(E,d)$ is \emph{stationary} if it has constant mean and for every isometry $g$ of the metric space we have
\begin{equation} \label{eq:stationary}  \Cov(X(g(x)),X(g(y)))=\Cov(X(x),X(y)).\end{equation}

Let us notice in particular that if the covariance of a random process  is a function of the distance, equation \eqref{eq:stationary} is verified. This is the assumption we make in Section \ref{section:modeling}.

One can also find the assumption of stationarity for the increments of a random process. Many classical random processes have stationarity increments, such as the fractional Brownian motion. We prove the existence of fractional Brownian motion indexed by the Wasserstein space in Section \ref{section:kernels}.

We will say that $X$ has \emph{stationary increments} starting in $o \in E$ if $X$ is centred, $X(o)=0$ almost surely, and for every isometry $g$ we have
\begin{equation}\Cov\left(X(g(x))-X(g(o))\right)=\Cov\left(X(x)-X(o)\right).\end{equation}

Notice that the variance of a random process with stationary increments increases as the input gets far from the origin point~$o$.

Let us remark that the definitions we gave are usually called ``in the wide sense", in contrast with stationarity definitions ``in the strict sense", which asks for the law of the process (or its increments) to be invariant under the action of the isometries, and not only the first and second moments. Since we are only dealing with Gaussian processes those definitions coincide. \vskip .1in

\paragraph{Isometries of the Wasserstein space}

Since we are interested in processes indexed by the Wasserstein space with stationarity features, let us recall a few facts about isometries of the Wasserstein space $\W$, that is to say maps $i:\W \rightarrow \W$ that preserve the Wasserstein distance.

\emph{Trivial isometries} come from isometries of $\mathbb{R}$: to any isometry  $g:\mathbb{R}\rightarrow \mathbb{R}$, we can associate an isometry $g_{\#}: \W \rightarrow \W$ that maps any measure $\mu \in \W$ to the measure
$$g_{\#}(\mu): A \mapsto \mu(g^{-1}(A)).$$
Stationarity of a random process with regard to these trivial isometries is an interesting feature, since it means that the statistical properties of the outputs do not change when we apply an isometry to the real line.

However let us mention that not every isometry of the Wasserstein space is trivial. In particular, mapping every distribution to its symmetric regarding its expectancy defines an isometry of $\W$. We refer to \cite{kloeckner2010geometric} for a complete description of the isometries of the Wasserstein space.

\section{\textbf{Gaussian process models for distribution inputs}} \label{section:kernels}

In this section we give covariance kernels on the space of probability distributions on the real line. This allows for modeling and Gaussian process regression of datasets with distribution inputs.

We start in Section \ref{subsec:fBm_kernels}  by giving a generalization of the seminal fractional Brownian motion to distributions inputs endowed with the Wasserstein distance.

Then, in Section \ref{subsec:stationary_kernels} we give Gaussian processes that are stationary with respect to the Wasserstein distance on the inputs.

\subsection{Fractional Brownian motion with distribution inputs} \label{subsec:fBm_kernels}
We first consider the family of \emph{fractional Brownian kernels}
\begin{multline} \label{eq:fbm_kernels} K^{H,\mu_0}(\mu,\nu) \\ = \frac{1}{2}\left(W_2^{2H}(\mu_0,\mu)+W_2^{2H}(\mu_0,\nu)-W_2^{2H}(\mu,\nu) \right), \end{multline}
where $0<H\leq 1$ and $\mu_0 \in \W$ are fixed.

Note that these kernels are obtained by taking the covariances of the classical fractional Brownian motions and replacing the distance $|t-s|$ between two times $s,t\in \mathbb{R}$ by the Wasserstein distance $W_2(\mu,\nu)$ between two distribution inputs. The measure $\mu_0 \in \W$ plays the role of the origin $0\in \bbR$.

\begin{theorem} \label{thm:fractional_brownian_kernels} For every $0 \leq H \leq 1$ and a given $\mu_0 \in \W$ the function
	$K^{H,\mu_0}$ defined by \eqref{eq:fbm_kernels} is a  covariance function on $\W$. Furthermore $K^{H,\mu_0}$ is nondegenerate if and only if $0<H<1$.
\end{theorem}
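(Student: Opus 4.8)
The plan is to exploit the classical equivalence between the positive definiteness of fractional-Brownian-type kernels and the negative definiteness of the underlying metric raised to the power $2H$. Concretely, by the Schoenberg-type construction recalled in the excerpt, a kernel of the form $\frac12(d_0(\mu)+d_0(\nu)-d(\mu,\nu))$ — where $d_0(\mu)=D(\mu_0,\mu)$ — is positive definite precisely when $D$ is a negative definite kernel on $\W$ (and it is nondegenerate precisely when $D$ is nondegenerate on $\W\setminus\{\mu_0\}$ in the appropriate sense). So the whole statement reduces to showing: (i) $(\mu,\nu)\mapsto W_2^{2H}(\mu,\nu)$ is a negative definite kernel on $\W$ for $0\le H\le 1$; and (ii) it is nondegenerate if and only if $0<H<1$.

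For step (i), the key idea is the isometric embedding of $(\W,W_2)$ into a Hilbert space. Using the quantile-function representation recalled just before the statement — for $\mu\in\W$ the random variable $F^{-1}_\mu(U)$ lives in $L^2([0,1])$ and $W_2^2(\mu,\nu)=\|F^{-1}_\mu(U)-F^{-1}_\nu(U)\|_{L^2}^2$ — we see that $W_2^2$ is, up to the embedding $\mu\mapsto F^{-1}_\mu$, just the squared distance $\|x-y\|^2$ on a subset of the Hilbert space $\mathcal H=L^2([0,1])$. Now $\|x-y\|^2$ is a classical negative definite kernel on any Hilbert space (expand the square: $\sum_{i,j}c_ic_j\|x_i-x_j\|^2 = -2\|\sum_i c_i x_i\|^2\le 0$ when $\sum_i c_i=0$), so $W_2^2$ is negative definite on $\W$. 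To pass from exponent $1$ to exponent $H\in(0,1]$, I would invoke the standard fact (Schoenberg) that if $D$ is a nonnegative negative definite kernel then so is $D^H$ for every $H\in(0,1]$; hence $W_2^{2H}=(W_2^2)^H$ is negative definite. This proves $K^{H,\mu_0}$ is a covariance function for all $0\le H\le 1$ (the case $H=0$ being the trivial kernel, or handled separately).

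For step (ii), nondegeneracy, the ``if'' direction for $0<H<1$ requires showing that for distinct $\mu_1,\dots,\mu_n$ and $c_1,\dots,c_n$ not all zero with $\sum c_i=0$, the inequality $\sum_{i,j}c_ic_j W_2^{2H}(\mu_i,\mu_j)<0$ is strict; combined with a standard argument this upgrades to invertibility of every covariance matrix built from $K^{H,\mu_0}$. The strictness should follow from the strict version of Schoenberg's theorem: $D^H$ is a nondegenerate negative definite kernel for $0<H<1$ whenever $D$ is a nondegenerate — or even just ``non-identically-degenerate'' — negative definite kernel, using that $t\mapsto t^H$ is a genuine (non-affine) Bernstein-type function and hence strictly increasing and strictly concave; one writes $t^H = c\int_0^\infty (1-e^{-\lambda t})\lambda^{-1-H}d\lambda$ and reduces to nondegeneracy of the Gaussian-type kernels $e^{-\lambda W_2^2}$, which in turn follows from distinctness of the $F^{-1}_{\mu_i}$ in $L^2$. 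For the ``only if'' direction, I would exhibit, for $H=1$, an explicit affine dependence: since $W_2^2$ equals a squared Hilbert distance, three collinear points in $L^2$ (e.g. $\mu_t$ with quantile functions $(1-t)F^{-1}_{\mu_0}+tF^{-1}_{\mu_1}$) produce a linear combination with $\sum c_i=0$ annihilating the quadratic form, so $K^{1,\mu_0}$ is degenerate; the case $H=0$ is degenerate trivially.

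The main obstacle I anticipate is step (ii), and specifically making the strictness completely rigorous: one must verify that the relevant family of measures really does embed as genuinely distinct (hence affinely independent enough) points in $L^2([0,1])$ so that the Gaussian kernels $e^{-\lambda W_2^2}$ are strictly positive definite, and then carefully track how the Bernstein integral representation transfers strict positivity from these to $W_2^{2H}$. The positive-definiteness part (step (i)) is essentially a bookkeeping exercise once the quantile embedding is in hand; the nondegeneracy dichotomy is where the real content — and the need for care with the endpoints $H=0$ and $H=1$ versus the open interval — lies.
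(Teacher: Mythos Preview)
Your plan is correct, and step~(i) --- the negative definiteness of $W_2^{2H}$ via the quantile embedding $\mu\mapsto F_\mu^{-1}\in L^2([0,1])$ followed by Schoenberg subordination --- is exactly what the paper does. The passage from nondegeneracy of the negative definite kernel $W_2^{2H}$ to nondegeneracy of $K^{H,\mu_0}$ (your ``standard argument'') is also handled the same way: the paper appends $\mu_0$ as an extra point with coefficient $-\sum_i\lambda_i$ to manufacture a zero-sum relation.

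Where you diverge from the paper is the proof of strict nondegeneracy of $W_2^{2H}$ for $0<H<1$. You propose the Bernstein integral representation $t^H=c_H\int_0^\infty(1-e^{-\lambda t})\lambda^{-1-H}\,d\lambda$, which under the constraint $\sum_i c_i=0$ reduces the quadratic form to $-c_H\int_0^\infty\bigl(\sum_{i,j}c_ic_j e^{-\lambda W_2^2(\mu_i,\mu_j)}\bigr)\lambda^{-1-H}\,d\lambda$, and then invoke the strict positive definiteness of the Gaussian kernel $e^{-\lambda\|x-y\|^2}$ on Hilbert space. This works and is arguably the most natural route given that you already have the explicit $L^2$ embedding in hand. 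The paper instead uses an abstract product-space trick (adapted from \cite{Venet_critical}): assuming degeneracy of $W_2^{2H}$ at points $\mu_1,\dots,\mu_n$, it builds points $(\mu_i,0)$ and $(\mu_n,\varepsilon)$ in $\W\times\mathbb{R}$ with the product metric $d$, and shows by an $\varepsilon\to 0$ expansion that $d^{2H}$ would then fail to be negative definite --- contradicting the fact that $d^2=W_2^2+|\cdot|^2$ is a sum of negative definite kernels. The paper's argument is more portable: it requires only that $W_2^2$ be negative definite, not that one knows the embedding or the strict positive definiteness of Gaussian kernels on infinite-dimensional Hilbert space. Your argument is more direct here but depends on that last fact, which is classical but not entirely trivial; if you go your route, make sure to cite it or sketch why it holds (e.g.\ via the characterization of characteristic kernels or an explicit computation on finitely many points). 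For the endpoints $H\in\{0,1\}$ the paper simply restricts to Dirac masses $\delta_x$, reducing to the known degeneracy of $|x-y|^{2H}$ on $\mathbb{R}$; your collinearity argument for $H=1$ is equally valid and in the same spirit.
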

The Gaussian process $(X(\mu))_{\mu \in \W}$ such that
\begin{equation} \label{eq:carac_fBm}
	\left\{
	\begin{aligned}
		\E X(\mu) &= 0,\\
		\Cov (X(\mu),X(\nu)) &=K^{H,\mu_0}(\mu,\nu)
	\end{aligned}
	\right.
\end{equation} is the \emph{$H$-fractional Brownian motion with index space $\W$ and origin in $\mu_0$.} It inherits properties from the classical fractional Brownian motion.

 It is easy to check that the output at the origin measure $\mu_0$ is zero, $X(\mu_0) = 0 \text{ almost surely.}$ Furthermore
\begin{equation} \label{eq:carac_fBm2}
		\E (X(\mu)-X(\nu))^2=W_2^{2H}(\mu,\nu),\\
\end{equation}
from which we deduce that $(X(\mu))_{\mu \in \W}$ has stationary increments, which means that the statistical properties of $X(\mu)-X(\nu)$ are the same as those of $X(g(\mu))-X(g(\nu))$ for every isometry $g$ of the Wasserstein space. %In particular the variance $\E (X(\mu))^2=W_2^{2H}(\mu,\mu_0)$ increases as the input $\mu$ goes away from the origin $\mu_0$. %Taking $\mu_0=\delta_0$ the Dirac measure in $0$, we find $\E (X(\mu))^2=W_2^{2H}(\mu,\delta_0)=\Var(\mu)$

The  fractional Brownion motion is well known for its parameter $H$ governing the regularity of the trajectories: small values of $H$ correspond to very irregular trajectories while greater values give steadier paths. Moreover for $H>1/2$ the process exhibits long-range dependence (see \cite{mandelbrot1968fractional}).

From the modelling point of view, it is interesting to consider the following process: consider $(X(\mu))_{\mu \in \W}$ the $H$-fractional Brownian motion with origin in $\delta_0$ the Dirac measure at $0$, $f$ a real-valued function and define
\begin{equation}Y(\mu):=X(\bar{\mu})+f(m(\mu)),\end{equation}

where $\bar{\mu}$ denotes the centred version of $\mu$.
We then have, using $X(\delta_0)=0$ almost surely and \eqref{eq:carac_fBm2}:
\begin{align*}\Var(Y(\mu))=\E(X(\bar{\mu}))^2  =\E(X(\bar{\mu})-X(\delta_0))^2
& = W_2^{2H}(\bar{\mu},\delta_0)\\
& = \left(\Var(\mu)\right)^H.
\end{align*}
Hence the mean of the output $Y(\mu)$ is a function of the mean of the input distribution $\mu$ and its dispersion is an increasing function of the dispersion of $\mu$. This is a valuable property when modeling a function $\mu \mapsto g(\mu)$ as a Gaussian Process realization $\mu \mapsto Y(\mu)$, when it is believed that the range of possible values for $g$ increases with the variance of the input $\mu$.
%
%\begin{equation} \label{eq:iso}
%	\left\{
%	\begin{aligned}
%		\E(Y(\mu))&=m(\mu),\\
%		\Cov(Y(\mu),Y(\nu))&=K^{H,\mu_0}(\mu,\nu)-m(\mu)m(\nu),
%	\end{aligned}
%	\right.
%\end{equation}
%which is equivalent to
%\begin{equation} \label{eq:iso2}
%	\left\{
%	\begin{aligned}
%		\E(Y(\mu))&=m(\mu),\\
%		\E (Y(\mu)-Y(\nu))^2 &=W_2^{2H}(\mu,\nu),\\
%		Y_{\mu_0} &= m(\mu_0) \text{ almost surely.}
%	\end{aligned}
%	\right.
%\end{equation} \begin{rmq}

Let us further notice that for $f=id$ and $H=1$ we have $$\E(Y(\mu))=\E(F^{-1}_\mu(U))$$ and $$\Cov(Y(\mu), Y(\nu))=\Cov(F^{-1}_\mu(U),F^{-1}_{\mu}(U)),$$ where $F^{-1}_\mu$ denotes the quantile function of the distribution $\mu$, and $U$ is an uniform random variable on $[0,1]$. In some sense, $Y$ is in this case the Gaussian process that mimics the statistical properties of the optimal coupling $(F^{-1}_\mu(U))_{\mu \in \W}$ (see Section \ref{section:generalities} a ). % while the process $X$ stays centered and converts the mean $m(\mu)$ into variance.
%\end{rmq}
%One may prefer to consider a random process $Y$ such that $\E Y(\mu) =m(\mu)$ and $\Cov(X(\mu),X(\nu))=$, in which case the Gaussian process mimics the properties of the measures $\mu$, as we also have $\Var Y(\mu) = \mu_0^2(\mu)$.

%To show the nondegeneracy of the kernel $W^{2H}_2$, we adapt a proof from \cite{Venet_critical}.

%\begin{theorem} \label{theorem:nondegenerate} The kernel $W_2^{2H}$ is nondegenerate if and only if $0<H<1$.
%\end{theorem}
%
%The idea of the proof is to consider $\mathcal{W}_2(\mathbb{R}) \times \mathbb{R}$ endowed with the product distance
%$$d((\mu,s),(\nu,t))=\left(W_2(\mu,\nu)^2+|s-t|^2\right)^{1/2}.$$ We assume the degeneracy of the kernel $W^{2H}_2$ on $\W$ and deduce that $d^{2H}$ is not negative definite on $\mathcal{W}_2(\mathbb{R})\times \mathbb{R}$, in contradiction with the following result.
%
%
%\begin{lem} \label{lem:product_index} The function $d^{2H}$ is a negative definite kernel if and only if $0\leq H\leq 1$.
%\end{lem}
%
%From Theorem \ref{theorem:nondegenerate} we deduce the nondegeneracy of the fractional Brownian motion on $\W$.

From now on (with the exception of Section \ref{subsec:proofs_kernels} from the appendix where we prove Theorem \ref{thm:fractional_brownian_kernels}) we will focus on stationary processes, which are more adapted to learning tasks on distributions where there is no a priori reason to associate different dispersion properties to the outputs corresponding to different distribution inputs.

\subsection{Stationary processes} \label{subsec:stationary_kernels}
We now construct Gaussian processes which are stationary with respect to the Wasserstein distance.

\begin{theorem} \label{thm:stationary_valid_kernels} For every completely monotone function $F$ and $0 < H \leq 1$ the function \begin{equation} \label{eq:stationnary_covariance} (\mu,\nu)\mapsto F\left(W^{2H}_2(\mu,\nu)\right)\end{equation} is a covariance function on $\W$. Furthermore a Gaussian random process with constant mean and covariance \eqref{eq:stationnary_covariance} is stationary with respect to the Wasserstein distance.
\end{theorem}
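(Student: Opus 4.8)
The plan is to reduce Theorem~\ref{thm:stationary_valid_kernels} to two classical facts: first, that $W_2^{2H}$ is a negative definite kernel on $\W$; second, the Schoenberg-type correspondence relating completely monotone functions and negative definite kernels. The stationarity claim will then be an immediate consequence of the fact that the covariance is a function of the distance.

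First I would establish that $(\mu,\nu)\mapsto W_2^2(\mu,\nu)$ is a negative definite kernel on $\W$. The key is the optimal coupling representation recalled in Section~\ref{section:generalities}: with $U$ uniform on $[0,1]$, one has $W_2^2(\mu,\nu)=\E\big(F^{-1}_\mu(U)-F^{-1}_\nu(U)\big)^2$, since $((F^{-1}_\mu(U)),(F^{-1}_\nu(U)))$ is an optimal coupling simultaneously for all pairs. This exhibits $W_2^2$ as the squared $L^2([0,1])$-distance between the images of $\mu,\nu$ under the isometric embedding $\mu\mapsto F^{-1}_\mu$, and squared distances in a Hilbert space are negative definite kernels (expand $\|a-b\|^2=\|a\|^2+\|b\|^2-2\langle a,b\rangle$ and use $\sum c_i=0$ to kill the $\|a\|^2,\|b\|^2$ terms, leaving $-2\|\sum c_i a_i\|^2\le0$). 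Note $W_2^2(\mu,\mu)=0$, so the normalization condition for negative definiteness holds. Next, for $0<H\le1$, I would invoke the standard fact that if $K$ is negative definite and nonnegative then $K^{H}$ is negative definite for all $H\in(0,1]$ (this is the subordination/Bernstein-function argument: $t\mapsto t^H$ is a Bernstein function, and composing a Bernstein function with a nonnegative negative definite kernel yields a negative definite kernel; alternatively one uses the integral representation $t^H=c_H\int_0^\infty (1-e^{-\lambda t})\lambda^{-1-H}\,d\lambda$). Hence $W_2^{2H}$ is negative definite on $\W$.

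The main step is then the passage from $W_2^{2H}$ negative definite to $F(W_2^{2H})$ positive definite. Here I would use the characterization (due to Schoenberg) that a function $F:[0,\infty)\to\mathbb{R}$ satisfies: $F\circ N$ is a positive definite kernel for every negative definite kernel $N\ge 0$, if and only if $F$ is completely monotone. The direction we need — completely monotone $\Rightarrow$ $F\circ N$ positive definite — follows from Bernstein's theorem: write $F(t)=\int_0^\infty e^{-\lambda t}\,d\rho(\lambda)$ for a nonnegative measure $\rho$, so that $F(W_2^{2H}(\mu,\nu))=\int_0^\infty e^{-\lambda W_2^{2H}(\mu,\nu)}\,d\rho(\lambda)$; since $e^{-\lambda N}$ is positive definite whenever $N$ is negative definite and $\lambda\ge0$ (another classical Schoenberg lemma), and positive definiteness is preserved under nonnegative mixtures, we conclude $F(W_2^{2H})$ is positive definite. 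I expect the only genuine subtlety to be handling the boundary/degenerate cases of Bernstein's theorem (e.g.\ a constant $F$, or mass of $\rho$ at $\lambda=0$), which are harmless, and making sure the classical lemmas are cited rather than reproved; the substantive mathematical content is entirely in the optimal-coupling identity for $W_2^2$.

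Finally, for stationarity: a Gaussian process with constant mean and covariance $(\mu,\nu)\mapsto F(W_2^{2H}(\mu,\nu))$ has a covariance that depends only on $W_2(\mu,\nu)$, hence is invariant under any isometry $g$ of the Wasserstein space by definition of an isometry, so $\Cov(X(g(\mu)),X(g(\nu)))=\Cov(X(\mu),X(\nu))$. As noted in Section~\ref{section:generalities}, this is exactly the stationarity condition~\eqref{eq:stationary}, which completes the proof.
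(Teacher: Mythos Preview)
Your proposal is correct and follows essentially the same route as the paper: establish that $W_2^{2H}$ is negative definite via the optimal-coupling embedding into $L^2([0,1])$ plus subordination (the paper's Theorem~\ref{thm:negative_definite} and Lemma~\ref{lem:subordination}), then apply the Schoenberg correspondence between completely monotone functions and negative definite kernels, and finally observe that a covariance depending only on $W_2$ is automatically isometry-invariant. The only cosmetic difference is that you unpack Schoenberg's theorem via Bernstein's integral representation and mixtures of $e^{-\lambda N}$, whereas the paper simply quotes the theorem from \cite{berg_al}.
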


We recall that a $\mathcal{C}^\infty$ function $F:\mathbb{R}^+ \rightarrow \mathbb{R}^+$ is said to be \emph{completely monotone} if for every $n \in \mathbb{N}$ and $x \in \mathbb{R}^+$,
$$(-1)^n F^{(n)}(x) \geq 0.$$ Here $F^{(n)}$ denotes the derivative of order $n$ of $F$.
The prototype of a completely monotone fuction is $x\mapsto e^{-\lambda x}$, for any positive $\lambda$. Furthermore $F$ is completely monotone if and only if it is the Laplace transform of a positive measure $\mu_F$ with finite mass on $\mathbb{R}^+$, that is to say
$$F(x)=\int_{\mathbb{R}^+} e^{-\lambda x} d\mu_F(\lambda). $$
Other examples of completely monotone functions include $x^{-\lambda}$ for positive values of $\lambda$ and $\log\left(1+\frac{1}{x}\right)$.

\begin{example} \label{example:stationary:kernels} Applying theorem \ref{thm:stationary_valid_kernels} with the completely monotone functions $e^{-\lambda x}$ we obtain the stationary covariance kernels 
	\begin{equation} \label{eq:exp_kernels} e^{-\lambda W_2^{2H}(\mu,\nu)},\end{equation} for every $\lambda>0$ and $0 < H \leq 1$.
	
	These kernels are generalizations to distribution inputs of the kernels of the form $e^{-\lambda \| x-y\|^{2H}}$ on $\mathbb{R}^d$, which are classical in spatial statistics and machine learning. In particular setting $H=1/2$ gives the family of  \emph{Laplace kernels}, and $H=1$ the family of \emph{Gaussian kernels}.
	
\end{example}

At this point we have obtained enough  covariance functions to consider parametric models that fit practical datasets. Section \ref{section:modeling} addresses the question of the selection of the best covariance kernel amongst a parametric family of stationary kernels, together with the prediction of the associated Gaussian process. In Section \ref{section:simulation} we carry out simulations with the following parametric model, which is directly derived from \eqref{eq:exp_kernels}:

\begin{equation}\label{eq:exponential_model}\left\{K_{\sigma^2,\ell,H}=\sigma^2 e^{-\frac{W_2^{2H}}{\ell}}, ~ (\sigma^2,\ell,H) \in C\times C' \times [0,1] \right\}, \end{equation}
where $C,C' \subset (0,\infty)$ are two compact sets.

\subsection{Ideas of proof} \label{subsec:ideas}

Theorems \ref{thm:fractional_brownian_kernels} and \ref{thm:stationary_valid_kernels} are direct corollaries of the following result:

\begin{theorem} \label{thm:negative_definite} The function $W_2^{2H}$ is a negative definite kernel if and only if $0\leq H\leq 1$. Furthermore, it is nondegenerate if and only if $0 <H <1$.
\end{theorem}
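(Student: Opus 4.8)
The plan is to prove Theorem~\ref{thm:negative_definite} by reducing the statement about $W_2^{2H}$ on $\W$ to a known statement about powers of an $L^2$-type distance, using the isometric embedding of the Wasserstein space $\W$ into $L^2([0,1])$ via quantile functions. Recall from Section~\ref{section:generalities} that for $\mu,\nu\in\W$ the pair $(F^{-1}_\mu(U),F^{-1}_\nu(U))$ is an optimal coupling, so that
\begin{equation}\label{eq:embedding}
W_2(\mu,\nu)=\left(\int_0^1 |F^{-1}_\mu(t)-F^{-1}_\nu(t)|^2\,dt\right)^{1/2}=\|F^{-1}_\mu-F^{-1}_\nu\|_{L^2([0,1])}.
\end{equation}
Thus $\mu\mapsto F^{-1}_\mu$ is an isometry from $\W$ onto a subset $Q\subset L^2([0,1])$ (namely the set of nondecreasing left-continuous functions with finite $L^2$ norm), and $W_2^{2H}(\mu,\nu)=\|F^{-1}_\mu-F^{-1}_\nu\|^{2H}$. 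So it suffices to understand when $(x,y)\mapsto\|x-y\|_{\mathcal H}^{2H}$ is negative definite on a (convex) subset of a Hilbert space $\mathcal H$, and when it is nondegenerate on $Q$.

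For the "if" direction and negative definiteness, I would invoke the classical Schoenberg theory: on any Hilbert space $\mathcal H$, the squared norm $\|x-y\|^2$ is negative definite (it is the variogram of a Brownian-type field, or directly: $\sum_{i,j}c_ic_j\|x_i-x_j\|^2=-2\|\sum_i c_i x_i\|^2\le 0$ when $\sum c_i=0$), and for $0<H\le 1$ the function $t\mapsto t^H$ is a Bernstein function, hence $t\mapsto t^H$ composed with a negative definite kernel vanishing on the diagonal is again negative definite (Schoenberg / Bernstein subordination). Applying this with $t=\|x-y\|^2$ gives that $\|x-y\|^{2H}$ is negative definite on $\mathcal H$, a fortiori on $Q\cong\W$; the case $H=0$ is trivial. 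For the "only if" direction, if $H>1$ one exhibits a finite configuration of points in $\W$ (it suffices to take Dirac masses $\delta_{x_1},\dots,\delta_{x_n}$, for which $W_2(\delta_{x_i},\delta_{x_j})=|x_i-x_j|$, reducing to the real line) violating \eqref{eq:ineq_negative_definite}: e.g.\ three equally spaced points on $\mathbb{R}$ with weights $c=(1,-2,1)$ give $\sum c_ic_j|x_i-x_j|^{2H}=2\,(2^{2H}-2)>0$ as soon as $H>1$.

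For the nondegeneracy claim, the point is to show that for distinct $\mu_1,\dots,\mu_n\in\W$ and $c_1,\dots,c_n$ not all zero with $\sum c_i=0$, the inequality in \eqref{eq:ineq_negative_definite} is strict when $0<H<1$, and to show it fails (equality can occur) at $H=1$ and $H=0$. The endpoint failures are easy: at $H=0$ the kernel is identically $0$ off the diagonal; at $H=1$, $W_2^2$ restricted to Dirac masses is $|x_i-x_j|^2$, which is degenerate on $\mathbb{R}$ since $\sum_{i,j}c_ic_j|x_i-x_j|^2=-2(\sum_i c_i x_i)^2$ can vanish with the $x_i$ distinct and $c$ nonzero (take $n=2$, $c=(1,-1)$, any $x_1\ne x_2$). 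For $0<H<1$, via the embedding \eqref{eq:embedding} it suffices to prove strict negative definiteness of $\|x-y\|_{\mathcal H}^{2H}$ at distinct points of a Hilbert space. Here I expect the main obstacle: one must rule out the degenerate case in the Lévy–Khinchine / Schoenberg representation. The standard route is to write, for $0<H<1$,
\begin{equation}\label{eq:subordination}
\|x-y\|^{2H}=c_H\int_0^\infty \frac{1-e^{-s\|x-y\|^2}}{s^{1+H}}\,ds
\end{equation}
for an explicit constant $c_H>0$, so that $-\sum_{i,j}c_ic_j\|x_i-x_j\|^{2H}=c_H\int_0^\infty s^{-1-H}\bigl(\sum_{i,j}c_ic_j e^{-s\|x_i-x_j\|^2}\bigr)\,ds$, and then to use that for each fixed $s>0$ the Gaussian kernel $e^{-s\|x-y\|^2}$ is \emph{strictly} positive definite on a Hilbert space (a classical fact; e.g.\ from the fact that $e^{-s\|x-y\|^2}$ is the characteristic-function-type kernel of a nondegenerate Gaussian, or from Schoenberg's theorem that $e^{-s\|\cdot\|^2}$ corresponds to an injective feature map). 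Since the $F^{-1}_{\mu_i}$ are distinct elements of $L^2([0,1])$ whenever the $\mu_i$ are distinct, $\sum_{i,j}c_ic_j e^{-s\|F^{-1}_{\mu_i}-F^{-1}_{\mu_j}\|^2}>0$ for every $s>0$ when $c\ne 0$, and integrating the strictly positive integrand against $s^{-1-H}\,ds$ gives strict positivity. The care needed is exactly in justifying strict positive definiteness of the Gaussian kernel on an infinite-dimensional Hilbert space and in checking the integral representation \eqref{eq:subordination} and the Fubini interchange; everything else is bookkeeping. Finally, Theorems~\ref{thm:fractional_brownian_kernels} and~\ref{thm:stationary_valid_kernels} follow: a kernel of the form $-K$ with $K$ negative definite yields, after the usual symmetrization $\tfrac12(K(\mu_0,\mu)+K(\mu_0,\nu)-K(\mu,\nu))$, a positive definite kernel (and nondegeneracy transfers), while $F(W_2^{2H})$ with $F$ completely monotone is positive definite by Schoenberg's theorem relating completely monotone functions and negative definite kernels, with stationarity immediate since the kernel depends only on $W_2(\mu,\nu)$.
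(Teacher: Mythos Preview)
Your argument for negative definiteness and for the failure when $H>1$ coincides with the paper's: both rest on the embedding $\mu\mapsto F^{-1}_\mu\in L^2([0,1])$, the identity $\sum_{i,j} c_ic_j\|x_i-x_j\|^2=-2\|\sum_i c_ix_i\|^2\le0$, and subordination via $t\mapsto t^H$ (the paper's Lemma~\ref{lem:subordination}); the Dirac-mass counterexample for $H>1$ is also the same.

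For nondegeneracy when $0<H<1$ you take a genuinely different route. The paper argues by contradiction: assuming a degenerate configuration $(\mu_i,c_i)$ for $W_2^{2H}$, it passes to the product space $\W\times\mathbb{R}$ with the distance $d((\mu,s),(\nu,t))=(W_2^2(\mu,\nu)+|s-t|^2)^{1/2}$, splits one of the points into two copies separated by $\varepsilon$ in the $\mathbb{R}$-coordinate (adjusting the coefficients so the zero-sum constraint is preserved), and computes that the resulting conditionally-zero-sum form in $d^{2H}$ equals $\tfrac{c_n^2}{2}\varepsilon^{2H}+o(\varepsilon^{2H})>0$ for small $\varepsilon$, contradicting the negative definiteness of $d^{2H}$ on the product (Lemma~\ref{lem:product_index}). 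Your approach via the representation $\|x-y\|^{2H}=c_H\int_0^\infty s^{-1-H}(1-e^{-s\|x-y\|^2})\,ds$ and strict positive definiteness of the Gaussian kernel is correct and arguably more direct; the clean way to justify the step you flag is to note that the distinct $F^{-1}_{\mu_i}$ lie in a finite-dimensional subspace, where strict positive definiteness of $e^{-s\|\cdot\|^2}$ follows from Bochner's theorem (its Fourier transform is an everywhere-positive Gaussian). The paper's product-space trick, by contrast, avoids any appeal to Gaussian kernels and transfers verbatim to metric spaces without a Hilbert embedding.

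One slip: your degeneracy example at $H=1$ fails for $n=2$. With $c=(1,-1)$ and $x_1\ne x_2$ one has $\sum_ic_ix_i=x_1-x_2\ne0$, so the form is strictly negative, not zero. You need $n\ge 3$; e.g.\ $x_i=i$ and $c=(1,-2,1)$ give $\sum_i c_ix_i=0$ and hence equality.
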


One can find in \cite{KolouriZouRohde} a proof of the negative definiteness of the kernel $W_2^{2H}$ restricted to absolutely continuous distributions in $\mathcal{W}_2(\mathbb{R})$.
The proof given here holds for any distribution of $\mathcal{W}_2(\mathbb{R})$, and we provide the nondegeneracy property of the kernel.

In short (see Appendix \ref{subsec:proofs_kernels} for a detailed proof), we consider $H=1$ and the optimal coupling (see Section \ref{section:generalities} a)
\begin{equation} (Z(\mu))_{\mu \in \W}:=(F^{-1}_{\mu}(U))_{\mu \in \W},\end{equation}
where $F^{-1}_\mu$ is the quantile function of the distribution $\mu$ and $U$ is an uniform random variable on $[0,1]$.
This coupling can be seen as a (non-Gaussian!) random field indexed by $\W$.
As such, its variogram 
\begin{equation} (\mu,\nu) \mapsto \E(Z(\mu)-Z(\nu))^2 \end{equation}
is a negative definite kernel. Furthermore it is equal to $W_2^2(\mu,\nu)$ since the coupling $(Z(\mu))$ is optimal (see \eqref{eq:quadratic_cost}). The proof ends with the use of the following classical lemma:

\begin{lem} \label{lem:subordination} If $K$ is a negative definite kernel then $K^{H}$ is a negative definite kernel for every $0 \leq H\leq1$.
\end{lem}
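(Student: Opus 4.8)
The plan is to combine the subordination identity
$$ a^H \;=\; \frac{H}{\Gamma(1-H)} \int_0^{\infty} \frac{1 - e^{-t a}}{t^{1+H}}\, dt, \qquad a \geq 0,\ 0<H<1, $$
with the classical fact that $e^{-tK}$ is a positive definite kernel for every $t>0$. First I would dispose of the endpoints: $H=1$ is trivial since $K^1=K$, and for $H=0$ the kernel $K^0$ equals $0$ on the diagonal and $1$ elsewhere, which is negative definite because $\sum_{i,j} c_i c_j \mathbf 1_{x_i\neq x_j} = -\sum_i c_i^2 \le 0$ whenever $\sum_i c_i = 0$ (merging any coinciding points first). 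So from now on I fix $0<H<1$.

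Next I would record that $e^{-tK}$ is positive definite for all $t\geq0$. This is a theorem of Schoenberg, but it can also be obtained from the construction already used in this paper: fixing a base point $x_0\in E$, the kernel
$$ \tilde K(x,y) := K(x,x_0) + K(y,x_0) - K(x,y) $$
is positive definite — this is exactly formula \eqref{eq:fbm_kernels} with $K$ in place of $W_2^{2H}$, which is valid as soon as $K$ is negative definite with $K(x,x)=0$ — hence $e^{t\tilde K}$ is positive definite, being a limit of nonnegative combinations of pointwise powers of $\tilde K$, and therefore so is
$$ e^{-tK(x,y)} \;=\; e^{-tK(x,x_0)}\, e^{-tK(y,x_0)}\, e^{t\tilde K(x,y)}, $$
since the first two factors form a rank-one positive definite kernel and the pointwise product of positive definite kernels is positive definite.

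Then I would conclude by a direct computation: given $x_1,\dots,x_n\in E$ and $c_1,\dots,c_n\in\bbR$ with $\sum_i c_i=0$, one has $\sum_{i,j} c_i c_j = 0$, so the subordination identity gives
$$ \sum_{i,j=1}^n c_i c_j\, K(x_i,x_j)^H \;=\; \frac{H}{\Gamma(1-H)}\int_0^\infty \frac{-\sum_{i,j=1}^n c_i c_j\, e^{-tK(x_i,x_j)}}{t^{1+H}}\, dt \;\leq\; 0, $$
where interchanging the finite sum with the integral is legitimate because the integrand is dominated by a constant times $t^{-H}$ near $t=0$ (integrable since $H<1$) and by a constant times $t^{-1-H}$ near $t=\infty$ (integrable since $H>0$), and the inequality holds because $\sum_{i,j} c_i c_j\, e^{-tK(x_i,x_j)}\geq0$ by positive definiteness of $e^{-tK}$. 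Together with $K^H(x,x)=K(x,x)^H=0$, this shows $K^H$ is negative definite.

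There is no genuine obstacle in this argument; the only points requiring a little care are the integrability bounds at the two ends of the integral — which is precisely why the representation is restricted to $0<H<1$, the endpoints being treated separately — and the positive definiteness of $e^{-tK}$, which is either invoked as Schoenberg's classical theorem or proved in a few lines via the kernel construction already appearing in Theorem \ref{thm:fractional_brownian_kernels}.
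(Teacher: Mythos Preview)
Your argument is correct and is essentially the classical proof found in Berg--Christensen--Ressel, which is precisely what the paper does here: it does not give its own proof of this lemma but simply refers the reader to \cite{berg_al}. So there is nothing to compare against; you have supplied the standard argument that the paper only cites.

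One small remark: your treatment of the endpoint $H=0$ tacitly assumes $K(x,y)>0$ whenever $x\neq y$, so that $K^0=\mathbf 1_{x\neq y}$. A general negative definite kernel in the paper's sense may vanish off the diagonal, and then the value of $0^0$ becomes a convention issue (and with the convention $0^0=1$ the diagonal condition $K^0(x,x)=0$ even fails). This is a definitional wrinkle in the statement rather than a flaw in your proof, and it does not arise in the paper's intended application since $W_2(\mu,\nu)=0$ only when $\mu=\nu$; the substantive range $0<H\le 1$ is handled cleanly by your subordination argument.
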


See  $e.g.$ \cite{berg_al} for a proof  Lemma \ref{lem:subordination}.

\begin{rmq} In \cite{istas2011manifold}, Istas defines the fractional index of a metric space $E$ endowed with a distance $d$ by \begin{equation} \beta_E:=\sup \left\{ \beta>0 ~ | ~ d^{\beta}~\text{is negative definite}\right\}.\end{equation}
	
	 One of the interpretation of the fractional index is that $\beta_E/2$ it is the maximal regularity for a fractional Brownian motion indexed by $(E,d)$: indeed the $H$-fractional Brownian motion indexed by a metric space exists if and only if $H\leq \beta_E /2$. For instance, the fractional exponent of the Euclidean spaces $\mathbb{R}^n$ is equal to $2$, while the fractional index of the spheres $\mathbb{S}^n$ is only $1$. Recall that an $H$-fractional Brownian motion has more regular paths and exhibits long-distance correlation for large values of $H$. In a non-rigorous way, the fractional index can be seen as some measure of the difficulty to construct long-distance correlated random fields indexed by the space $(E,d)$.
	 
	 It is in general a difficult problem to find the fractional index of a given space. Theorem \ref{thm:negative_definite} states that the fractional exponent $\beta_{\W}$ of the Wasserstein space is equal to $2$.
\end{rmq}

%\subsection{Nondegeneracy results} \label{subsec:nondegeneracy}
\section{\textbf{Model selection and Gaussian process regression}} \label{section:modeling}

\subsection{Maximum Likelihood and prediction}

Let us consider a Gaussian process $Y$ indexed by $\mathcal{W}_2(\mathbb{R})$, with zero mean function and unknown covariance function $K_0$. Most classically, it is assumed that the covariance function $K_0$ belongs to a parametric set of the form
\begin{equation} \label{eq:parametric_model_modeling}
\{ K_{\theta} ; \theta \in \Theta \},
\end{equation}
with $\Theta \subset \mathbb{R}^p$ and where $K_{\theta}$ is a  covariance function and $\theta$ is called the covariance parameter.
Hence we have $K_0=K_{\theta_0}$ for some true parameter $\theta_0 \in \Theta$.

For instance, considering the fractional Brownian motion kernel given in \eqref{eq:fbm_kernels}, we can have $\theta = (\sigma^2,H)$, $\Theta = (0,\infty) \times (0,1]$ and $K_{\theta} = \sigma^2 K^{H,\eta}$, where $\eta$ is fixed in $\W$. In this case, the covariance parameters are the order of magnitude parameter $\sigma^2$ and the regularity parameter $H$.

Typically, the covariance parameter $\theta$ is selected from a data set of the form $(\mu_i,y_i)_{i=1,...,n}$, with $y_i = Y(\mu_i)$. Several techniques have been proposed for constructing an estimator $\hat{\theta} = \hat{\theta}(\mu_1,y_1,...,\mu_n,y_n)$, in particular maximum likelihood (see e.g. \cite{stein99interpolation}) and cross validation \cite{bachoc13cross,bachoc16asymptotic,zhang10kriging}.
In this paper, we shall focus on maximum likelihood, which is widely used in practice and has received a lot of theoretical attention.

Maximum Likelihood is based on maximizing the Gaussian likelihood of the vector of observations $(y_1,...,y_n)$. The estimator is $\ML \in \argmin L_\theta$ with
\begin{equation} \label{eq:ML}
L_\theta= \frac{1}{n} \ln (\det R_\theta) +\frac{1}{n}y^t R_\theta^{-1} y,
\end{equation}
where $R_\theta = [ K_{\theta}(\mu_i , \mu_j) ]_{1 \leq i,j \leq n}$

Given the maximum likelihood estimator $\ML$, the value $Y(\mu)$, for any input $\mu \in \mathcal{W}_2(\mathbb{R})$, can be predicted by plugging (see for instance in~\cite{stein99interpolation}) $\ML$ in the conditional expectation (or posterior mean) expression for Gaussian processes. More precisely, $Y(\mu)$ is predicted by $\yc (\mu)$
with 
\begin{equation} \label{eq:pred}
\hat{Y}_\theta(\mu)=r_\theta^t(\mu)R_\theta^{-1} y
\end{equation}
and 
\[
r_\theta(\mu)=\left[\begin{array}{c} K_\theta(\mu,\mu_1)\\ \vdots\\ K_\theta(\mu,\mu_n) \end{array}\right].
\]
Note that $\hat{Y}_\theta(\mu)$ is the conditional expectation of $Y(\mu)$ given $y_1,...,y_n$, when assuming that $Y$ is a centered Gaussian process with covariance function $K_{\theta}$.

\subsection{Asymptotic properties} \label{subsection:asymptotic:properties}

In this section, we aim at showing that some of the asymptotic results of the Gaussian process literature, which hold for Gaussian processes indexed by $\mathbb{R}^d$, can be extended to Gaussian processes indexed by $\W$. To our knowledge, this extension has not been considered before.

For a Gaussian process indexed by $\mathbb{R}^d$, two main asymptotic frameworks are under consideration: fixed-domain and increasing-domain asymptotics \cite{stein99interpolation}. Under increasing-domain asymptotics, as $n \to \infty$, the observation points $x_1,...,x_n \in \mathbb{R}^d$ are so that $\min_{i \neq j} || x_i - x_j ||$ is lower bounded. Under fixed-domain asymptotics, the sequence (or triangular array) of observation points $(x_1,...,x_n)$ becomes dense in a fixed bounded subset of $\mathbb{R}^d$. To be specific, for a Gaussian process indexed by $\mathbb{R}$, a standard increasing-domain framework would be given by $x_i = i$ for $i \in \mathbb{N}$, while a standard fixed-domain framework would be given by, for $n \in \mathbb{N}$, $x_i = i/n$ for $i=1,...,n$. 

Let us now briefly review the existing results for Gaussian processes indexed by $\mathbb{R}^d$.
Typically, under increasing-domain asymptotics, the true covariance parameter $\theta_0$ is estimated consistently by maximum likelihood, with asymptotic normality \cite{MarMar1984,cressie93asymptotic,cressie96asymptotics,
shaby12tapered,bachoc14asymptotic,furrer16asymptotic}. Also, predicting with the estimated covariance parameter $\hat{\theta}$ is asymptotically as good as predicting with $\theta_0$ \cite{bachoc14asymptotic}.

Under fixed-domain asymptotics, there are cases where some components of the true covariance parameter $\theta_0$ can not be consistently estimated  \cite{stein99interpolation,zhang04inconsistent}.
Nevertheless, these components which can not be estimated consistently do not have an asymptotic impact on prediction \cite{AEPRFMCF,BELPUICF,UAOLPRFUISOS}. Some results on prediction with estimated covariance parameters are available in \cite{putter01effect}. Also, asymptotic properties of maximum likelihood estimators are obtained in \cite{Yin1991,Yin1993,CheSimYin2000,ESCMSGRFM,Loh2005}. 

We remark, finally,
that the above increasing-domain asymptotic results hold for fairly general classes of covariance functions, while fixed-domain asymptotic results currently have to be derived for specific covariance functions and on a case-by-case basis. 

For this reason, in this paper, we focus on extending some of the above increasing-domain asymptotic results to Gaussian processes indexed by $\W$. Indeed, this will enable us to obtain a fair amount of generality with respect to the type of covariance functions considered.

We thus extend the contributions of \cite{bachoc14asymptotic} in the case of Gaussian processes with probability distribution inputs. In the rest of the section, we first list and discuss technical conditions for the asymptotic results. Then, we show the consistency and asymptotic normality of maximum likelihood and show that predictions from the maximum likelihood estimator are asymptotically as good as those obtained from the true covariance parameter. In Section \ref{subsection:asymptotics:example}, we study an explicit example, for which all the technical conditions can be satisfied. All the proofs are postponed to the appendix. At the end of Section \ref{subsection:asymptotics:example}, we discuss the novelty of these proofs, compared to those of the literature, and especially those in \cite{bachoc14asymptotic}.

The technical conditions for this section are listed below.

\begin{condition} \label{cond:asymptotics:un}
We consider a triangular array of observation points $\{\mu_1,...,\mu_n\} = \{\mu_1^{(n)},...,\mu_n^{(n)}\}$
so that for all $n \in \mathbb{N}$ and $1 \leq i \leq n$, $\mu_i$ has support in $[i,i+L]$ with a fixed $L <\infty$. 
\end{condition}
\begin{condition} \label{cond:asymptotics:deux}
The model of covariance functions $\{K_\theta, \theta \in \Theta \}$ satisfies
			$$\forall \theta \in \Theta, ~ K_\theta(\mu,\nu) = F_\theta \left(W_2(\mu, \nu) \right), $$
with $F_{\theta}: \mathbb{R}^+ \to \mathbb{R}$ 
and
$$\sup_{\theta \in \Theta} \left|F_\theta(t) \right| \leq \frac{A}{1+|t|^{1+\tau}} $$
with a fixed $A< \infty$, $\tau >1$.
\end{condition}
\begin{condition} \label{cond:asymptotics:trois}
We have observations $y_i=Y(\mu_i)$, $i=1,\cdots,n$ of the centered Gaussian process $Y$ with covariance function $K_{\theta_0}$ for some $\theta_0\in \Theta$.
\end{condition}
\begin{condition} \label{cond:asymptotics:quatre}
The sequence of matrices $R_\theta=\left(K_\theta (\mu_i, \mu_j) \right)_{1\leq i,j \leq n}$ satisfies
		$$\lambda_{\inf}(R_\theta) \geq c $$ for a fixed $c>0$, where $\lambda_{\inf}(R_\theta)$ denotes the smallest eigenvalue of $R_\theta$.
\end{condition}
\begin{condition} \label{cond:asymptotics:cinq}
$\forall \alpha >0$, $$\liminf \limits_{n\rightarrow \infty} \inf\limits_{\|\theta-\theta_0 \| \geq \alpha} \frac{1}{n} \sum_{i,j=1}^n \left[K_\theta(\mu_i,\mu_j)-K_{\theta_0}(\mu_i,\mu_j) \right]^2 >0.$$
\end{condition}
\begin{condition} \label{cond:asymptotics:six}
$\forall t \geq 0$, $F_\theta(t)$ is continuously differentiable with respect to $\theta$ and we have
		$$\sup_{\theta\in\Theta} \max_{i=1,\cdots,p} \left| \frac{\partial}{\partial \theta_i}F_\theta (t) \right| \leq \frac{A}{1+t^{1+\tau}}, $$ with $A,\tau$ as in Condition \ref{cond:asymptotics:deux}.
\end{condition}
\begin{condition} \label{cond:asymptotics:sept}
$\forall t \geq 0$, $F_\theta(t)$ is three times continuously differentiable with respect to $\theta$ and we have,
for $q\in \{2,3\}$, $i_1\cdots i_q \in \{1,\cdots p\}$,
			$$\sup_{\theta\in\Theta}   \left| \frac{\partial}{\partial \theta_{i_1}} \cdots \frac{\partial}{\partial \theta_{i_q}} F_\theta (t) \right| \leq \frac{A}{1+t^{1+\tau}}. $$
\end{condition}
\begin{condition} \label{cond:asymptotics:huit}
$\forall (\lambda_1\cdots, \lambda_p) \neq (0,\cdots,0)$,
		$$\liminf\limits_{n\rightarrow \infty} \frac{1}{n} \sum_{i,j=1}^n
		 \left( \sum_{k=1}^p \lambda_k \frac{\partial}{\partial_{\theta_k}} K_{\theta_0} \left(\mu_i,\mu_j \right) \right)^2>0.$$
\end{condition}

Condition \ref{cond:asymptotics:un} mimics the increasing-domain asymptotic framework discussed above for vectorial inputs. In particular, the observation measures $\mu_i$ and $\mu_j$ yield a large Wasserstein distance when $|i - j|$ is large.

Condition \ref{cond:asymptotics:deux} entails that all the covariance functions under consideration are stationary in the sense that the covariance between $\mu$ and $\nu$ depends only on the distance $W_2(\mu,\nu)$. Stationarity is also assumed when considering increasing-domain asymptotics for Gaussian processes indexed by $\mathbb{R}^d$ \cite{MarMar1984,cressie93asymptotic,cressie96asymptotics,
shaby12tapered,bachoc14asymptotic,furrer16asymptotic}.
Hence, we remark that the asymptotic results of the present section do not apply to the covariance functions of fractional Brownian motion in \eqref{eq:fbm_kernels}. On the other hand, these results apply to the power exponential covariance functions in \eqref{eq:exponential_model}.

Condition \ref{cond:asymptotics:deux} also imposes that the covariance functions in the parametric model decrease fast enough with the Wasserstein distance. This condition is standard in the case of vector inputs, and holds for instance for the covariance functions in \eqref{eq:exponential_model}.

Condition \ref{cond:asymptotics:trois} means that we address the well-specified case \cite{bachoc13cross,bachoc16asymptotic}, where there is a true covariance parameter $\theta_0$ to estimate.

Condition \ref{cond:asymptotics:quatre} is technically necessary for the proof techniques of this paper. This condition holds whenever the covariance model satisfies, for all $\theta \in \Theta, w \geq 0$, $F_{\theta}(w) = \bar{F}_{\theta}(w) + \delta_{\theta} \mathbf{1}_{ \{ w=0 \} }$, where $\bar{F}_{\theta}$ is a continuous covariance function and where $\inf_{\theta \in \Theta}\delta_{\theta} > 0$. This situation corresponds to Gaussian processes observed with Gaussian measure errors, or to Gaussian processes with very small scale irregularities, and is thus representative of a significant range of practical applications. 

In the case where $F_{\theta}$ is continuous (which usually means that we have exact observations of a Gaussian process with continuous realizations), then Condition \ref{cond:asymptotics:quatre} implies that
\begin{equation} \label{eq:bounded:away:zero:distances}
\inf_{n \in \mathbb{N},i \neq j =1,...,n} W_2( \mu_i , \mu_j ) >0.
\end{equation}
For a large class of Gaussian processes indexed by $\mathbb{R}^d$, it has been shown that the condition in \eqref{eq:bounded:away:zero:distances} (with $W_2$ replaced by the Euclidean distance) is also sufficient for Condition \ref{cond:asymptotics:quatre} \cite{bachoc14asymptotic,bachoc2016smallest}. The proof relies on the Fourier transform on $\mathbb{R}^d$. For Gaussian processes indexed by $\W$, one could expect the condition in \eqref{eq:bounded:away:zero:distances} to be sufficient to guarantee \ref{cond:asymptotics:quatre} in many cases, although, to our knowledge, obtaining rigorous proofs in this direction is an open problem.

Condition \ref{cond:asymptotics:cinq} means that there is enough information in the triangular array $\{\mu_1,...,\mu_n\}$ to differentiate between the covariance functions $K_{\theta_0}$ and $K_{\theta}$, when $\theta$ is bounded away from $\theta_0$. We believe that Condition \ref{cond:asymptotics:cinq} can be checked for specific explicit instances of the triangular array $\{\mu_1,...,\mu_n\}$, as it involves an explicit sum of covariance values. 

Conditions \ref{cond:asymptotics:six} and \ref{cond:asymptotics:sept} are standard regularity and asymptotic decorrelation conditions for the covariance model. They hold, in particular, for the power exponential covariance model of \eqref{eq:exponential_model}.

Finally, Condition \ref{cond:asymptotics:huit} is interpreted as an asymptotic local linear independence of the $p$ derivatives of the covariance function, around $\theta_0$. Since this condition involves an explicit sum of covariance function derivatives, we believe that it can be checked for specific instances of the triangular array $\{\mu_1,...,\mu_n\}$.

We now provide the first result of this section, showing that the maximum likelihood estimator is asymptotically consistent.

	\begin{theorem} \label{theorem:consistency} 
	Let $\ML$ be as in \eqref{eq:ML}. Under Conditions \ref{cond:asymptotics:un} to \ref{cond:asymptotics:cinq}, we have as $n\rightarrow \infty$		
		$$\ML \overset{\bbP}{\longrightarrow} \theta_0. $$
	\end{theorem}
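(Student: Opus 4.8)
The plan is to follow the classical M-estimation scheme for maximum likelihood of Gaussian processes under increasing-domain asymptotics, adapting the arguments of \cite{bachoc14asymptotic} to the Wasserstein-indexed setting. The key observation that makes the $\mathbb{R}^d$ machinery transfer is Condition~\ref{cond:asymptotics:un}: since $\mu_i$ has support in $[i,i+L]$, one has $W_2(\mu_i,\mu_j) \geq |i-j| - L$, so for $|i-j|$ large the Wasserstein distance grows linearly in $|i-j|$, and by Condition~\ref{cond:asymptotics:deux} the covariance entries satisfy $|K_\theta(\mu_i,\mu_j)| = |F_\theta(W_2(\mu_i,\mu_j))| \leq A/(1+|i-j-L|^{1+\tau})$ whenever $|i-j| > L$. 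Thus the matrices $R_\theta$ have rows whose off-diagonal mass is summable uniformly in $n$ and $\theta$; combined with the uniform lower bound $\lambda_{\inf}(R_\theta)\geq c$ from Condition~\ref{cond:asymptotics:quatre}, we get that $R_\theta$ and $R_\theta^{-1}$ have operator norms bounded uniformly in $n$ and $\theta$. This is exactly the input needed to control quadratic forms $y^t M y$ where $M$ is built from the $R_\theta$'s and their derivatives.

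First I would introduce the (random) criterion $L_\theta$ of \eqref{eq:ML} and its deterministic proxy
\[
D_\theta = \frac{1}{n}\ln(\det R_\theta) + \frac{1}{n}\operatorname{\Tr}(R_\theta^{-1} R_{\theta_0}),
\]
which is $\E_{\theta_0}[L_\theta]$. The first step is to show $\sup_{\theta\in\Theta}|L_\theta - D_\theta| \overset{\bbP}{\to} 0$. Writing $y = R_{\theta_0}^{1/2} z$ with $z\sim\mathcal N(0,I_n)$, we have $y^t R_\theta^{-1} y - \operatorname{\Tr}(R_\theta^{-1}R_{\theta_0}) = z^t(R_{\theta_0}^{1/2}R_\theta^{-1}R_{\theta_0}^{1/2} - \E[\cdot])z$, a centered quadratic form whose variance is $\tfrac{2}{n^2}\operatorname{\Tr}[(R_{\theta_0}^{1/2}R_\theta^{-1}R_{\theta_0}^{1/2})^2]$; the uniform operator-norm bounds make this $O(1/n)$, so for each fixed $\theta$ the difference vanishes in probability. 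Uniformity over $\Theta$ then follows from a standard equicontinuity argument: using Conditions~\ref{cond:asymptotics:six} (differentiability of $F_\theta$ in $\theta$ with the same decay) one bounds $\partial_\theta L_\theta$ and $\partial_\theta D_\theta$ uniformly, so $\{L_\theta - D_\theta\}_n$ is stochastically equicontinuous, and pointwise convergence upgrades to uniform convergence over the compact $\Theta$.

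The second step is the identifiability / well-separation argument: I would show that $\liminf_{n}\inf_{\|\theta-\theta_0\|\geq\alpha}(D_\theta - D_{\theta_0}) > 0$ for every $\alpha>0$. Using $\ln\det R_\theta - \ln\det R_{\theta_0} = \ln\det(R_{\theta_0}^{-1}R_\theta)$ and $\operatorname{\Tr}(R_\theta^{-1}R_{\theta_0}) = \operatorname{\Tr}(R_\theta^{-1}R_{\theta_0}) $, one gets $D_\theta - D_{\theta_0} = \tfrac1n\sum_i[-\ln \nu_i - (1-\nu_i)]$ where $\nu_i$ are the eigenvalues of $R_{\theta_0}^{1/2}R_\theta^{-1}R_{\theta_0}^{1/2}$; since $-\ln x - 1 + x \geq 0$ with a quadratic lower bound $\geq \kappa (x-1)^2$ on any compact interval bounded away from $0$ and $\infty$ (which $\{\nu_i\}$ inhabit, by the operator-norm bounds), we obtain $D_\theta - D_{\theta_0} \geq \tfrac{\kappa}{n}\operatorname{\Tr}[(R_{\theta_0}^{-1/2}(R_\theta - R_{\theta_0})R_{\theta_0}^{-1/2})^2] \geq \kappa' \tfrac1n\|R_\theta - R_{\theta_0}\|_F^2 = \kappa'\tfrac1n\sum_{i,j}[K_\theta(\mu_i,\mu_j)-K_{\theta_0}(\mu_i,\mu_j)]^2$. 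Condition~\ref{cond:asymptotics:cinq} then gives the strict positive $\liminf$. Finally, combining the two steps: $L_{\ML}\leq L_{\theta_0}$ by definition of the minimizer, while uniform convergence gives $L_{\ML} = D_{\ML} + o_{\bbP}(1)$ and $L_{\theta_0} = D_{\theta_0} + o_{\bbP}(1)$, so $D_{\ML} - D_{\theta_0}\leq o_{\bbP}(1)$; the well-separation bound then forces $\P(\|\ML - \theta_0\|\geq\alpha)\to 0$ for every $\alpha$, which is the claim.

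The main obstacle I anticipate is not any single estimate but making the uniform-in-$\theta$ control fully rigorous: specifically, verifying that the operator-norm bounds on $R_\theta^{-1}$ and the decay of $R_\theta$ combine to give the $O(1/n)$ variance bound \emph{uniformly} over $\Theta$, and then propagating this through the equicontinuity argument. The linear-growth lower bound $W_2(\mu_i,\mu_j)\geq|i-j|-L$ is the crucial geometric fact that replaces the use of the Fourier transform on $\mathbb{R}^d$; once it is in hand, the Gaussian concentration of quadratic forms and the convexity inequality $-\ln x - 1 + x \geq 0$ do the rest, and no new probabilistic input beyond \cite{bachoc14asymptotic} is needed.
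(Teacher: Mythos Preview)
Your proposal is correct and follows essentially the same route as the paper's proof: the key geometric input $W_2(\mu_i,\mu_j)\geq |i-j|-L$ (used in the paper's Lemma~\ref{lem:row_sum}) gives uniform eigenvalue bounds on $R_\theta$ and $\partial_\theta R_\theta$, from which one obtains $\sup_\theta|L_\theta-\E L_\theta|=o_\bbP(1)$ and the lower bound $\E L_\theta-\E L_{\theta_0}\geq a\,\tfrac1n\|R_\theta-R_{\theta_0}\|_F^2$, and Condition~\ref{cond:asymptotics:cinq} finishes the argument exactly as you describe. One small remark: you explicitly invoke Condition~\ref{cond:asymptotics:six} for the equicontinuity step, while the theorem is stated under Conditions~\ref{cond:asymptotics:un}--\ref{cond:asymptotics:cinq} only; the paper's proof does the same thing (it uses the derivative bounds of Lemma~\ref{lem:largest_eigenvalue}), so this is a shared minor looseness rather than a gap in your argument.
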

	
	In the next theorem, we show that the maximum likelihood estimator is asymptotically Gaussian. In addition, the rate of convergence is $\sqrt{n}$, and the asymptotic covariance matrix $M_{ML}^{-1}$ of $\sqrt{n}( \ML - \theta_0 )$ (that may depend on $n$) is asymptotically bounded and invertible, see \eqref{eq:asymptotic:cov:mat:bounded}.

	\begin{theorem} \label{theorem:TCL} Let $M_{ML}$ be the $p\times p$ matrix defined by
		$$(M_{ML})_{i,j}= \frac{1}{2n} \Tr \left( R_{\theta_0}^{-1} \frac{\partial R_{\theta_0}}{\partial \theta_i} R_{\theta_0}^{-1} \frac{\partial R_{\theta_0}}{\partial \theta_j}\right),$$
		with $R_{\theta}$ as in \eqref{eq:ML}.
		Under Conditions \ref{cond:asymptotics:un} to
\ref{cond:asymptotics:huit}	we have
		$$\sqrt{n} M_{ML}^{1/2} \left(\ML - \theta_0\right) {\overset{\mathcal{L}}{\underset{n \rightarrow \infty}{\longrightarrow}}} \mathcal{N}(0,I_n). $$
		Furthermore,
		\begin{equation} \label{eq:asymptotic:cov:mat:bounded}
		0 < \liminf_{n \to \infty} \lambda_{min} ( M_{ML} )
		\leq \limsup_{n \to \infty} \lambda_{max} ( M_{ML} ) < + \infty.
	\end{equation}
	\end{theorem}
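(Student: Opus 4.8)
The plan is to run the classical $M$-estimation scheme. Since $\ML \overset{\bbP}{\to}\theta_0$ by Theorem \ref{theorem:consistency} and $\theta_0$ is interior to $\Theta$, with probability tending to one $\ML$ solves $\nabla_\theta L_{\ML}=0$, and on that event a coordinatewise mean-value expansion of $\nabla_\theta L_\theta$ around $\theta_0$ gives $0=\nabla_\theta L_{\theta_0}+H_n(\ML-\theta_0)$, where $H_n$ is the matrix whose $i$-th row is $\nabla_\theta(\partial_{\theta_i}L_\theta)$ evaluated at some point on the segment joining $\ML$ and $\theta_0$ (possibly different for each row). Hence $\sqrt n\,M_{ML}^{1/2}(\ML-\theta_0)=-M_{ML}^{1/2}H_n^{-1}\sqrt n\,\nabla_\theta L_{\theta_0}$, and the whole statement reduces to three facts: (a) the eigenvalue bounds \eqref{eq:asymptotic:cov:mat:bounded}; (b) $\sqrt n\,M_{ML}^{-1/2}\nabla_\theta L_{\theta_0}\overset{\mathcal L}{\to}\mathcal N(0,4I_p)$; (c) $H_n-2M_{ML}\overset{\bbP}{\to}0$. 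Granting these, (a) lets one invert and compose the relevant matrices with uniformly bounded norms, (a)$+$(c) give $M_{ML}^{1/2}H_n^{-1}=\tfrac12 M_{ML}^{-1/2}+o_{\bbP}(1)$, and Slutsky's lemma combined with (b) yields the claimed $\mathcal N(0,I_p)$ limit (with $I_p$ the $p\times p$ identity).

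The common tool I would build first is a uniform bound on operator norms, and this is where the index space enters through Condition \ref{cond:asymptotics:un}: since $\mu_i,\mu_j$ have supports in $[i,i+L]$ and $[j,j+L]$, every coupling moves mass by at least $\max(0,|i-j|-L)$, so $W_2(\mu_i,\mu_j)\geq\max(0,|i-j|-L)$. Consequently, for any $g$ with $|g(t)|\leq A/(1+t^{1+\tau})$, the symmetric matrix $(g(W_2(\mu_i,\mu_j)))_{i,j}$ has absolute row sums bounded by $\sum_{k\in\bbZ}A/(1+(\max(0,|k|-L))^{1+\tau})<\infty$ (finite since $\tau>1$), so by Schur's test its operator norm is bounded by a constant independent of $n$. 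Applying this to $F_\theta$ and its $\theta$-derivatives up to order three (Conditions \ref{cond:asymptotics:deux}, \ref{cond:asymptotics:six}, \ref{cond:asymptotics:sept}) bounds $\|R_\theta\|_{op}$ and all $\|\partial_{\theta_i}R_\theta\|_{op}$, $\|\partial_{\theta_i}\partial_{\theta_j}R_\theta\|_{op}$, $\|\partial_{\theta_i}\partial_{\theta_j}\partial_{\theta_k}R_\theta\|_{op}$ uniformly in $n$ and $\theta$, while Condition \ref{cond:asymptotics:quatre} gives $\|R_\theta^{-1}\|_{op}\leq 1/c$. Any trace of a product of such matrices is then $O(n)$ via $|\Tr(M_1\cdots M_r)|\leq n\prod_l\|M_l\|_{op}$, which at once gives $\limsup_n\lambda_{\max}(M_{ML})<\infty$ from the defining formula of $M_{ML}$. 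For the lower bound I would fix a unit vector $\lambda\in\bbR^p$, set $D_\lambda=\sum_k\lambda_k\partial_{\theta_k}R_{\theta_0}$, and use the elementary inequality $\Tr(BDBD)\geq\lambda_{\min}(B)^2\Tr(D^2)$ valid for symmetric $B\succ 0$ and symmetric $D$ (expand $B=\lambda_{\min}(B)I+P$ with $P\succeq 0$) to get $\lambda^tM_{ML}\lambda=\tfrac1{2n}\Tr(R_{\theta_0}^{-1}D_\lambda R_{\theta_0}^{-1}D_\lambda)\geq\tfrac1{2n}\lambda_{\max}(R_{\theta_0})^{-2}\|D_\lambda\|_F^2$; since $\|D_\lambda\|_F^2=\sum_{i,j}(\sum_k\lambda_k\partial_{\theta_k}K_{\theta_0}(\mu_i,\mu_j))^2$, Condition \ref{cond:asymptotics:huit} forces this to be $\geq nc'$ eventually, and with the uniform upper bound on $\lambda_{\max}(R_{\theta_0})$ this gives $\liminf_n\lambda_{\min}(M_{ML})>0$, proving (a).

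Point (b) is the heart of the argument, and I expect it to be the main obstacle. Using $\E[yy^t]=R_{\theta_0}$ one rewrites $\sqrt n\,\partial_{\theta_i}L_{\theta_0}=-\sqrt n\,(y^tB^{(i)}_ny-\E\,y^tB^{(i)}_ny)$ with $B^{(i)}_n=\tfrac1n R_{\theta_0}^{-1}\partial_{\theta_i}R_{\theta_0}R_{\theta_0}^{-1}$, a centered quadratic form in the Gaussian vector $y$; the identity $\Cov(y^tAy,y^tA'y)=2\Tr(AR_{\theta_0}A'R_{\theta_0})$ then gives $\Cov(\sqrt n\,\partial_{\theta_i}L_{\theta_0},\sqrt n\,\partial_{\theta_j}L_{\theta_0})=\tfrac2n\Tr(R_{\theta_0}^{-1}\partial_{\theta_i}R_{\theta_0}R_{\theta_0}^{-1}\partial_{\theta_j}R_{\theta_0})=4(M_{ML})_{ij}$, so $\sqrt n\,M_{ML}^{-1/2}\nabla_\theta L_{\theta_0}$ has covariance $4I_p$. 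By the Cram\'er--Wold device it then suffices to show that for every sequence of unit vectors $u_n$, the scalar $u_n^tM_{ML}^{-1/2}\sqrt n\,\nabla_\theta L_{\theta_0}$ — which is, up to sign, a centered quadratic form $\sqrt n\,(y^tC_ny-\E\,y^tC_ny)$ with symmetric $C_n$ satisfying $\Tr((C_nR_{\theta_0})^2)=\Theta(1/n)$ (forced to be of exact order $1/n$ precisely by $\liminf_n\lambda_{\min}(M_{ML})>0$) and $\|C_n\|_{op}=O(1/n)$ — is asymptotically Gaussian. Diagonalizing the symmetric matrix $A_n:=R_{\theta_0}^{1/2}C_nR_{\theta_0}^{1/2}$, this form equals $\sqrt n\sum_k\lambda_{n,k}(\xi_k^2-1)$ for i.i.d.\ standard Gaussians $\xi_k$ and the eigenvalues $\lambda_{n,k}$ of $A_n$, with $\sum_k\lambda_{n,k}^2=\Tr((C_nR_{\theta_0})^2)=\Theta(1/n)$ and $\max_k|\lambda_{n,k}|=\|A_n\|_{op}=O(1/n)$; the Lyapunov ratio $\max_k|\lambda_{n,k}|/(\sum_k\lambda_{n,k}^2)^{1/2}$ is therefore $O(n^{-1/2})\to 0$, so the central limit theorem for weighted sums of $\chi^2_1$ variables applies and (b) follows. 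The reason this is the crux is that verifying this Lyapunov condition requires the matrices entering the score to have operator norm negligible against their Frobenius norm: this is exactly what the structural estimate $W_2(\mu_i,\mu_j)\geq\max(0,|i-j|-L)$ buys us (substituting here for the Fourier-analytic arguments available on $\bbR^d$ in \cite{bachoc14asymptotic}), together with the strict positivity of $\liminf_n\lambda_{\min}(M_{ML})$ which keeps the Frobenius norms from vanishing too fast.

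For (c), finally, I would argue by a law of large numbers for quadratic forms. Each entry $(H_n)_{ij}$ is, at a random point $\theta^{(n)}\overset{\bbP}{\to}\theta_0$ (Theorem \ref{theorem:consistency}), a finite sum of terms $\tfrac1n\Tr(\Pi)$ and $\tfrac1n\,y^t\Pi\,y$ where each $\Pi$ is a product of factors $R_\theta^{-1}$, $\partial_\theta R_\theta$ and $\partial_\theta^2R_\theta$; by the second paragraph these have uniformly bounded operator norm, so $\Var(\tfrac1n y^t\Pi y)=\tfrac2{n^2}\Tr((\Pi R_{\theta_0})^2)=O(1/n)\to 0$ and each random term concentrates on its mean, uniformly for $\theta$ in a fixed neighbourhood of $\theta_0$ (the $n$-independent modulus of continuity of all these trace functionals coming from the derivative bounds in Conditions \ref{cond:asymptotics:six}--\ref{cond:asymptotics:sept}). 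Evaluated at $\theta_0$, the deterministic terms plus the expectations of the random terms sum to $\tfrac1n\Tr(R_{\theta_0}^{-1}\partial_{\theta_i}R_{\theta_0}R_{\theta_0}^{-1}\partial_{\theta_j}R_{\theta_0})=2(M_{ML})_{ij}$ by the standard Gaussian-likelihood identity $\partial_{\theta_i}\partial_{\theta_j}\ln\det R_{\theta_0}+\E\,\partial_{\theta_i}\partial_{\theta_j}(y^tR_{\theta_0}^{-1}y)=\Tr(R_{\theta_0}^{-1}\partial_{\theta_i}R_{\theta_0}R_{\theta_0}^{-1}\partial_{\theta_j}R_{\theta_0})$; transferring this limit from $\theta_0$ to $\theta^{(n)}$ via the uniform continuity and $\theta^{(n)}\overset{\bbP}{\to}\theta_0$ gives (c). Assembling (a), (b), (c) as in the first paragraph then completes the proof.
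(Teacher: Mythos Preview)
Your proof is correct and shares the same M-estimation backbone as the paper's: the Schur/row-sum bound on operator norms via $W_2(\mu_i,\mu_j)\geq\max(0,|i-j|-L)$ (this is the paper's Lemma \ref{lem:row_sum} and \eqref{eq:lowe:bound:distij}), the eigenvalue bounds on $M_{ML}$ via Condition \ref{cond:asymptotics:huit}, the variance-$O(1/n)$ concentration of quadratic-form Hessian entries, and the control of $\sup_\theta|\partial^3 L_\theta|$. The one architectural difference is in how the score CLT is obtained. The paper does \emph{not} prove (b) directly: it argues by contradiction, extracts a subsequence along which $M_{ML}\to M_\infty$ for a fixed positive-definite $M_\infty$, and then invokes Propositions D.9--D.10 of \cite{bachoc14asymptotic} verbatim along that subsequence (those propositions are stated for a fixed limiting Fisher information). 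Your route is instead self-contained: you keep the $n$-dependent $M_{ML}$, normalize the score so its covariance is exactly $4I_p$ for every $n$, and verify a Lyapunov condition for the weighted $\chi^2$ sum using $\|A_n\|_{op}=O(1/n)$ against $\|A_n\|_F^2=\Theta(1/n)$. Both are valid; the paper's device is shorter because it recycles existing lemmas, yours avoids the subsequence extraction and makes the Gaussian-quadratic-form CLT explicit.

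One small imprecision to tighten in (c): the sentence ``each random term concentrates on its mean, uniformly for $\theta$ in a fixed neighbourhood of $\theta_0$'' is not a consequence of the pointwise variance bound $O(1/n)$ alone. What you actually need (and what the paper does) is pointwise concentration at $\theta_0$ together with the stochastic equicontinuity coming from $\sup_\theta\max_{i,j,k}|\partial^3_{\theta_i\theta_j\theta_k}L_\theta|=O_{\bbP}(1)$, which follows from the same operator-norm bounds applied to products involving up to third $\theta$-derivatives of $R_\theta$ (Condition \ref{cond:asymptotics:sept}). Your final clause about ``the $n$-independent modulus of continuity'' points exactly there, so just drop the uniform-concentration wording and make the third-derivative bound the explicit bridge from $\theta_0$ to $\theta^{(n)}$.
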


In the next theorem, we show that, when using the maximum likelihood estimator, the corresponding predictions of the values of $Y$ are asymptotically equal to the predictions using the true covariance parameter $\theta_0$. Note that, in the increasing-domain framework considered here, the mean square prediction error is typically lower-bounded, even when using the true covariance parameter. Indeed, this occurs in the case of Gaussian processes with vector inputs, see Proposition 5.2 in \cite{bachoc14asymptotic}.

\begin{theorem} \label{theorem:input_prediction} Under 
Conditions \ref{cond:asymptotics:un} to \ref{cond:asymptotics:huit} we have

$$\forall \mu \in \W, \ \left| \yc (\mu)-\ycz (\mu) \right|=o_{\bbP}(1),$$
with $\hat{Y}_\theta(\mu)$ as in \eqref{eq:pred}.
\end{theorem}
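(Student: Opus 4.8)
The plan is to show that $\theta \mapsto \hat Y_\theta(\mu)$ varies slowly near $\theta_0$ and then to invoke the consistency of $\ML$. Fix $\mu \in \W$. By Conditions \ref{cond:asymptotics:quatre}, \ref{cond:asymptotics:six} and \ref{cond:asymptotics:sept}, for each $n$ the function $\theta \mapsto \hat Y_\theta(\mu) = r_\theta(\mu)^t R_\theta^{-1} y$ is twice continuously differentiable on $\Theta$. I would fix a closed ball $\mathcal B$ around $\theta_0$ contained in the interior of $\Theta$, observe that $\P(\ML \in \mathcal B) \to 1$ by Theorem \ref{theorem:consistency}, and on that event write, by the mean value theorem,
\[
\bigl| \yc(\mu) - \ycz(\mu) \bigr| \le \| \ML - \theta_0 \| \, \sup_{\theta \in \mathcal B} \| \nabla_\theta \hat Y_\theta(\mu) \| .
\]
Since $\| \ML - \theta_0 \| = o_{\bbP}(1)$ (Theorem \ref{theorem:consistency}), a routine argument reduces everything to proving $\sup_{\theta \in \mathcal B} \| \nabla_\theta \hat Y_\theta(\mu) \| = O_{\bbP}(1)$.

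Next I would establish deterministic operator-norm bounds, uniform in $n$ and in $\theta \in \mathcal B$. Using $W_2(\rho,\rho') \ge | m(\rho) - m(\rho') |$ (Jensen applied to any coupling) and the fact that $m(\mu_i) \in [i, i+L]$ from Condition \ref{cond:asymptotics:un}, one gets $W_2(\mu_i,\mu_j) \ge |i-j| - L$; combining this with the decay bounds of Conditions \ref{cond:asymptotics:deux}, \ref{cond:asymptotics:six} and \ref{cond:asymptotics:sept} makes the rows of $R_\theta$, $\partial_{\theta_k} R_\theta$ and $\partial^2_{\theta_j\theta_k} R_\theta$ absolutely summable uniformly, so by Gershgorin each of these matrices has operator norm $\le C$; Condition \ref{cond:asymptotics:quatre} gives $\| R_\theta^{-1} \|_{\mathrm{op}} \le 1/c$. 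Because $\mu$ is \emph{fixed}, one also has $W_2(\mu,\mu_i) \ge i - |m(\mu)| - L \to \infty$, hence $\| r_\theta(\mu) \|$, $\| \partial_{\theta_k} r_\theta(\mu) \|$ and $\| \partial^2_{\theta_j\theta_k} r_\theta(\mu) \|$ are all bounded by a constant $C_\mu$ independent of $n$. This last bound — a fixed input being effectively correlated with only finitely many of the observations — is the structural point that makes the statement true.

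Then, differentiating, $\partial_{\theta_k}\hat Y_\theta(\mu) = w_{\theta,k}^t y$ with $w_{\theta,k} = R_\theta^{-1}\partial_{\theta_k} r_\theta(\mu) - R_\theta^{-1}(\partial_{\theta_k} R_\theta) R_\theta^{-1} r_\theta(\mu)$, so $\| w_{\theta,k} \| \le C_\mu$ by the previous paragraph. Since $y \sim \mathcal N(0, R_{\theta_0})$ with $\lambda_{\max}(R_{\theta_0}) \le C$ (Condition \ref{cond:asymptotics:trois}), $\partial_{\theta_k}\hat Y_\theta(\mu)$ is a centered Gaussian variable with variance $w_{\theta,k}^t R_{\theta_0} w_{\theta,k} \le C_\mu$, so all its moments are bounded uniformly in $\theta \in \mathcal B$, in $k$ and in $n$. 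The same computation, now also invoking the second-order decay bound of Condition \ref{cond:asymptotics:sept}, yields uniform moment bounds for every second derivative $\partial^2_{\theta_j\theta_k} \hat Y_\theta(\mu)$. A Sobolev embedding on $\mathcal B \subset \bbR^p$ (or, equivalently, a covering/chaining argument using the resulting Lipschitz control of $\nabla_\theta \hat Y_\theta(\mu)$) then upgrades these pointwise-in-$\theta$ bounds to $\E\bigl[ \sup_{\theta \in \mathcal B} \| \nabla_\theta \hat Y_\theta(\mu) \|^q \bigr] \le C_\mu$ for some $q > p$, hence to $\sup_{\theta \in \mathcal B} \| \nabla_\theta \hat Y_\theta(\mu) \| = O_{\bbP}(1)$, which closes the argument.

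I expect the main obstacle to be the $n$-uniform bound $\| r_\theta(\mu) \| \le C_\mu$: this is precisely where the increasing-domain geometry of Condition \ref{cond:asymptotics:un} and the polynomial decay of the covariance in the Wasserstein distance (Condition \ref{cond:asymptotics:deux}) must be combined, and it is what prevents the prediction from being swamped by the accumulation of distant, weakly correlated observations. The passage from pointwise-in-$\theta$ to uniform-in-$\theta$ control is then routine given the second-derivative estimates, provided one checks that no constant secretly depends on $n$. Note finally that only the consistency of $\ML$, and not its $\sqrt n$-rate from Theorem \ref{theorem:TCL}, is needed here, since $\nabla_\theta \hat Y_\theta(\mu)$ turns out to be $O_{\bbP}(1)$ rather than merely $O_{\bbP}(\sqrt n)$.
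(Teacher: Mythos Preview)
Your proposal is correct and follows essentially the same route as the paper: reduce to $\sup_\theta |\partial_{\theta_i}\hat Y_\theta(\mu)| = O_{\bbP}(1)$ via consistency and the mean value theorem, observe that each $\partial_{\theta_i}\hat Y_\theta(\mu)$ is a centered Gaussian in $y$ with variance bounded by combining $\|R_\theta^{-1}\|_{\mathrm{op}}\le 1/c$, the row-sum bounds on $R_\theta$ and its $\theta$-derivatives, and the $n$-uniform bound on $\|r_\theta(\mu)\|$ coming from $W_2(\mu,\mu_i)\to\infty$, and then pass to a uniform-in-$\theta$ bound via a Sobolev embedding using the second $\theta$-derivatives. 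The only cosmetic differences are that the paper works on all of $\Theta$ rather than a ball $\mathcal B$, and derives the separation $W_2(\mu_i,\mu_j)\ge |i-j|-L$ from the quantile representation rather than from your (equally valid) mean-comparison $W_2(\rho,\rho')\ge |m(\rho)-m(\rho')|$.
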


\subsection{An example} \label{subsection:asymptotics:example}

In this section, we provide an explicit example of triangular array of probability measures for which Conditions \ref{cond:asymptotics:cinq} and \ref{cond:asymptotics:huit} are satisfied. We consider random probability measures $(\mu_i)_{i \in \mathbb{N}}$ which are independent and identically distributed (up to support shifts to satisfy condition \ref{cond:asymptotics:un}). We then show that Conditions \ref{cond:asymptotics:cinq} and \ref{cond:asymptotics:huit} are satisfied almost surely. The motivation for studying shifted independent and identically distributed random probability measures is that this this model is simple to describe and can generate a large range of sequences $\{ \mu_1,...,\mu_n \}$.

\begin{prop}  \label{prop:asymptotics:example}
Assume that Conditions \ref{cond:asymptotics:deux}, \ref{cond:asymptotics:six} and \ref{cond:asymptotics:sept} hold.

Assume that for $\theta \neq \theta_0$, $F_{\theta}$ and $F_{\theta_0}$ are not equal everywhere on $\mathbb{R}^+$. Assume that there does not exist $(\lambda_1,...,\lambda_p) \neq (0,...,0)$ so that $\sum_{i=1}^p (\partial / \partial \theta_i) F_{\theta_0}$ is the zero function on $\mathbb{R}^+.$

Let $(Z_i)_{i \in \mathbb{Z}}$ be independent and identically distributed Gaussian processes on $\mathbb{R}$ with continuous trajectories. Assume that $Z_0$ has mean function $0$ and covariance function $C_0$. Assume that $C_0(u,v) = C_0(u',v')$ whenever $v-u = v'-u'$ and let $C_0(u,v) = C_0(u-v)$ for ease of notation. Let $\hat{C}_0(w) = \int_{\mathbb{R}} C_0(t) e^{-\mathrm{i} w t} dt$ with $\mathrm{i}^2 = -1$. Assume that $\hat{C}_0(w) |w|^{2q}$ is bounded away from $0$ and $\infty$ as $|w| \to \infty$, for some fixed $q \in (0,\infty)$.

Let $L >1$ be fixed.
For $i \in \mathbb{Z}$, let $f_i: \mathbb{R}\to \mathbb{R}^+$ be defined by $f_i(t) = \exp(Z_i(t-i))/M_i$ if $t \in [i,i+L]$ and $f_i(t) = 0 $ else, where $M_i = \int_{i}^{i+L}  \exp(Z_i(t-i)) dt$. Let $\mu_i$ be the measure with probability density function $f_i$. Then, almost surely, with the sequence of random probability measures $\{\mu_1,...,\mu_n\}$, Conditions \ref{cond:asymptotics:cinq} and \ref{cond:asymptotics:huit} hold. 
\end{prop}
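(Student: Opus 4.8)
The plan is to reduce both Conditions \ref{cond:asymptotics:cinq} and \ref{cond:asymptotics:huit} to a single kind of statement, then to establish it via an ergodic theorem and a support computation for the laws of the pairwise Wasserstein distances. First I would make the shift structure explicit: letting $\nu_i$ be the probability measure on $[0,L]$ with density proportional to $e^{Z_i}$, so that $\mu_i$ is the pushforward of $\nu_i$ by $x\mapsto x+i$, and recalling that on $\bbR$ an optimal coupling is realized by the quantile functions, the relevant quantities are $D_{ij} := W_2(\mu_i,\mu_j)$ with
\[
D_{ij}^2 = \int_0^1 \bigl( (i-j) + Q_i(t) - Q_j(t) \bigr)^2 \, dt, \qquad Q_i := F^{-1}_{\nu_i} \in [0,L] \text{ nondecreasing}.
\]
In particular $D_{ij}$ is a measurable function of $(Z_i,Z_j)$, and the $(Z_i)_{i\in\bbZ}$ are i.i.d. Both conditions then have the form $\liminf_n \frac1n\sum_{i,j=1}^n h(D_{ij})^2 > 0$: for Condition \ref{cond:asymptotics:cinq}, $h = F_\theta - F_{\theta_0}$ together with an infimum over $\{\theta\in\Theta:\|\theta-\theta_0\|\ge\alpha\}$; for Condition \ref{cond:asymptotics:huit}, $h = g_\lambda := \sum_{k=1}^p \lambda_k\,\partial_{\theta_k}F_{\theta_0}$. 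In both cases $h$ is continuous on $(0,\infty)$ and, by hypothesis, not identically zero on $\bbR^+$.

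Next I would run a law of large numbers across lags. Splitting the double sum by $\ell := i-j$, the diagonal ($\ell=0$) terms contribute exactly $h(0)^2$, while for each fixed $\ell\ge1$ the sequence $\bigl(h(D_{i,i+\ell})^2\bigr)_i$ is a sliding-block functional of the i.i.d.\ sequence $(Z_i)$, hence stationary and ergodic, so Birkhoff's theorem gives $\frac1n\sum_i h(D_{i,i+\ell})^2 \to \E[h(D_{0,\ell})^2]$ almost surely, where $D_{0,\ell}$ denotes the Wasserstein distance between two independent shifted copies (a.s.\ positive for $\ell\ge1$). The decay bounds of Conditions \ref{cond:asymptotics:deux} and \ref{cond:asymptotics:six} and the deterministic estimate $D_{i,i+\ell}\ge \ell-L$ supply a summable (because $\tau>1$) dominating sequence in $\ell$, so a dominated-convergence argument over lags gives $\frac1n\sum_{i,j}h(D_{ij})^2 \to h(0)^2 + 2\sum_{\ell\ge1}\E[h(D_{0,\ell})^2]$ almost surely, and uniformly over $\theta\in\Theta$ when $h = F_\theta - F_{\theta_0}$ (using the uniform Lipschitz-in-$\theta$ control of Condition \ref{cond:asymptotics:six} and compactness of $\Theta$). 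Hence everything reduces to showing that this limit is positive, i.e., since $h$ is not identically zero on $\bbR^+$, that $h$ does not vanish identically on $\bigcup_{\ell\ge1}\operatorname{supp}(D_{0,\ell})$, the union of the supports of the laws of the $D_{0,\ell}$.

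The heart of the matter, and I expect the main obstacle, is the support identity $\bigcup_{\ell\ge1}\operatorname{supp}(D_{0,\ell}) \supseteq (0,\infty)$; this is precisely where the spectral hypothesis on $C_0$ is used. Since $\hat C_0(w)\,|w|^{2q}$ is bounded away from $0$ and $\infty$, the Cameron--Martin space of $Z_0$ contains the smooth functions on $[0,L]$, so the support of the law of $Z_0$ in $C([0,L])$ is the whole space. Therefore, for any $a\in(0,L)$, choosing $z$ to be a large multiple of a smooth bump peaked at $a$ makes $\nu_z$ arbitrarily $W_2$-close to $\delta_a$; since the maps $z\mapsto\nu_z\mapsto F^{-1}_{\nu_z}$ are continuous and $\operatorname{supp}(D_{0,\ell})$ is closed, this support contains every value $|\ell+a_1-a_2|$ with $a_1,a_2\in[0,L]$, that is $\operatorname{supp}(D_{0,\ell}) \supseteq [\max(0,\ell-L),\,\ell+L]$. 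Because $L>1$, consecutive such intervals overlap, and their union over $\ell\ge1$ is $[0,\infty)$. Carrying out this step carefully — identifying the Cameron--Martin space from the spectral decay and propagating the continuity of $z\mapsto W_2(\mu_0,\mu_\ell)$ — is where the real work lies; the bookkeeping over lags and the compactness arguments are comparatively routine.

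Finally I would assemble the conclusion. For Condition \ref{cond:asymptotics:huit}, fix $\lambda\neq0$: by hypothesis $g_\lambda$ is continuous and not identically zero on $\bbR^+$, so either $g_\lambda(0)\neq0$ (and the diagonal terms already force a positive limit) or $\{g_\lambda\neq0\}\cap(0,\infty)$ is a nonempty open set meeting some $\operatorname{supp}(D_{0,\ell_0})$, whence $\E[g_\lambda(D_{0,\ell_0})^2]>0$; either way $\liminf_n\frac1n\sum_{i,j}g_\lambda(D_{ij})^2>0$. To get this simultaneously for all $\lambda$ on a single almost sure event, write $\frac1n\sum_{i,j}g_\lambda(D_{ij})^2 = \lambda^\top A_n\lambda$ with $(A_n)_{k,k'} = \frac1n\sum_{i,j}\partial_{\theta_k}F_{\theta_0}(D_{ij})\,\partial_{\theta_{k'}}F_{\theta_0}(D_{ij})$; by the previous step $A_n$ converges almost surely to a deterministic matrix $A$ satisfying $\lambda^\top A\lambda = g_\lambda(0)^2 + 2\sum_{\ell\ge1}\E[g_\lambda(D_{0,\ell})^2] > 0$ for every $\lambda\neq0$, so $A$ is positive definite and $\liminf_n\lambda^\top A_n\lambda = \lambda^\top A\lambda>0$. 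For Condition \ref{cond:asymptotics:cinq}, fix $\alpha>0$: the uniform convergence above gives $\inf_{\|\theta-\theta_0\|\ge\alpha}\frac1n\sum_{i,j}(F_\theta-F_{\theta_0})(D_{ij})^2 \to \inf_{\|\theta-\theta_0\|\ge\alpha}\psi(\theta)$ with $\psi(\theta) := (F_\theta-F_{\theta_0})(0)^2 + 2\sum_{\ell\ge1}\E[(F_\theta-F_{\theta_0})(D_{0,\ell})^2]$ continuous; the infimum over the compact set $\{\theta\in\Theta:\|\theta-\theta_0\|\ge\alpha\}$ is attained at some $\theta^\ast\neq\theta_0$, and since $F_{\theta^\ast}$ and $F_{\theta_0}$ differ at a point of $\bbR^+$, the argument just used forces $\psi(\theta^\ast)>0$, as required.
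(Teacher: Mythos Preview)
Your proof is correct and follows the same overall architecture as the paper's: reduce both conditions to the strict positivity of a deterministic limit via a law of large numbers, then force that positivity by a support computation for the random Wasserstein distances $D_{0,\ell}$. The technical execution differs in two places. For the law of large numbers, you decompose the double sum by lag and apply Birkhoff's ergodic theorem to each stationary sliding-block sequence $(h(D_{i,i+\ell})^2)_i$, then pass to the sum over $\ell$ by dominated convergence; the paper instead (in its auxiliary Proposition~\ref{prop:for:proof:example}) uses a blocking scheme with blocks of size $m_n\to\infty$, $m_n/n\to0$, and invokes a strong law for triangular arrays of i.i.d.\ blocks due to Hu. Your route is slightly more elementary in that it avoids the array SLLN, at the price of the extra lag-summation step; both need the same uniform-in-$\theta$ Lipschitz control from Condition~\ref{cond:asymptotics:six} to upgrade to uniform convergence. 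For the support step, you identify the Cameron--Martin space of $Z_0$ from the spectral decay and deduce $\operatorname{supp}(Z_0)=C([0,L])$, then approximate Dirac masses to get $\operatorname{supp}(D_{0,\ell})\supseteq[\max(0,\ell-L),\ell+L]$; the paper instead builds explicit smooth bump densities and argues via equivalence of the Gaussian measures of $Z_0$ and $Z_0+\phi$ for smooth $\phi$. These are the same mechanism in different dress (Cameron--Martin shifts are exactly the admissible drifts for equivalence). One genuine bonus of your write-up is the quadratic-form trick $\lambda^\top A_n\lambda\to\lambda^\top A\lambda$ with $A$ positive definite, which handles all $\lambda$ on a single almost-sure event for Condition~\ref{cond:asymptotics:huit}; the paper simply declares that case analogous and omits it.
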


In Proposition \ref{prop:asymptotics:example}, the identifiability assumptions on $\{ F_{\theta} \}$ are very mild, and hold for instance for the power exponential model in \eqref{eq:exponential_model}. 

In Proposition \ref{prop:asymptotics:example}, the random probability measures have probability density functions obtained from exponentials of realizations of Gaussian processes. Hence, these measures have a non-parametric source of randomness, and can take flexible forms. Several standard covariance functions on $\mathbb{R}$ satisfy the conditions in Proposition \ref{prop:asymptotics:example}, in particular the Mat\'ern covariance functions (see e.g. \cite{stein99interpolation}).

We remark that, in the context of Proposition \ref{prop:asymptotics:example}, when $F_{\theta}(w) = \bar{F}_{\theta}(w) + \delta_{\theta} \mathbf{1}_{ \{ w=0 \} }$, with $\bar{F}_{\theta}$ a continuous covariance function and $\inf_{\theta \in \Theta}\delta_{\theta} > 0$, as described when discussing Condition \ref{cond:asymptotics:quatre}, then Conditions \ref{cond:asymptotics:un} to \ref{cond:asymptotics:huit} hold so that Theorems \ref{theorem:consistency}, \ref{theorem:TCL} and \ref{theorem:input_prediction} hold. If however $F_{\theta}$ is continuous, then Condition \ref{cond:asymptotics:quatre} almost surely does not hold since $L>1$ (as there will almost surely be pairs of distributions $\mu_i,\mu_i$, $i \neq j$, with arbitrarily small $W_2(\mu_i,\mu_j)$). Nevertheless, when $L<1$, it can be shown that Proposition \ref{prop:asymptotics:example} still holds when, in the conditions of this proposition on $\{ F_{\theta} \}$, $\mathbb{R}^+$ is replaced by $\cup_{i = 1 }^{\infty} [i-L,i+L]$. Also, as discussed above, when $L<1$, the condition in \eqref{eq:bounded:away:zero:distances} is satisfied and one could expect Condition \ref{cond:asymptotics:quatre} to hold.

We conclude this section by discussing the corresponding proofs (in the appendix). These proofs can be divided into two groups. In the first group (proofs of Theorems \ref{theorem:consistency}, \ref{theorem:TCL} and \ref{theorem:input_prediction} and of Proposition \ref{prop:for:proof:example}) we show that the arguments in \cite{bachoc14asymptotic} can be adapted and extended to the setting of the present article. The main innovations in this first group compared to \cite{bachoc14asymptotic} are that we allow for triangular arrays of observation points, and are not restricted to the specific structure of observation points of \cite{bachoc14asymptotic}. 

The proofs of the second group (proofs of Lemma \ref{lem:row_sum} and Proposition \ref{prop:asymptotics:example}) are specific to Gaussian processes with distribution inputs and are thus original for the most part. In particular, in the proof of Proposition \ref{prop:asymptotics:example}, we show that, for two measures obtained by taking exponentials of Gaussian processes, the corresponding random Wasserstein distance
has maximal distribution support.
In this aim, we use equivalence of Gaussian measure tools and specific technical manipulations of the Wasserstein distance.

\section{\textbf{Simulation study}} \label{section:simulation}

We now compare the Gaussian process model suggested in the present paper, with various models for predicting scalar outputs corresponding to distributional inputs. Among the covariance functions introduced in this paper, we shall focus on the power-exponential model \eqref{eq:exponential_model}, since its covariance functions are stationary with respect to the Wasserstein distance. We will not consider the fractional Brownian motion model \eqref{eq:fbm_kernels}, since it imposes to choose a ``zero distribution'', from which the variance increases with the distance. While this feature is relevant in some applications (for instance in finance), it is not natural in the simulation examples adressed here.

\subsection{Comparison with projection-based covariance functions} \label{subsection:comparison:other:cov:functions}

In this section, we focus on Gaussian process models for prediction. We compare the covariance functions \eqref{eq:exponential_model} of this paper, operating directly on the input probability distributions, to more classical covariance functions operating on projections of these probability measures on finite dimensional spaces.

\subsubsection{Overview of the simulation procedure}

We address the input-output map given by, for a distribution $\nu$ on $\mathbb{R}$, 
\[
F(\nu) = \frac{m_1(\nu)}{0.05 + \sqrt{ m_2(\nu) - m_1(\nu)^2 }},
\]
where $m_k(\nu) = \int_{\mathbb{R}} x^k d \nu(x)$.

We first simulate independently $n=100$ learning distributions $\nu_1,...,\nu_{100}$ as follows. First, we sample uniformly $\mu_i \in [0.3,0.7]$ and $\sigma_i \in [0.001,0.2]$, and compute $f_i$, the density of the Gaussian distribution with mean $\mu_i$ and variance $\sigma_i^2$.
Then, we generate the function $g_i$ with value $f_i(x) \exp( Z_i(x))$, $x\in [0,1]$, where $Z_i$ is a realization of a Gaussian process on $[0,1]$ with mean function $0$ and Mat\'ern $5/2$ covariance function with parameters $\sigma=1$ and $\ell = 0.2$ (see e.g. \cite{roustant12dicekriging} for the expression of this covariance function). Finally, $\nu_i$ is the distribution on $[0,1]$ having density $g_i / (\int_{0}^1 g_i)$. In Figure \ref{fig:sampled:dist}, we show the density functions of $10$ of these $n$ sampled distributions. From the figure, we see that the learning distributions keep a relatively strong underlying two dimensional structure, driven by the randomly generated means and standard deviations. At the same time, because of the random perturbations generated with the Gaussian processes $Z_i$, these distributions are not restricted in a finite-dimensional space, and can exhibit various degrees of asymmetries.

\begin{figure}
\centering
\includegraphics[height=7cm,width=7cm]{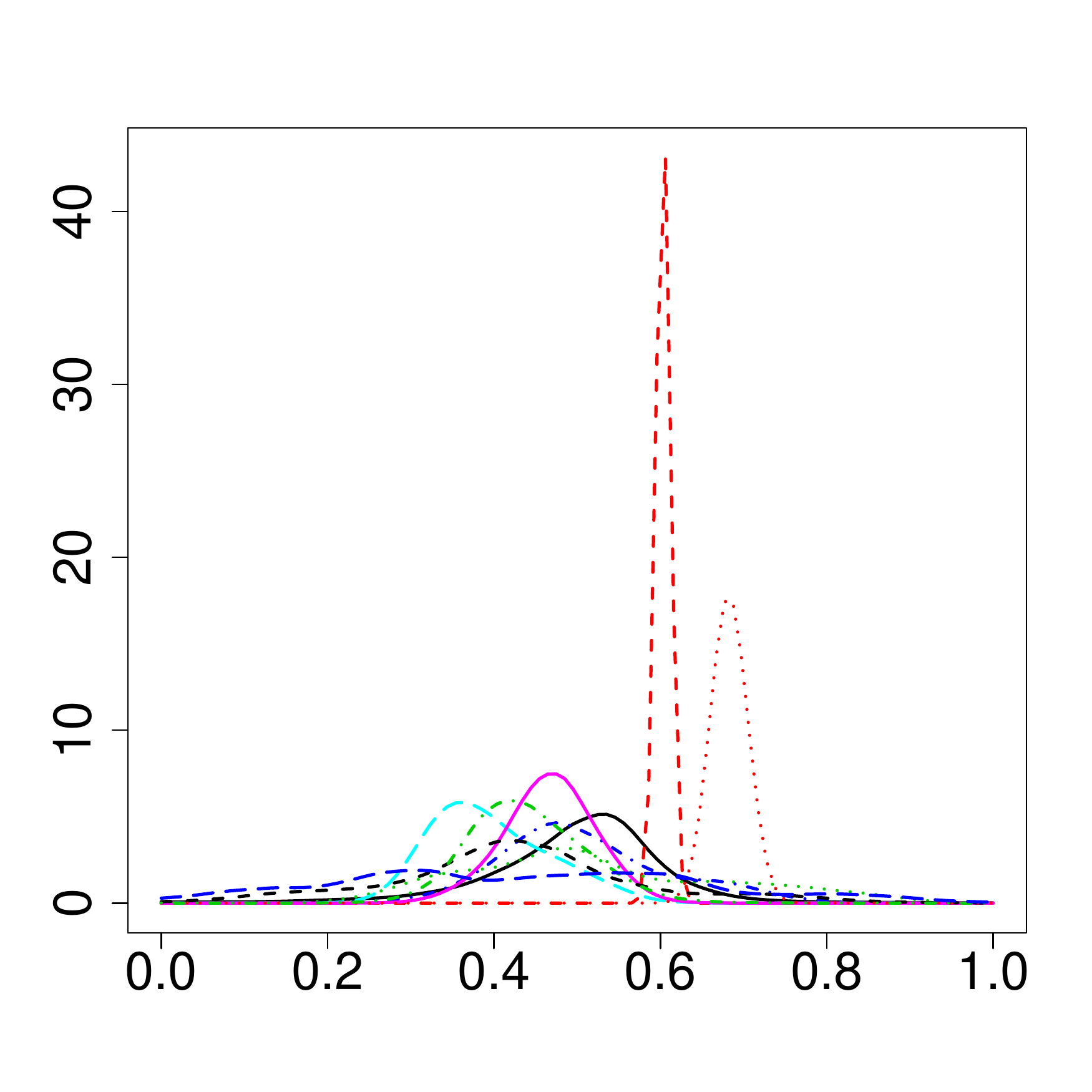}
\caption{Probability density functions of $10$ of the randomly generated learning distributions for the simulation study.}
\label{fig:sampled:dist}
\end{figure}

From the learning set $(\nu_i,F(\nu_i))_{i=1,...,n}$, we fit three Gaussian process models, which we call ``distribution'', ``Legendre'' and ``PCA'', and for which we provide more details below. Each of these three Gaussian process models provide a conditional expectation function $$\nu \to \hat{F}(\nu)  = \mathbb{E}( F( \nu )  | F(\nu_1),...,F(\nu_n) )$$ and a conditional variance function $$\nu \to \hat{\sigma}^2(\nu)  = \mathrm{var}( F( \nu )  | F(\nu_1),...,F(\nu_n) ).$$ We then evaluate the quality of the three Gaussian process models on a test set of size $n_t=500$ of the form $(\nu_{t,i},F(\nu_{t,i}))_{i=1,...,n_t}$, where the $\nu_{t,i}$ are generated in the same way as the $\nu_i$ above. We consider the two following quality criteria. The first one is the root mean square error (RMSE),
\[
RMSE^2 = \frac{1}{n_t} \sum_{i=1}^{n_t} \left( F(\nu_{t,i}) - \hat{F}(\nu_{t,i} ) \right)^2,
\]
which should be minimal. The second one is the confidence interval ratio (CIR) at level $\alpha \in (0,1)$,
\[
CIR_{\alpha} = \frac{1}{n_t} \sum_{i=1}^{n_t} \mathbf{1} \left\{ \left| F(\nu_{t,i}) - \hat{F}(\nu_{t,i}) \right| \leq q_{\alpha} \hat{\sigma}(\nu_{t,i}) \right\},
\]
with $q_{\alpha}$ the $\left(\frac{1}{2}+\frac{\alpha}{2}\right)$ quantile of the standard normal distribution. The $CIR_{\alpha}$ criterion should be close to $\alpha$.

\subsubsection{Details on the Gaussian process models}

The ``distribution'' Gaussian process model is based on the covariance functions discussed before, operating directly on probability distributions. In this model, the Gaussian process has mean function zero and a covariance function of the form
\[
K_{\sigma^2,\ell,H}( \nu_1 , \nu_2 ) = 
\sigma^2 \exp \left( 
- \frac{W_2( \nu_1 , \nu_2 )^{2H}}{\ell}
 \right).
\]
We call the covariance parameters $\sigma^2 > 0$, $\ell >0 $ and $H \in [0,1]$ the variance, correlation length and exponent. These parameters are estimated by maximum likelihood from the training set $(\nu_i,F(\nu_i))_{i=1,...,n}$, which yields the estimates $\hat{\sigma}^2,\hat{\ell},\hat{H}$. Finally, the Gaussian process model for which the conditional moments $\hat{F}(\nu)$ and $\hat{\sigma}^2(\nu)$ are computed is a Gaussian process with mean function zero and covariance function $K_{\hat{\sigma}^2,\hat{\ell},\hat{H}}$.

The ``Legendre'' and ``PCA'' Gaussian process models are based on covariance functions operating on finite-dimensional linear projections of the distributions. These projection-based covariance functions are used in the literature, in the general framework of stochastic processes with functional inputs, see e.g. \cite{Muehlenstaedt2016,nanty2016sampling}. For the ``Legendre'' covariance function, for a distribution $\nu$ with density $f_{\nu}$ and support $[0,1]$, we compute, for $i=0,...,o-1$
\[
a_i(\nu) = \int_{0}^1 f_{\nu}(t) p_i(t) dt,
\]
where $p_i$ is the $i-th$ normalized Legendre polynomial, with $\int_{0}^1 p_i^2(t) dt = 1$. The integer $o$ is called the order of the decomposition. Then, the covariance function operates on the input vector $(a_0(\nu),...,a_{o-1}(\nu))$ and is of the form
\begin{multline*}
 K_{\sigma^2,\ell_0,...,\ell_{o-1},H}( \nu_1 , \nu_2 ) \\
 = \sigma^2 \exp \left( 
- \left\{ \sum_{i=0}^{o-1} \left[ \frac{ |a_i(\nu_1)  - a_i (\nu_2)| }{ \ell_i } \right] \right\}^{H}
 \right).
\end{multline*}
The covariance parameters $\sigma^2 \geq 0, \ell_0 >0,...,\ell_{o-1} >0,H \in (0,1]$ are estimated by maximum likelihood, from the learning set $( a_0(\nu_i),...,a_{o-1}(\nu_i),F(\nu_i) )_{i=1,...,n}$. Finally, the conditional moments $\hat{F}(\nu)$ and $\hat{\sigma}^2(\nu)$ are computed as for the ``distribution'' Gaussian process model.

For the ``PCA'' covariance function, we discretize each of the $n$ probability density functions $f_{\nu_i}$ to obtain $n$ vectors $v_i = (f_{\nu_i}(j/(d-1)) )_{j=0,...,d-1}$, with $d=100$. Then, we let $w_1,...,w_o$ be the first $o$ principal component vectors of the set of vectors $(v_1,...,n_n)$. For any distribution $\nu$ with density $f_{\nu}$, we associate its projection vector $(a_1(\nu),...,a_o(\nu))$ defined as
\[
a_i(\nu) = \frac{1}{d} \sum_{j=0}^{d-1} f_{\nu}(j/(d-1))( w_i)_j.
\]

This procedure corresponds to the numerical implementation of functional principal component analysis presented in Section 2.3 of \cite{ramsay05functional}. Then, the covariance function in the ``PCA'' case operates on the input vector $(a_1(\nu),...,a_{o}(\nu))$. Finally, the conditional moments $\hat{F}(\nu)$ and $\hat{\sigma}^2(\nu)$ are computed as for the ``Legendre'' Gaussian process model.

\subsubsection{Results}

In Table \ref{table:results} we show the values of the RMSE and $CIR_{0.9}$ quality criteria for the ``distribution'', ``Legendre'' and ``PCA'' Gaussian process models. From the values of the RMSE criterion, the ``distribution'' Gaussian process model clearly outperforms the two other models. The RMSE of the ``Legendre'' and ``PCA'' models slightly decreases when the order increases, and stay well above the RMSE of the ``distribution'' model. Note that with orders $10$ and $15$, despite being less accurate, the ``Legendre'' and ``PCA'' models are significantly more complex to fit and interpret than the ``distribution'' model. Indeed these two models necessitate to estimate $12$ and $17$ covariance parameters, against $3$ for the ``distribution'' model. The maximum likelihood estimation procedure thus takes more time for the ``Legendre'' and ``PCA'' models than for the ``distribution'' model. We also remark that all three models provide appropriate predictive confidence intervals, as the value of the $CIR_{0.9}$ criterion is close to $0.9$. Finally, ``Legendre'' performs slightly better than ``PCA''.

Our interpretation for these results is that, because of the nature of the simulated data $(\nu_i,F(\nu_i))$, working directly on distributions, and with the Wasserstein distance, is more appropriate than using linear projections. Indeed, in particular, two distributions with similar means and small variances are close to each other with respect to both the Wasserstein distance and the value of the output function $F$. However, if the ratio between the two variances is large, the probability density functions of the two distributions are very different from each other, with respect to the $L^2$ distance. Hence, linear projections based on probability density functions is inappropriate in the setting considered here.

\begin{table}
\begin{center} 
\begin{tabular}{| c | c |  c |}
\hline
model & RMSE & $CIR_{0.9}$ \\ 
\hline
 ``distribution''  &  $0.094$   & $0.92$ \\
  ``Legendre'' order 5  &   $0.49$   & $0.92$ \\
``Legendre'' order 10  &  $0.34$    & $0.89$ \\
  ``Legendre'' order 15  &  $0.29$   & $0.91$  \\
    ``PCA'' order 5  &  $0.63$    & $0.82$  \\
``PCA'' order 10  &  $0.52$   & $0.87$  \\
  ``PCA'' order 15  &  $0.47$  & $0.93$ \\
\hline
\end{tabular}
\end{center} 
\caption{Values of different quality criteria for the ``distribution'', ``Legendre'' and ``PCA'' Gaussian process models. The ``distribution'' Gaussian process model is based on covariance functions operating directly on the input distributions, while ``Legendre'' and ``PCA'' are based on linear projections of the input distributions on finite-dimensional spaces. For ``Legendre'' and ``PCA'', the order value is the dimension of the projection space.
The quality criteria are the root mean square error (RMSE) which should be minimal and the confidence interval ratio ($CIR_{0.9}$) which should be close to $0.9$. The ``distribution'' Gaussian process model clearly outperforms the two other models.}
\label{table:results} 
\end{table} 

\subsection{Comparison with the kernel regression procedure of \cite{poczos13distribution}}

In this section, we compare the ``distribution'' method of Table \ref{table:results} which is suggested in the present article, with the ``kernel regression'' procedure of \cite{poczos13distribution}. This procedure consists in predicting $f(P) \in \mathbb{R}$, with $P \in \W$, from $\hat{P},\hat{P}_1,...,\hat{P}_n,f(P_1),...,f(P_n)$ where $\hat{P},\hat{P}_1,...,\hat{P}_n$ are estimates of $P,P_1,...,P_n \in \W$ obtained from sample values of $P,P_1,...,P_n$. In \cite{poczos13distribution}, $\hat{P},\hat{P}_1,...,\hat{P}_n$ correspond to kernel smoothing estimates of probability density functions constructed from the sample values. Then, the prediction $\hat{f}(\hat{P})$ of $f(P)$ is obtained by a weighted average of $f(P_1),...,f(P_n)$ where the weights are computed by applying a kernel to the distances $D(\hat{P},\hat{P}_1),...,D(\hat{P},\hat{P}_n)$. The distances suggested in \cite{poczos13distribution} are the $L^1$ distances between the estimated probability density functions. We remark that there is no estimate of the prediction error $f(P) - \hat{f}(\hat{P})$ in \cite{poczos13distribution}, which is a downside compared to the Gaussian process model considered in this paper.

An interesting feature of the setting of \cite{poczos13distribution} is that the input $P$ of the function value $f(P)$ is not observed. Only a sample from $P$ is available (this is the ``two-stage sampling" difficulty described in \cite{szabo2015two}, which arises in various applications) We shall demonstrate in this section that Gaussian process models can accommodate with this constraint. The idea is that $f(\hat{P})$ differs from $f(P)$, and that this difference can be modeled by adding a nugget variance parameter to the Gaussian process model. More precisely, the covariance functions we shall study in this section are 
\begin{multline}
K_{\sigma^2,\ell,H,\delta}( \nu_1 , \nu_2 )\\ = 
\sigma^2 \exp \left( 
- \frac{W_2( \nu_1 , \nu_2 )^{2H}}{\ell}
 \right)
 + \delta \mathbf{1} \{ W_2( \nu_1 , \nu_2 ) = 0 \} ,
\end{multline}

where $\delta \geq 0$ is an additional covariance parameter, which can also be estimated in the maximum likelihood procedure. Apart from this modification of the covariance model, we carry out the Gaussian process model computation as in Section \ref{subsection:comparison:other:cov:functions}, with always $W_2( P,Q )$ replaced by $W_2( \check{P} , \check{Q} )$, where $\check{P},\check{Q}$ are the empirical distributions corresponding to the available sample values from $P,Q$. 

We first reproduce the ``skewness of Beta'' example of \cite{poczos13distribution}. In this example $n=275$ distributions $P_1,...,P_n$ are randomly and independently generated for the learning set. We have that $P_i = B_{a_i}$ is the Beta distribution with parameters $(a_i,b)$ where $a_i$ is uniformly distributed on $[3,20]$ and $b=3$. The test set consists in $n_t = 50$ distributions $P_{t,1},...,P_{t,n_t}$ generated independently in the same way. The function to predict is defined by $f(P_a) = [2(b-a)(a+b+1)^{1/2}]/[(a+b+2)(ab)^{1/2}]$ and corresponds to the skewness of the Beta distribution. For each distribution, $500$ sample values are available. For the ``kernel regression'' procedure, we used the same settings (kernel, bandwidth selection, training and validation sets...) as in \cite{poczos13distribution}. 

The predictions obtained by the ``distribution'' and ``kernel regression'' procedures are presented in Figure \ref{fig:skewness:beta}. We observe that both methods perform equally well. The prediction errors are small, and are essentially due to to the fact that we only observe random samples from the distributions. [We have repeated the simulation of Figure \ref{fig:skewness:beta} with $5,000$ sample values instead of $500$, and the predicted values have become visually equal to the true values.] Our conclusion on this ``skewness of Beta'' example is that the setting is here very favourable (the input space of distributions is one-dimensional and $275$ observations of the function are available) so that both methods have similar good performances.

\begin{figure}
\centering
\begin{tabular}{cc}
\includegraphics[height=4cm,width=4cm]{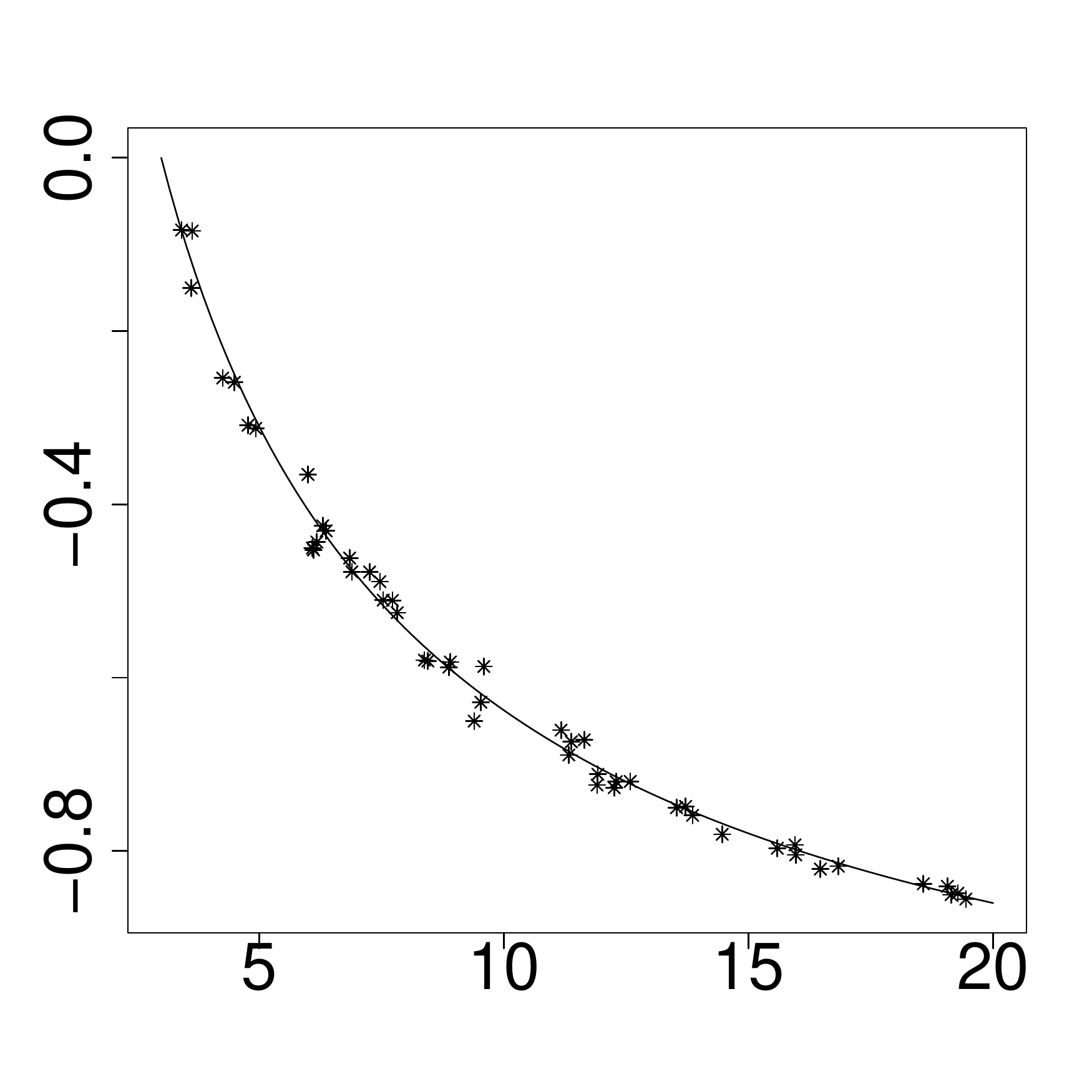}
&
\includegraphics[height=4cm,width=4cm]{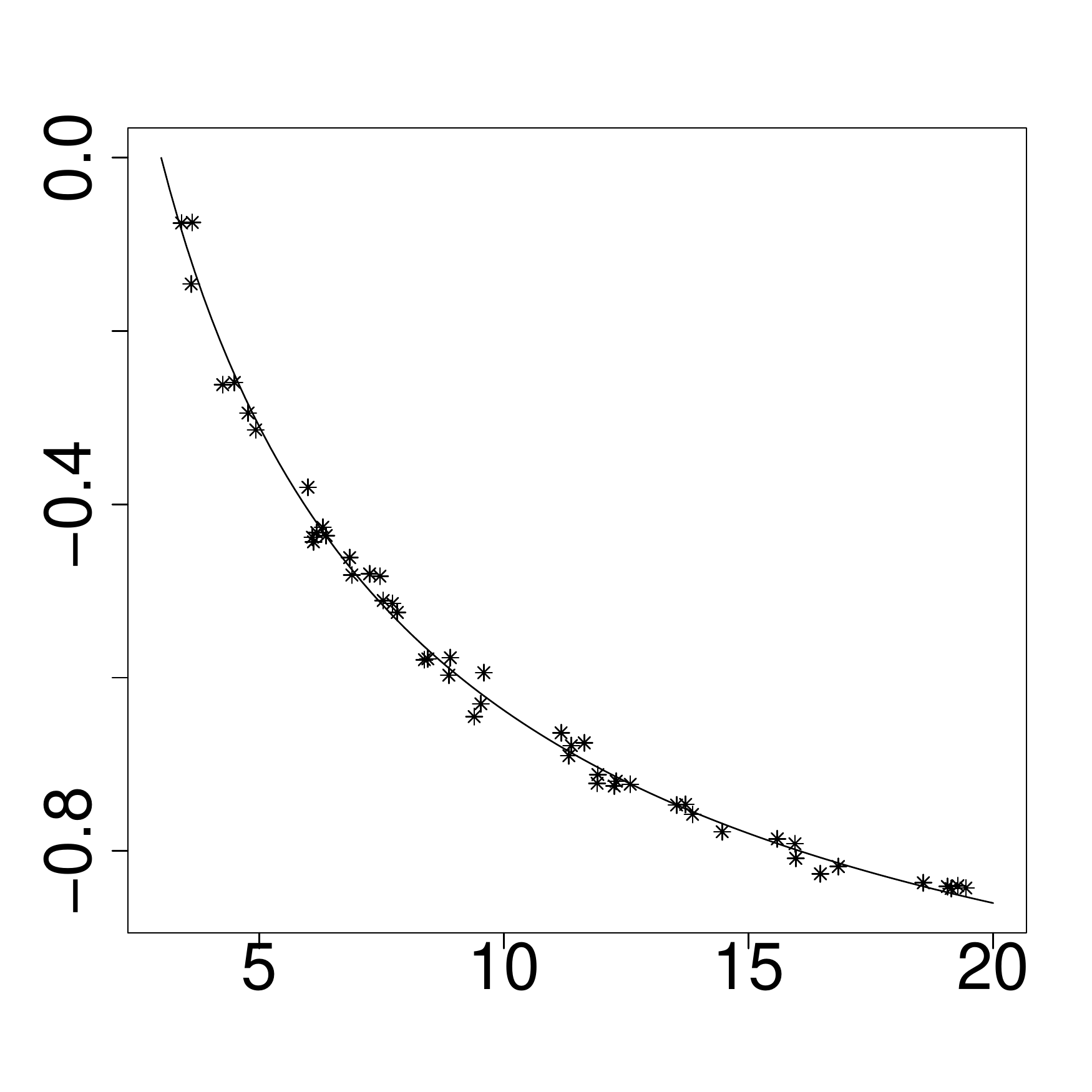}
\end{tabular}
\caption{Comparison of the ``distribution'' Gaussian process model of this paper (left) with the ``kernel regression'' procedure (right) for the ``skewness of Beta'' example. We predict the skewness of the Beta distribution (y-axis) from samples obtained from Beta distributions with parameter $(a,3)$ with $a \in [3,20]$ (x-axis). The true skewness is in plain line and the predictions are the dots. Both methods perform equally well.}
\label{fig:skewness:beta}
\end{figure}

Next, we repeat the ``distribution'' and ``kernel regression'' procedures on the same setting as in Table \ref{table:results} (except that each input and predictand distribution is only observed indirectly, through $500$ sample values from it). The prediction results, based on the same criteria as in Table \ref{table:results} are given in Table \ref{table:results:comparison:dist:reg}. We observe that the RMSE prediction criterion for the ``distribution'' model is deteriorated compared to Table \ref{table:results}. This is due to the fact that the distributions are not observed exactly anymore. The $CIR_{0.9}$ criterion is equal to $0.91$ for the ``distribution'' Gaussian process model. Hence, thanks to the addition of the nugget variance parameter, the Gaussian process model is able to take into account the additional uncertainty due to the random samples of the unobserved distributions, and to yield appropriate conditional variances.

We also observe that the RMSE pediction criterion is much larger for the ``kernel regression'' procedure. Hence, in this more challenging scenario (the input-space of distributions is non-parametric and only $100$ learning function values are available), the ``distribution'' Gaussian process model become strongly preferable. In our opinion, this is because the Wasserstein distance is here more relevant than distances between probability density functions (as discussed for Table \ref{table:results}). Also, Gaussian process prediction has benefits compared to prediction with weighted kernel averages. In particular, Gaussian process predictions come with a probabilistic model and have optimality properties under this model.

\begin{table}
\begin{center} 
\begin{tabular}{| c | c |  c |}
\hline
model & RMSE & $CIR_{0.9}$ \\ 
\hline
 ``distribution''  &  $ 0.21$   & $0.91$ \\
  ``kernel regression''  &   $0.93$   & \\
\hline
\end{tabular}
\end{center} 
\caption{Same setting as in Table \ref{table:results}, except that the input and predictand distributions are only observed indirectly, through sample values from them. The ``distribution'' model suggested in this paper clearly outperforms the ``kernel regression'' procedure.}
\label{table:results:comparison:dist:reg} 
\end{table} 

\subsection{Numerical complexity of the method}
Our method inherits the numerical complexity of Gaussian process regression in more classical settings. Given a learning dataset $(\mu_i,y_i)_{i=1}^N$ the complexity of the Kriging method is given by the inversion of the covariance matrix $K_\theta(\mu_i,\mu_j)_{i,j=1}^N$, which is in $O(N^3)$ number of operations. The Wasserstein distances between every pair of $\mu_i$ need also to be evaluated, which costs $O(N^2 q)$ operations, where $q$ is the size of the sampling of the distributions.

 Each prediction is then obtained by a vector product in $O(N)$ operations, while the computation of the conditional variance at some outputs is obtained in $O(N^2)$.
 
 The $O(N^3)$ cost of the overall method makes it challenging to use on very large datasets, however on moderately large datasets its good performances makes it an interesting choice, and in particular a preferable choice over the other methods it was compared to in this simulation study.
 
 For the sake of illustration, we remark that it took around 9 seconds to carry out our whole suggested Gaussian process procedure, in the case of Table I, and around 30 seconds in the case of Table II.
 
  See also \cite{furrer2006covariance} for a discussion of the covariance tapering method to reduce the numerical cost of Gaussian process regression.

\section{Conclusion}

We provided a new approach to learning with distribution inputs. Its strength relies on the existence of positive definite kernels on the distribution space, which enables the use of Gaussian process models and kernel learning methods. In particular, we generalized the seminal models that are the fractional Brownian motion and the power exponential stationary processes, to distribution inputs. The kernels we use are functions of the Wasserstein distance, which has proven its efficiency as a discrepancy measure between distributions in numerous applications. Our method requires only the distributions inputs to have a second order moment, which allows the simultaneous handling of very heterogeneous data, such as absolutely continuous distributions, deterministic inputs and empirical distributions, which is particularly important when only a sample of the input distributions is known.

Focusing on Gaussian process regression with stationary covariance functions, we proved that our method extends this classical tool to distribution inputs. In particular, we gave generalization of state of the art asymptotic results to our setting. As in vector input Kriging, the overall numerical complexity of the method is in $O(n^3)$, where $n$ is the size of the dataset, which is more costly than other distribution regression methods (such as the kernel regression procedure from \cite{poczos13distribution}), however our numerical simulations suggest that our method gives better prediction. Furthermore Kriging comes with an error estimation in the form of the conditional variance of the Gaussian process, which is an important guarantee in practice.

On the down side, the methods we use to prove the positive definiteness of our kernels are tightly related to the existence of an optimal coupling between every distribution, which existence is specific to dimension one. It is an important problem for numerous applications to give learning methods for multidimensional distributions. Hence, it would be valuable to obtain kernels based on the multidimensional Wasserstein space. This would require an other approach that the one used in the present paper, and constitutes an interesting problem for further research.

\appendix[Proofs] \label{section:proofs}

\subsection{Proofs for Section \ref{section:kernels}} \label{subsec:proofs_kernels}
\begin{proof}[Proof of Theorem \ref{thm:negative_definite}]
	
	We start with the negative definiteness.  For any $\mu \in \mathcal{W}_2(\mathbb{R})$ we denote by $F_\mu^{-1}$ the quantile function associated to $\mu$. It is well known that given a uniform random variable $U$ on $[0,1]$, $F_\mu^{-1}(U)$ is a random variable with law $\mu$, and furthermore for every $\mu,\nu \in \mathcal{W}_2(\mathbb{R})$:
	\begin{equation} \label{eq:optimal} W_2^2(\mu,\nu)=\E\left(F_\mu^{-1}(U)-F_\nu^{-1}(U)\right)^2,\end{equation}
	that is to say the coupling of $\mu$ and $\nu$ given by the random vector $(F_\mu^{-1}(U),F_\nu^{-1}(U))$ is optimal. Consider now $\mu_1,\cdots,\mu_n \in \mathcal{W}_2(\mathbb{R})$ and $c_1,\cdots,c_n \in \mathbb{R}$ such that $\sum_{i=1}^n c_i=0$.
	We have
	\begin{align*} & \sum_{i,j=1}^{n} c_i c_j W_2^2(\mu_i,\mu_j)  \\   = & \sum_{i,j=1}^{n} c_i c_j\E \left(F_{\mu_i}^{-1}(U)-F_{\mu_j}^{-1}(U)\right)^2 \\
	=  & \sum_{i,j=1}^{n} c_i c_j \E \left(F_{\mu_i}^{-1}(U)\right)^2 + \sum_{i,j=1}^{n} c_i c_j  \E \left(F_{\mu_j}^{-1}(U)\right)^2\\ &  -2 \sum_{i,j=1}^{n} c_i c_j  \E \left(F_{\mu_i}^{-1}(U) F_{\mu_j}^{-1}(U)\right).
	\end{align*}
	Using $\sum_{i=1}^n c_i=0$ the first two sums vanish and we obtain  \begin{align*} & \sum_{i,j=1}^{n} c_i c_j W_2^2(\mu_i,\mu_j)  \\ = & -2 \sum_{i,j=1}^{n} c_i c_j  \E \left(F_{\mu_i}^{-1}(U) F_{\mu_j}^{-1}(U)\right)\\
	=& -2 \E \left( \sum_{i=1}^n c_i F^{-1}_{\mu_i}(U)\right)^2 \leq 0,\end{align*} which proves that $W^{2H}_2$ is a negative definite kernel for $0\leq H \leq 1$.
	
	Let us now consider $H>1$. Using \eqref{eq:quadratic_cost} it is clear that for every $x,y\in \mathbb{R}$, $W_2(\delta_x,\delta_y)=|x-y|$. It is well known (see \textit{e.g} \cite{istas2011manifold}) that $|x-y]^{2H}$ is not a negative definite kernel on $\mathbb{R}$ for $H>1$, hence the same is true for $W_2^{2H}$.

	Let us now prove the nondegeneracy of the kernel: the idea of the proof is adapted from \cite{Venet_critical}: we consider $\mathcal{W}_2(\mathbb{R}) \times \mathbb{R}$ endowed with the product distance
	$$d((\mu,s),(\nu,t))=\left(W_2(\mu,\nu)^2+|s-t|^2\right)^{1/2}.$$ We assume the degeneracy of the kernel $W^{2H}_2$ on $\W$ and deduce that $d^{2H}$ is not negative definite on $\mathcal{W}_2(\mathbb{R})\times \mathbb{R}$, in contradiction with the following Lemma, from which we postpone the proof:
	\begin{lem} \label{lem:product_index} The function $d^{2H}$ is a negative definite kernel if and only if $0\leq H\leq 1$.
	\end{lem}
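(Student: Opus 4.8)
The plan is to reduce the statement to facts already established and to a couple of elementary stability properties of negative definiteness.

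For the ``if'' direction, write $d^{2H}\big((\mu,s),(\nu,t)\big) = \big(W_2(\mu,\nu)^2 + |s-t|^2\big)^{H}$ and observe that the quantity inside the parentheses, namely $g\big((\mu,s),(\nu,t)\big) := W_2(\mu,\nu)^2 + |s-t|^2 = d\big((\mu,s),(\nu,t)\big)^2$, is itself a negative definite kernel on $\W \times \mathbb{R}$. Indeed $W_2^2$ is negative definite on $\W$ (this is exactly the first part of the proof of Theorem \ref{thm:negative_definite}), and $(s,t) \mapsto |s-t|^2$ is negative definite on $\mathbb{R}$ since, for $\sum_i c_i = 0$, one has $\sum_{i,j} c_i c_j |s_i - s_j|^2 = -2\big(\sum_i c_i s_i\big)^2 \le 0$. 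Composing each of these two kernels with the relevant coordinate projection $\W \times \mathbb{R} \to \W$, resp. $\W \times \mathbb{R} \to \mathbb{R}$, preserves both the vanishing on the diagonal and the defining inequality \eqref{eq:ineq_negative_definite}, and a sum of negative definite kernels is again negative definite; hence $g$ is negative definite, and it vanishes on the diagonal. Applying Lemma \ref{lem:subordination} with exponent $H \in [0,1]$ to $K = g$ then yields that $d^{2H} = g^{H}$ is negative definite.

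For the ``only if'' direction, I would argue by restriction to a copy of $\mathbb{R}$ embedded in $\W \times \mathbb{R}$. Fix any $\mu_0 \in \W$; on the subset $\{(\mu_0,t) : t \in \mathbb{R}\}$ we have $d\big((\mu_0,s),(\mu_0,t)\big) = |s-t|$, so the restriction of $d^{2H}$ to this subset is precisely $(s,t) \mapsto |s-t|^{2H}$. Since the restriction of a negative definite kernel to a subset of its index set is still negative definite, it suffices to recall the classical fact (see e.g. \cite{istas2011manifold}, as already used in the proof of Theorem \ref{thm:negative_definite}) that $|s-t|^{2H}$ is not negative definite on $\mathbb{R}$ for $H>1$. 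Hence $d^{2H}$ is not negative definite when $H>1$, which completes the equivalence.

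I do not expect a genuine obstacle here: the only points that need care are the stability properties of negative definiteness — invariance under pullback by an arbitrary map, closure under finite sums, and inheritance by restrictions to subsets — all of which follow immediately from \eqref{eq:ineq_negative_definite} together with the condition $K(x,x)=0$. The two substantive ingredients, negative definiteness of $W_2^2$ on $\W$ and failure of negative definiteness of $|t|^{2H}$ on $\mathbb{R}$ for $H>1$, are already available.
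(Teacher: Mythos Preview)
Your proof is correct and follows essentially the same route as the paper: for $H\in[0,1]$ write $d^2=W_2^2+|\cdot|^2$ as a sum of two negative definite kernels and apply Lemma~\ref{lem:subordination}, and for $H>1$ restrict to a fiber $\{\mu_0\}\times\mathbb{R}$ where $d^{2H}$ reduces to $|s-t|^{2H}$. The only difference is that you spell out the elementary stability properties (pullback, sum, restriction) that the paper leaves implicit.
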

	
	Let us fix $0<H<1$ and assume that $W_2^{2H}$ is degenerate. There exists $\mu_1,\cdots,\mu_n \in \W $ and $c_1,\cdots,c_n \in \mathbb{R}$ such that $\sum_{i=1}^n c_i =0$ and
	
	\begin{equation} \label{eq:degenerate} \sum_{i,j=1}^n c_i c_j W_2^{2H}(\mu_i,\mu_j)=0.
	\end{equation}
	In $\W \times \mathbb{R}$ we now consider the points $P_i=(\mu_i,0)$ for $1\leq i \leq n$ and $P_{n+1}=(\mu_n,\varepsilon)$ with $\varepsilon > 0$. We also set $c'_i=c_i$ for every $1 \leq i \leq n-1$ and $c'_n=c'_{n+1}=c_n/2$. Notice that we have
	$$\sum_{i=1}^{n+1} c'_i=0. $$ Now
	\begin{align*}& \sum_{i,j=1}^{n+1}c'_i c'_j d^{2H}(P_i,P_j) \\  = & \sum_{i,j=1}^{n-1}c'_i c'_j d^{2H}(P_i,P_j)+2\sum_{i=1}^{n-1}c'_i c'_n d^{2H}(P_i,P_n) \\ &+2\sum_{i=1}^{n-1}c'_i c'_{n+1} d^{2H}(P_i,P_{n+1})+2c'_{n} c'_{n+1} d^{2H}(P_n,P_{n+1}).\end{align*}
	
	We now use \begin{align*} d^{2H}(P_i,P_{n+1})= & \left(W_2(\mu_i,\mu_n)^2 + \varepsilon^2 \right)^{H} \\ = & W_2(\mu_i,\mu_n)^{2H}+O\left(\varepsilon^{2}\right)\end{align*}
	to obtain
	\begin{align*}
	& \sum_{i,j=1}^{n+1}c'_i c'_j d^{2H}(P_i,P_j) \\  = & \sum_{i,j=1}^{n-1}c_i c_j W_2^{2H}(\mu_i,\mu_j)+2\sum_{i=1}^{n-1}c_i \frac{c_n}{2} W_2^{2H}(\mu_i,\mu_n) \\ &+2\sum_{i=1}^{n-1}c_i \frac{c_{n}}{2}  W_2^{2H}(\mu_i,\mu_n)+\frac{c_n^2}{2}\varepsilon^{2H}+O\left(\varepsilon^2\right) \\
	= & \sum_{i,j=1}^{n-1}c_i c_j W_2^{2H}(\mu_i,\mu_j)+2\sum_{i=1}^{n-1}c_i c_n W_2^{2H}(\mu_i,\mu_n) \\ &+\frac{c_n^2}{2}\varepsilon^{2H}+O\left(\varepsilon^2\right) \\
	= & \sum_{i,j=1}^n c_i c_j W_2^{2H}(\mu_i,\mu_j) +\frac{c_n^2}{2}\varepsilon^{2H}+O\left(\varepsilon^{2}\right).
	\end{align*}
	Finally using \eqref{eq:degenerate} and $H<1$ we obtain $$\sum_{i,j=1}^{n+1}c'_i c'_j d^{2H}(P_i,P_j)=\frac{c_n^2}{2}\varepsilon^{2H}+o \left(\varepsilon^{2H}\right),$$ which is positive for $\varepsilon$ small enough. This shows that $d^{2H}$ is not negative definite, in contradiction with Lemma \ref{lem:product_index}. In the end $W_2^{2H}$ is nondegenerate for every $0 < H < 1$.
	
	We now use the same argument as in the end of the proof of Theorem \ref{thm:negative_definite}. Since $W_2^{2H}(\delta_x,\delta_y)=|x-y|^{2H}$ and $|x-y|^2$ and $|x-y|^0$ are degenerate kernels on $\mathbb{R}$, $W_2^0$ and $W_2^2$ are degenerate kernels.
	
\end{proof}
\begin{proof}[Proof of Lemma \ref{lem:product_index}] For $H=1$ we have
	$$d^2((\mu,s),(\nu,t))=W_2(\mu,\nu)^2+|s-t|^2 $$
	hence $d^2$ is negative definite as the sum of two negative definite kernels. From Lemma \ref{lem:subordination} we get that $d^{2H}$ is a negative definite kernel for every $0\leq H \leq 1$.
	
	For $H>1$ we notice that $d^{2H}(\mu,x)(\mu,y)=|x-y|^{2H}$ and use again the fact that $|x-y|^{2H}$ is not a negative definite kernel to conclude that $d^{2H}$ is not negative definite.
\end{proof}

\begin{proof}[Proof of Theorem \ref{thm:fractional_brownian_kernels}]
	The fact that \eqref{eq:fbm_kernels} are covariance kernels is a direct consequence of Theorem \ref{thm:negative_definite} and the following Schoenberg Theorem (which is proven in \cite{berg_al}):
	\begin{theorem}[Schoenberg] \label{thm:Schoenberg_definite} Given a set $X$, two functions $K,R: X\times X \rightarrow \mathbb{R}$, and $o\in X$ such that for every $x,y \in X$,
		$$K(x,x)=0$$ and
		$$R(x,y)=K(x,o)+K(y,o)-K(x,y),$$
		the function $R$ is a positive definite kernel if and only if $K$ is a negative definite kernel.
	\end{theorem}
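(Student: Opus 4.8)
The plan is to prove both implications by a direct algebraic manipulation, using only the standing hypotheses $K(x,x)=0$ and $R(x,y)=K(x,o)+K(y,o)-K(x,y)$ together with the (implicit) symmetry of the kernels. \emph{From $K$ negative definite to $R$ positive definite:} I would fix points $x_1,\dots,x_n \in X$ and reals $\lambda_1,\dots,\lambda_n$, set $S:=\sum_{k=1}^n \lambda_k$, and then \emph{adjoin the base point} $o$ --- that is, apply the negative definiteness of $K$ to the $n+1$ points $x_1,\dots,x_n,x_{n+1}:=o$ with coefficients $c_i:=\lambda_i$ for $i\le n$ and $c_{n+1}:=-S$, which satisfy $\sum_{i=1}^{n+1} c_i=0$ by construction. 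Expanding $\sum_{i,j=1}^{n+1} c_i c_j K(x_i,x_j)$, the diagonal term $c_{n+1}^2 K(o,o)$ vanishes because $K(o,o)=0$, and what remains is $\sum_{i,j=1}^{n}\lambda_i\lambda_j K(x_i,x_j)-2S\sum_{i=1}^n\lambda_i K(x_i,o)$. On the other hand, substituting the defining identity for $R$ and collecting the two ``boundary'' sums (each equal to $S\sum_i\lambda_i K(x_i,o)$) gives $\sum_{i,j=1}^n\lambda_i\lambda_j R(x_i,x_j)=2S\sum_i\lambda_i K(x_i,o)-\sum_{i,j}\lambda_i\lambda_j K(x_i,x_j)$, which is exactly $-\sum_{i,j=1}^{n+1} c_i c_j K(x_i,x_j)\ge 0$.

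\emph{From $R$ positive definite to $K$ negative definite:} the condition $K(x,x)=0$ holds by hypothesis, so it suffices to check the quadratic inequality. I would fix $x_1,\dots,x_n$ and $c_1,\dots,c_n$ with $\sum_i c_i=0$ and substitute the identity for $R$ into $\sum_{i,j} c_i c_j R(x_i,x_j)$. The two terms involving $K(\cdot,o)$ each carry a factor $\sum_i c_i=0$ and hence drop out, leaving $\sum_{i,j} c_i c_j R(x_i,x_j)=-\sum_{i,j} c_i c_j K(x_i,x_j)$; positivity of the left-hand side then forces $\sum_{i,j} c_i c_j K(x_i,x_j)\le 0$.

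There is no real obstacle here: the single genuine idea is the trick of enlarging the point configuration by the distinguished point $o$ with compensating weight $-\sum_k\lambda_k$, so that the zero-sum requirement in the definition of negative definiteness can be invoked, and the normalization $K(o,o)=0$ is precisely what kills the surplus diagonal term; everything else is bookkeeping. As the excerpt notes, this statement is classical and a complete proof can be found in \cite{berg_al}; the argument above is just the standard one, recorded here for completeness.
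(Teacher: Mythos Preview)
Your argument is correct and is exactly the standard proof: the key trick of adjoining $o$ with weight $-\sum_k\lambda_k$ to pass from an arbitrary coefficient vector to a zero-sum one is precisely what is needed, and the bookkeeping you describe checks out in both directions. Note, however, that the paper does not actually give its own proof of this theorem; it simply states the result and refers to \cite{berg_al} for the proof, so there is nothing to compare against beyond observing that your writeup is the classical argument one finds there.
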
	
	
	We now prove the degeneracy: let $X=(X(\mu))_{\mu \in\W}$ denote the $H$-fractional Brownian field indexed by $\mathcal{W}_2(\mathbb{R})$ with origin in $\sigma$.  Assume $X$ is degenerate: there exist $\lambda_1,\cdots,\lambda_n \in \mathbb{R}$ and $\mu_1,\cdots,\mu_n \in \W$ such that
	$$\sum_{i=1}^{n} \lambda_n X(\mu_n)= 0 \text{ almost surely.}$$
	Since $X(\sigma)=0$ almost surely, setting $\mu_{n+1}=\sigma$ and $\lambda_{n+1}=-\sum_{i=1}^n \lambda_i$, it is clear that
	$$\sum_{i=1}^{n+1} \lambda_n X(\mu_n)= 0 \text{ almost surely,}$$
	which implies $$\sum_{i,j=1}^{n+1}\lambda_i \lambda_j W_2^{2H}(\mu_i,\mu_j)=\E\left(\sum_{i=1}^{n+1} \lambda_n X(\mu_n)\right)^2=0.$$
	Since $\sum_{i=1}^{n+1} \lambda_i = 0$ this shows that $W_2^{2H}$ is degenerate, in contradiction with Theorem \ref{thm:negative_definite}. Therefore $X$ is nondegenerate for every $0<H<1$.
	
	The degeneracy of the $0$-fractional and the $2$-fractional Brownian field indexed by $\W$ is a direct consequence from the degeneracy of $W_2^0$ and $W_2^2$. 

\end{proof}

\begin{proof}[Proof of Theorem \ref{thm:stationary_valid_kernels}] The fact that \eqref{eq:stationnary_covariance} are covariance kernels is a direct consequence of Theorem \ref{thm:negative_definite} and the following Schoenberg Theorem (which proof can be found in \cite{berg_al}):
	
	\begin{theorem}[Schoenberg]\label{thm:Schoenberg_monotone} Let $F:\mathbb{R}^+ \rightarrow \mathbb{R}^+$ be a completely monotone function, and $K$ a negative definite kernel. Then $(x,y)\mapsto F(K(x,y))$ is a positive definite kernel.
	\end{theorem}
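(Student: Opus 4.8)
The plan is to deduce this classical statement (attributed to Schoenberg; see \cite{berg_al}) from the Schoenberg theorem already stated above (Theorem~\ref{thm:Schoenberg_definite}) together with the representation of completely monotone functions as Laplace transforms of positive measures, recalled earlier in the text. The heart of the argument is the intermediate claim that a negative definite kernel $K$ gives rise, for every $\lambda \geq 0$, to a positive definite kernel $(x,y) \mapsto e^{-\lambda K(x,y)}$.

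First I would establish this intermediate claim. Fix a base point $o \in X$; since $K(o,o)=0$, Theorem~\ref{thm:Schoenberg_definite} applied with that $o$ shows that $R(x,y) := K(x,o) + K(y,o) - K(x,y)$ is a positive definite kernel. Rewriting $-\lambda K(x,y) = \lambda R(x,y) - \lambda K(x,o) - \lambda K(y,o)$ gives
\[
e^{-\lambda K(x,y)} = e^{-\lambda K(x,o)}\, e^{-\lambda K(y,o)}\, e^{\lambda R(x,y)} .
\]
The factor $e^{\lambda R} = \sum_{k \geq 0} \frac{\lambda^k}{k!} R^{\circ k}$, where $R^{\circ k}$ denotes the entrywise power kernel $(x,y)\mapsto R(x,y)^k$, is positive definite: by the Schur product theorem each $R^{\circ k}$ is positive definite, and a series of positive definite kernels with nonnegative coefficients is again positive definite, its associated matrices being limits of positive semidefinite partial sums. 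Finally, multiplying a positive definite kernel by $g(x)g(y)$ with $g(x) = e^{-\lambda K(x,o)}$ preserves positive definiteness, since this amounts to a diagonal congruence of the associated matrices. The case $\lambda = 0$ being trivial, the claim follows.

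Next I would invoke the representation recalled in the text: $F$ completely monotone implies $F(t) = \int_{\mathbb{R}^+} e^{-\lambda t}\, d\mu_F(\lambda)$ for a positive measure $\mu_F$ of finite mass on $\mathbb{R}^+$. Then for arbitrary $x_1,\dots,x_n \in X$ and $c_1,\dots,c_n \in \mathbb{R}$,
\[
\sum_{i,j=1}^n c_i c_j\, F(K(x_i,x_j)) = \int_{\mathbb{R}^+} \left( \sum_{i,j=1}^n c_i c_j\, e^{-\lambda K(x_i,x_j)} \right) d\mu_F(\lambda) \geq 0,
\]
the interchange of the finite sum with the integral being immediate, and each integrand being nonnegative by the claim together with $\mu_F \geq 0$ (note that $K(x_i,x_j) \geq 0$ automatically, since taking $n=2$, $c_1=1$, $c_2=-1$ in the negative definiteness inequality gives $-2K(x_1,x_2) \leq 0$, so $F(K(x_i,x_j))$ is well defined). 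This is precisely the positive definiteness of $(x,y) \mapsto F(K(x,y))$.

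The only nontrivial ingredient is the intermediate claim, and within it the use of the Schur product theorem to control $e^{\lambda R}$; once that is in place the rest is a one-line integration argument. Alternatively --- as the paper does --- one may simply cite \cite{berg_al} for the whole statement, the sketch above merely recording the standard proof for completeness.
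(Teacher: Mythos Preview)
Your proof is correct and is precisely the standard argument found in \cite{berg_al}; the paper itself does not give a proof of this theorem but simply cites that reference, as you already note in your final paragraph. There is nothing to add.
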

	
	 Furthermore as a function of the distance $W_2$, \eqref{eq:stationnary_covariance} is obviously invariant under the action of any isometry of $\W$, so that the second claim holds.
\end{proof}

\subsection{Proofs for Section \ref{subsection:asymptotic:properties}}

\begin{proof}[Proof of Theorem \ref{theorem:consistency}] We have $\ML \in \argmin L_\theta$ with
	$$L_\theta= \frac{1}{n} \ln (\det R_\theta) +\frac{1}{n}y^t R_\theta^{-1} y. $$
	From Lemma \ref{lem:largest_eigenvalue} we have that
	$$\displaystyle \sup\limits_{\theta \in \Theta} \lambda_{\max}(R_\theta) \text{~~ and ~~ }  \displaystyle\sup\limits_{\theta \in \Theta} \max\limits_{i=1,\cdots,p} \lambda_{\max}\left(\frac{\partial}{\partial \theta_i}R_\theta \right)$$ are bounded as $n \rightarrow \infty$.
	Hence we can proceed as in the beginning of the proof of Proposition 3.1 in \cite{bachoc14asymptotic} to obtain
	\begin{equation} \label{eq:for:ML:cons:lik}
	\st \| L_\theta - \E(L_\theta) \|=o_{\mathbb{P}}(1). 
	\end{equation}
	Following again the proof of Proposition 3.1 in \cite{bachoc14asymptotic} we obtain the existence of a positive $a$ such that
	$$\E(L_\theta)-\E(L_{\theta_0}) \geq a | R_\theta - R_{\theta_0} |^2 ,$$
	with $| \Lambda |^2 = (1/n) \sum_{i,j=1}^n \Lambda_{i,j}^2$.
	
	Hence from Condition \ref{cond:asymptotics:cinq} and \eqref{eq:for:ML:cons:lik} we have $\forall \alpha >0$, $$\bbP \left(\left\| \ML - \theta_0 \right\| \geq \alpha \right) \underset{n\rightarrow \infty}{\longrightarrow} 0$$ and so  $$\ML {\overset{\bbP}{\underset{n \rightarrow \infty}{\longrightarrow}}} \theta_0. $$
\end{proof}

\begin{proof}[Proof of Theorem \ref{theorem:TCL}] From Lemma \ref{lem:largest_eigenvalue} and Condition \ref{cond:asymptotics:quatre} we have for every $n \in \mathbb{N}$, $\left| \left(M_{ML}\right)_{i,j} \right| \leq B$ for a fixed $B < \infty$.
	
	In addition, for any $\lambda_1,\cdots, \lambda_p \in \mathbb{R}$ such that $\sum_{i=1}^p \lambda_i^2 =1 $,
	\begin{align*}
	& \sum_{i,j=1}^p \lambda_i \lambda_j \left(M_{ML} \right)_{i,j} \\ & =  \frac{1}{2n} \Tr \left( R_{\theta_0}^{-1} \left(\sum_{i=1}^p \lambda_i \frac{\partial R_{\theta_0}}{\theta_i}\right) R_{\theta_0}^{-1} \left(\sum_{j=1}^p \lambda_j \frac{\partial R_{\theta_0}}{\theta_j}\right) \right) \\
	& =  \frac{1}{2} \left| R_{\theta_0}^{-1/2} \left( \sum_{i=1}^p  \lambda_i \frac{\partial R_{\theta_0}}{\partial \theta_i }\right) R_{\theta_0}^{-1/2} \right|^2 \\
	& \geq  C^2 \left| \sum_{i=1}^p \lambda_i \frac{\partial R_{\theta_0}}{\partial \theta_i} \right|^2 
	\end{align*}
	with a fixed $C>0$, since for every $n$
	$$\lambda_{\min}\left(R_{\theta_0}^{-1} \right)= \frac{1}{\lambda_{\max}(R_{\theta_0})} \geq C >0$$ from Lemma \ref{lem:largest_eigenvalue}. Hence from Condition \ref{cond:asymptotics:huit} we obtain
	$$ \liminf\limits_{n \rightarrow \infty} \lambda_{\min} (M_{ML}) >0. $$
	
	Hence \eqref{eq:asymptotic:cov:mat:bounded} is proved.
	Let us now assume that
	\begin{equation} \label{eq:hyp_abs}  \sqrt{n} M_{ML}^{1/2} \left( \ML - \theta_0\right) \cancel{{\overset{\mathcal{L}}{\underset{n \rightarrow \infty}{\longrightarrow}}}} \mathcal{N}(0,I_n).\end{equation}
	
	Then there exists a  bounded measurable function $g : \mathbb{R}^p \rightarrow \mathbb{R}$, $\xi>0$ and a subsequence $n'$ such that along $n'$ we have
	
	$$\left| \E \left[ g\left( \sqrt{n} M_{ML}^{1/2} (\ML - \theta_0 )\right)\right]-\E(g(U)) \right| \geq \xi, $$
	with $U \sim \mathcal{N}(0,I_p)$.
	
	In addition, by compactness, up to extracting another subsequence we can assume that
	$$M_{ML} \underset{n\rightarrow \infty}{\rightarrow} M_\infty, $$
	where $M_\infty$ is a symmetric positive definite matrix.
	
	Now the remaining of the proof is similar to the proof of Proposition 3.2 in \cite{bachoc14asymptotic}. We have
	$$\frac{\partial}{\partial \theta_i} L_\theta= \frac{1}{n} \left( \Tr \left(R_\theta^{-1} \frac{\partial R_\theta}{\partial\theta_i} \right)-y^t R_{\theta}^{-1} \frac{\partial R_{\theta}}{\partial \theta_i}R_{\theta}^{-1} y \right).$$
	Hence, exactly as in the proof of Proposition D.9 in \cite{bachoc14asymptotic} we can show
	$$\sqrt{n} \frac{\partial}{ \partial \theta_i} L_{\theta_0} {\overset{\mathcal{L}}{\underset{n \rightarrow \infty}{\longrightarrow}}} \mathcal{N}(0, 4 M_\infty).  $$
	
	Let us compute
	
	\begin{align*}
	&\frac{\partial^2}{\partial\theta_i \partial \theta_j} L_{\theta_0}= \frac{1}{n} \Tr \left(-R_{\theta_0}^{-1}\frac{\partial R_{\theta_0}}{\partial\theta_i} R_{\theta_0}^{-1}\frac{R_{\theta_0}}{\partial\theta_j}+R_{\theta_0}^{-1} \frac{\partial^2 R_{\theta_0}}{\partial\theta_i \partial \theta_j} \right) \\
	& + \frac{1}{n} y^t \left( 2 R_{\theta_0}^{-1} \frac{\partial R_{\theta_0}}{\partial\theta_i} R_{\theta_0}^{-1} \frac{\partial R_{\theta_0}}{\partial \theta_j} R_{\theta_0}^{-1}-R_{\theta_0}^{-1}\frac{\partial^2 R_{\theta_0}}{\partial \theta_i \partial \theta_j} R_{\theta_0}^{-1} \right) y. 
	\end{align*}
	We have $$\E \left( \frac{\partial^2}{\partial \theta_i \partial \theta_j} L_{\theta_0}\right)=2 M_{ML}, $$
	and from Condition \ref{cond:asymptotics:quatre} and Lemma \ref{lem:largest_eigenvalue_bis}, $$\Var\left(\frac{\partial^2}{\partial \theta_i \partial \theta_j} L_{\theta_0} \right) \underset{n\rightarrow \infty}{\longrightarrow} 0. $$
	
	Hence $$\frac{\partial^2}{\partial \theta_i \partial \theta_j} L_{\theta_0} {\overset{\mathbb{P}}{\underset{n \rightarrow \infty}{\longrightarrow}}} 2 M_\infty. $$
	
	Moreover, $\displaystyle \frac{\partial^3}{\partial \theta_i \partial \theta_j \partial \theta_k} L_{\theta}$ can be written as
	$$\frac{1}{n} \Tr (A_\theta)+\frac{1}{n} y^t B_\theta y, $$
	where $A_\theta$ and $B_\theta$ are sums of products of the matrices $R_\theta^{-1}$ or $\displaystyle \frac{\partial}{\partial \theta_{i_1}}\cdots \frac{\partial}{\partial \theta_{i_q}} R_\theta$ with $q\in \{0,\cdots,3\}$ and $i_1,\cdots,i_q \in \{1,\cdots p \}$.
	
	Hence from Condition \ref{cond:asymptotics:quatre} and from Lemmas \ref{lem:largest_eigenvalue} and \ref{lem:largest_eigenvalue_bis} we have
	$$ \st \left\| \frac{\partial^3}{\partial \theta_i \partial \theta_j \partial \theta_j}L_\theta  \right\| = O_{\bbP}(1). $$
	
	Following exactly the proof of Proposition D.10 in \cite{bachoc14asymptotic} we can show that
	$$\sqrt{n} (\ML - \theta_0) {\overset{\mathcal{L}}{\underset{n' \rightarrow \infty}{\longrightarrow}}} \mathcal{N}(0,M_\infty^{-1}).  $$
	Moreover since $M_{ML}\underset{n\rightarrow \infty}{\rightarrow} M_\infty$ we have
	
	$$ \sqrt{n} M_{ML}^{1/2} (\ML-\theta_0)  {\overset{\mathcal{L}}{\underset{n' \rightarrow \infty}{\longrightarrow}}} \mathcal{N}(0,I_p). $$
	
	This is in contradiction with \eqref{eq:hyp_abs} and concludes the proof.
\end{proof}

\begin{proof}[Proof of Theorem \ref{theorem:input_prediction}] From Theorem \ref{theorem:consistency} it is enough to show for $i=1,\cdots,p$ that
	$$\st \left| \frac{\partial}{\partial \theta_i} \hat{Y}_\theta(\mu)\right|=O_{\bbP}(1).$$
	From a version of Sobolev embedding theorem (see Theorem 4.12, part I, case A in \cite{adams03sobolev}), there exists a finite constant $A_\Theta$ depending only on $\Theta$ such that
	\begin{align*}\st \left| \frac{\partial}{\partial \theta_i} \hat{Y}_\theta (\mu)\right| & \leq A_\Theta \int_\Theta \left| \frac{\partial}{\partial \theta_i} \hat{Y}_\theta (\mu) \right|^{p+1} d\theta \\
	& + A_\Theta \sum_{j=1}^p \int_\Theta \left| \frac{\partial}{\partial \theta_j} \frac{\partial}{\partial \theta_i} \hat{Y}_\theta(\mu) \right|^{p+1} d\theta .\end{align*}
	Therefore in order to prove the theorem it is sufficient to show that for $w_\theta(\mu)$ of the form $r_\theta(\mu)$ or $\frac{\partial}{\partial \theta_i} r_\theta(\mu)$ or $\frac{\partial}{\partial \theta_i}\frac{\partial}{\partial \theta_j}r_\theta(\mu)$, and for $W_\theta$ equal to a product of the matrices $R_\theta^{-1}$ or $\frac{\partial}{\partial \theta_i} R_\theta$ or $\frac{\partial}{\partial \theta_i}\frac{\partial}{\partial \theta_j} R_\theta$, we have
	$$\int_\Theta \left| w_\theta^t(\mu) W_\theta y\right |^{p+1} d\theta=O_{\bbP}(1). $$
	
	From Fubini theorem for positive integrands we have
	
	$$\E \left[ \int_\Theta \left| w_\theta^t (\mu) W_\theta y \right|^{p+1} d\theta \right] = \int_\Theta \E \left( \left| w_\theta^t(\mu) W_\theta y \right|^{p+1}\right) d\theta.$$
	
	Now there exists a constant $c_{p+1}$ so that for $X$ a centred Gaussian random variable,
	$$ \E \left( |X|^{p+1}\right)=c_{p+1} \left(\Var(X)\right)^{(p+1)/2},$$ hence
	
	\begin{align*}
	& \E \left( \int_\Theta \left| w_\theta^t (\mu) W_\theta y \right|^{p+1} d \theta \right)\\
	& =  c_{p+1} \int_\Theta \left(\Var \left(w_\theta^t(\mu) W_\theta y\right) \right)^{(p+1)/2} d\theta\\
	& =  c_{p+1} \int_\Theta \left( w_\theta^t(\mu) W_\theta R_{\theta_0} W_\theta^t w_\theta(\mu)\right)^{(p+1)/2} d\theta.
	\end{align*}
	
	Now from Lemmas \ref{lem:largest_eigenvalue} and \ref{lem:largest_eigenvalue_bis} there exists $B< \infty$ such that
	$$ \st \lambda_{\max} \left( W_\theta R_{\theta_0} W_\theta\right) \leq B.$$
	
	Thus
	
	\begin{align*}
	\E \left( \int_\Theta \left| w_\theta^t W_\theta y \right|^{p+1} d\theta \right)\leq B^{(p+1)/2} c_{p+1} \int_\Theta \left\| w_\theta^t(\mu)\right\|^{(p+1)/2} d\theta.
	\end{align*}
	
	Finally for some $q\in \{ 0,1,2\}$ and for $i_1,\cdots, i_q \in \{1,\cdots p \}$ we have
	
	\begin{align*}
	\st \left\| w_\theta^t (\mu) \right\|^2  & = \st \sum_{i=1}^n \left( \frac{\partial}{\partial \theta_{i_1}}\cdots \frac{\partial}{\partial \theta_{i_q}} F_{\theta}(W_2(\mu,\mu_i)) \right)^2\\
	& \leq C \sum_{i=1}^n \left| \frac{1}{1+W_2(\mu,\mu_i)^{1+\tau}}\right|,
	\end{align*}
	with $C<\infty$ coming from Condition \ref{cond:asymptotics:deux}, \ref{cond:asymptotics:six}, and \ref{cond:asymptotics:sept}.
	
	Using the proof of Lemma \ref{lem:row_sum} we see that this quantity is bounded, which finishes the proof of Theorem \ref{theorem:input_prediction}.
\end{proof}

\subsection{Technical lemmas for Section \ref{subsection:asymptotic:properties}}

\begin{lem} \label{lem:row_sum} $$\sup\limits_{\mu \in W_2(\mathbb{R})} \st \sum_{j=1}^n | K_\theta(\mu,\mu_j) | $$
	is bounded as $n \rightarrow \infty$.
\end{lem}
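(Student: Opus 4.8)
The plan is to bound the sum $\sum_{j=1}^n |K_\theta(\mu,\mu_j)|$ uniformly in $\mu \in \W$ and $\theta \in \Theta$ by exploiting the decay of the covariance functions (Condition~\ref{cond:asymptotics:deux}) together with the fact that the observation measures $\mu_j$ are supported on widely separated intervals $[j,j+L]$ (Condition~\ref{cond:asymptotics:un}). The core geometric fact is that if $\mu$ is a fixed probability measure (with some support, possibly far away) and $\mu_j$ has support in $[j,j+L]$, then $W_2(\mu,\mu_j)$ grows linearly in $|j|$ for all but boundedly many indices $j$. More precisely, I would first observe that for any two probability measures $\alpha,\beta$ with $\mathrm{supp}(\alpha) \subset [a,a+L]$ and $\mathrm{supp}(\beta)\subset[b,b+L]$, we have $W_2(\alpha,\beta) \geq |a-b| - L$ (and trivially $W_2 \geq 0$), since any coupling must move mass by at least that much. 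Fixing $\mu$, let $I_\mu \subset \mathbb{R}$ be a bounded interval containing ``most'' of the mass of $\mu$; for $j$ with $[j,j+L]$ far from $I_\mu$, the Wasserstein distance $W_2(\mu,\mu_j)$ is of order $|j|$.

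The key steps, in order, are as follows. First, I would make rigorous the lower bound $W_2(\mu,\mu_j) \geq c_\mu + c\,|j - j_\mu|$ for $|j-j_\mu|$ large, where $j_\mu$ is roughly the index whose interval is closest to the ``center'' of $\mu$, and $c>0$ is an absolute constant (one can take $c$ close to $1$). For the finitely many remaining indices $j$ near $j_\mu$ — their number is bounded by a constant depending only on $L$ — I simply bound each term $|K_\theta(\mu,\mu_j)| \leq \sup_{\theta,t}|F_\theta(t)| \leq A$, contributing an $O(1)$ amount. Second, for the far indices I apply Condition~\ref{cond:asymptotics:deux}: $|K_\theta(\mu,\mu_j)| = |F_\theta(W_2(\mu,\mu_j))| \leq A/(1+W_2(\mu,\mu_j)^{1+\tau}) \leq A/(1 + (c|j-j_\mu|)^{1+\tau})$. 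Third, I sum over $j$: $\sum_{j} A/(1+(c|j-j_\mu|)^{1+\tau}) \leq 2A\sum_{k=0}^{\infty} 1/(1+(ck)^{1+\tau})$, which converges precisely because $\tau > 1$ (indeed $1+\tau > 2 > 1$ suffices), and the bound is independent of $n$, of $\mu$, and of $\theta$. Taking the supremum over $\mu$ and $\theta$ then gives the claimed uniform boundedness.

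**The main obstacle** I anticipate is handling the dependence on $\mu$ carefully: the measure $\mu$ ranges over all of $\W$, so its ``center'' $j_\mu$ and the residual term $c_\mu$ can be arbitrary, and one must check that the bound $\sum_j |K_\theta(\mu,\mu_j)| \leq O(1)$ does not degrade as $\mu$ moves off to infinity. The point is that translating $\mu$ far away only shifts which index $j_\mu$ is ``closest'' and does not change the shape of the decaying sum — the near-diagonal block still has boundedly many terms each $\leq A$, and the tail sum is a shift-invariant convergent series. A secondary technical point is the precise constant in $W_2(\mu,\mu_j) \geq |a - b| - L$ when $\mu$ itself has unbounded support (e.g. heavy tails); here one argues that even though $\mu$ may spread mass over $\mathbb{R}$, for the bound on $W_2(\mu,\mu_j)$ one only needs a lower bound, and the optimal coupling between $\mu$ and the compactly-supported $\mu_j$ must transport \emph{all} of $\mu_j$'s mass out of $[j,j+L]$ whenever $\mu$ places no mass near $[j,j+L]$; a clean way is to use that $W_2(\mu,\mu_j) \geq |m(\mu) - m(\mu_j)| \geq |m(\mu) - j| - L$ where $m(\cdot)$ denotes the mean, combined with the trivial bound, which suffices for the summability argument.
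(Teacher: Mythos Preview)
Your proposal is correct and arrives at the same convergent-series bound as the paper, but the route is genuinely different. The paper does not use the mean of $\mu$ at all; instead it picks $i^* \in \argmin_k W_2(\mu_k,\mu)$ and combines the minimality of $i^*$ with the triangle inequality to get $W_2(\mu,\mu_j) \geq W_2(\mu_{i^*},\mu_j)/2$. This reduces everything to distances between \emph{observation} measures, which both have compact supports in $[i^*,i^*+L]$ and $[j,j+L]$, so the bound $W_2(\mu_{i^*},\mu_j) \geq |j-i^*|-L$ follows immediately from the quantile representation. Your approach instead exploits the Wasserstein-specific inequality $W_2(\mu,\mu_j) \geq |m(\mu)-m(\mu_j)|$ (which is just Jensen on any coupling) to anchor $\mu$ directly by its mean, bypassing the closest-point argument entirely. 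The paper's trick is purely metric and would transfer verbatim to any metric space with well-separated observation points, whereas your mean-based bound is more concrete, avoids the factor $1/2$ loss, and sidesteps the need to compare $\mu$ to any particular $\mu_{i^*}$; on the other hand, it uses a feature particular to Wasserstein (or more generally to transport distances) rather than only the triangle inequality. Either way, the tail sum $\sum_k A/(1+(ck)^{1+\tau})$ and the $O(L)$ near-diagonal terms bounded by $A$ are handled identically.
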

\begin{proof}
	Let $\mu \in \W$ and $i^* \in \argmin\limits_{k\in \{ 1,\cdots n\}} W_2(\mu_k,\mu)$.
	For every $j \in \{1,\cdots, n\},$ $W_2(\mu,\mu_j) \geq W_2(\mu, \mu_{i^*}).$ Moreover from the triangle inequality we have
	$$W_2(\mu,\mu_j) \geq W_2(\mu_j,\mu_{i^*})-W_2(\mu_{i^*},\mu), $$
	
	hence $$ W_2(\mu,\mu_j) \geq \frac{W_2(\mu_j,\mu_{i^*})}{2}. $$
	Let us define $$r_\mu := \st \sum_{i=1}^n F_\theta(W_2(\mu_i,\mu) )$$
	From Condition \ref{cond:asymptotics:deux} we have \begin{align*} r_\mu & \leq \sum_{i=1}^n \frac{A}{1+W_2(\mu_i,\mu)^{1+\tau}}  \leq \sum_{i=1}^n \frac{A}{1+\left(\frac{W_2(\mu_j,\mu_{i^*})}{2}\right)^{1+\tau}}. \end{align*}
	
	Now $$W_2^2(\mu_j,\mu_{i^*})=\int_0^1 \left| q_{\mu_j}(t)-q_{\mu_{i^*}}(t) \right|^2 dt,$$
	where for every $t\in [0,1]$ $$q_\mu(t)=\inf \{ x \in \bbR | \ F_{\mu}(x) \geq t \}.$$
	
	Note that from Condition \ref{cond:asymptotics:un} for every $t \in [0,1]$, $$q_{\mu_i}(t) \in [i,i+L].$$
	
	If  $|j-i^*|\geq L$ we have
	$$\forall t \in \bbR, \ |q_{\mu_{i^*}}(t) - q_{\mu_j}(t)| \geq |j-i^* | - L$$ 
	so that 
\begin{equation} \label{eq:lowe:bound:distij}
	W_2(\mu_{i^*},\mu_j) \geq |j-i^*|-L. 
\end{equation}
	
	Hence \begin{align*} r_\mu & \leq 2 A L + \sum\limits_{j,\ |j-i^*| \geq L} \frac{A}{1+\left(\frac{|j-i^*|-L}{2}\right)^{1+\tau}} \\
	& \leq 2 A L + \sum_{j=-\infty}^{+\infty} \frac{A}{1+\left|\frac{j}{2} \right|^{1+\tau}}  < \infty.
	\end{align*}
\end{proof}

\begin{lem} \label{lem:largest_eigenvalue} Under Conditions \ref{cond:asymptotics:un} to \ref{cond:asymptotics:quatre},
	$$ \st \lambda_{\max} (R_\theta) $$ and $$\st \mi \lambda_{\max} \left( \frac{\partial}{\partial \theta_i} R_\theta \right)$$ are bounded as $n \rightarrow \infty$.
\end{lem}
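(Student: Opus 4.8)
The plan is to bound the largest eigenvalues by the maximal absolute row sums, and then invoke Lemma \ref{lem:row_sum} together with Condition \ref{cond:asymptotics:six}. Recall that for a symmetric matrix $M = (M_{ij})_{1 \le i,j \le n}$, Gershgorin's disc theorem gives $\lambda_{\max}(M) \le \max_{i} \sum_{j=1}^n |M_{ij}|$. First I would apply this to $M = R_\theta = (F_\theta(W_2(\mu_i,\mu_j)))_{i,j}$, so that
$$
\lambda_{\max}(R_\theta) \le \max_{i=1,\dots,n} \sum_{j=1}^n |F_\theta(W_2(\mu_i,\mu_j))| = \max_{i=1,\dots,n} \sum_{j=1}^n |K_\theta(\mu_i,\mu_j)|.
$$
Taking the supremum over $\theta \in \Theta$ on both sides, the right-hand side is at most $\sup_{\mu \in \W} \st \sum_{j=1}^n |K_\theta(\mu,\mu_j)|$, which is bounded as $n \to \infty$ by Lemma \ref{lem:row_sum}. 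This settles the first claim.

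For the second claim, the entries of $\frac{\partial}{\partial \theta_i} R_\theta$ are $\frac{\partial}{\partial \theta_i} F_\theta(W_2(\mu_j,\mu_k))$, and Condition \ref{cond:asymptotics:six} provides exactly the bound $\sup_{\theta \in \Theta} |\frac{\partial}{\partial \theta_i} F_\theta(t)| \le A/(1+t^{1+\tau})$, which is the same decay rate used in Condition \ref{cond:asymptotics:deux} to prove Lemma \ref{lem:row_sum}. Hence I would run the argument of Lemma \ref{lem:row_sum} verbatim, with $F_\theta$ replaced by $\frac{\partial}{\partial \theta_i} F_\theta$, to obtain that $\st \mi \sum_{j=1}^n |\frac{\partial}{\partial \theta_i} F_\theta(W_2(\mu_k,\mu_j))|$ is bounded as $n \to \infty$, uniformly in the index $k$ (and in $\mu$, though here we only need it at $\mu = \mu_k$). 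Combining this with Gershgorin's disc theorem applied to the symmetric matrix $\frac{\partial}{\partial \theta_i} R_\theta$ gives
$$
\st \mi \lambda_{\max}\left(\frac{\partial}{\partial \theta_i} R_\theta\right) \le \st \mi \max_{k=1,\dots,n} \sum_{j=1}^n \left|\frac{\partial}{\partial \theta_i} F_\theta(W_2(\mu_k,\mu_j))\right|,
$$
which is bounded uniformly in $n$.

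I do not expect any serious obstacle here: the content of the lemma is entirely carried by the summability estimate of Lemma \ref{lem:row_sum}, and the only points requiring a little care are (i) checking that Condition \ref{cond:asymptotics:six} gives a bound with the same exponent $1+\tau$ so that the geometric-type series $\sum_{j} A/(1+|j/2|^{1+\tau})$ still converges (it does, since $\tau > 1$), and (ii) noting that Conditions \ref{cond:asymptotics:un}–\ref{cond:asymptotics:trois} are what make Lemma \ref{lem:row_sum} applicable, while Condition \ref{cond:asymptotics:quatre} is not actually needed for this particular lemma (it is listed in the hypotheses for uniformity with the rest of the section). If one prefers to avoid citing Gershgorin explicitly, the same bound follows from $\lambda_{\max}(M) \le \|M\|_{\mathrm{op}} \le (\|M\|_{1} \|M\|_{\infty})^{1/2} = \max_i \sum_j |M_{ij}|$ for symmetric $M$, which is elementary.
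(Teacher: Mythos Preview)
Your proposal is correct and matches the paper's own proof essentially line for line: the paper bounds $\st \lambda_{\max}(R_\theta)$ by $\st \max_i \sum_j |F_\theta(W_2(\mu_i,\mu_j))|$ and cites Lemma~\ref{lem:row_sum}, then says the derivative case is similar. Your observation that the second claim actually requires Condition~\ref{cond:asymptotics:six} (rather than only Conditions~\ref{cond:asymptotics:un}--\ref{cond:asymptotics:quatre} as stated) is correct and worth noting.
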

\begin{proof} $$\st \lambda_{\max}(R_\theta) \leq \st \max_{i=1,...,n} \sum_{j=1}^n \left| F_\theta (W_2(\mu_i,\mu_j))\right| $$ is bounded as $n\rightarrow \infty$ from Lemma \ref{lem:row_sum}. The proof is similar for $$\st \mi \lambda_{\max} \left( \frac{\partial}{\partial \theta_i} R_\theta \right).$$
\end{proof}
In a similar way we also obtain the following Lemma.

\begin{lem} \label{lem:largest_eigenvalue_bis}
	$\forall q \in \{2,3\}$, $\forall i_1,\cdots,i_q \in \{1,\cdots p\}$,
	$$\st \lambda_{\max} \left( \frac{\partial}{\partial \theta_{i_1}}\cdots\frac{\partial}{\partial \theta_{i_q}} R_\theta \right)$$ is bounded as $n \rightarrow \infty$.
\end{lem}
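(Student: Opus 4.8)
The plan is to repeat the strategy already used for Lemmas \ref{lem:row_sum} and \ref{lem:largest_eigenvalue}, the only change being that the decay bound on $F_\theta$ coming from Condition \ref{cond:asymptotics:deux} is replaced by the decay bound on its $\theta$-derivatives of order $q\in\{2,3\}$ supplied by Condition \ref{cond:asymptotics:sept}. First I would note that the matrix $D_\theta := \frac{\partial}{\partial \theta_{i_1}}\cdots\frac{\partial}{\partial \theta_{i_q}} R_\theta$ is symmetric, since its $(i,j)$ entry equals $\bigl(\frac{\partial}{\partial \theta_{i_1}}\cdots\frac{\partial}{\partial \theta_{i_q}} F_\theta\bigr)(W_2(\mu_i,\mu_j))$ and $W_2$ is symmetric; hence its largest eigenvalue is bounded by its maximal absolute row sum,
\[
\lambda_{\max}(D_\theta) \leq \max_{i=1,\dots,n} \sum_{j=1}^n \left| \left(\tfrac{\partial}{\partial \theta_{i_1}}\cdots\tfrac{\partial}{\partial \theta_{i_q}} F_\theta\right)(W_2(\mu_i,\mu_j)) \right|.
\]

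Next I would apply Condition \ref{cond:asymptotics:sept} to bound each summand, uniformly in $\theta\in\Theta$, by $A/(1+W_2(\mu_i,\mu_j)^{1+\tau})$, so that
\[
\sup_{\theta\in\Theta}\lambda_{\max}(D_\theta) \leq \max_{i=1,\dots,n} \sum_{j=1}^n \frac{A}{1+W_2(\mu_i,\mu_j)^{1+\tau}}.
\]
The right-hand side is bounded uniformly in $n$ by exactly the computation in the proof of Lemma \ref{lem:row_sum}: by Condition \ref{cond:asymptotics:un} the quantile functions of $\mu_i$ and $\mu_j$ take values in $[i,i+L]$ and $[j,j+L]$ respectively, hence $W_2(\mu_i,\mu_j)\geq |i-j|-L$ whenever $|i-j|\geq L$. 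Splitting the sum into the at most $2L+1$ indices with $|i-j|<L$ and the remaining ones, and using $\tau>1$ to sum the tail series $\sum_{k\in\mathbb{Z}} A/(1+|k|^{1+\tau})<\infty$, yields a finite bound independent of $i$ and of $n$. This is in fact slightly easier than Lemma \ref{lem:row_sum}, since there the extra factor $1/2$ was needed only to handle an arbitrary base point $\mu$, whereas here we only need the estimate at the grid points $\mu_1,\dots,\mu_n$.

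I do not expect any genuine obstacle: the statement is a routine corollary of the preceding lemmas. The only point to verify is that Condition \ref{cond:asymptotics:sept} indeed provides the estimate $A/(1+t^{1+\tau})$ for the relevant derivative orders $q\in\{2,3\}$ — which it does by construction — so that the summable decay propagates from $F_\theta$ to all its $\theta$-derivatives up to order three, and the row-sum argument of Lemmas \ref{lem:row_sum}–\ref{lem:largest_eigenvalue} goes through unchanged.
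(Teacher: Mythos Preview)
Your proposal is correct and follows exactly the approach the paper intends: the paper itself does not spell out a proof of Lemma \ref{lem:largest_eigenvalue_bis} but simply states ``In a similar way we also obtain the following Lemma,'' referring back to Lemmas \ref{lem:row_sum} and \ref{lem:largest_eigenvalue}. Your write-up makes this explicit by replacing the decay bound of Condition \ref{cond:asymptotics:deux} with that of Condition \ref{cond:asymptotics:sept} and re-running the row-sum argument, which is precisely what is meant.
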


\subsection{Proofs for Section \ref{subsection:asymptotics:example}}

\begin{prop} \label{prop:for:proof:example}
Under the setting of Proposition \ref{prop:asymptotics:example}, almost surely as $n \to \infty$,
\begin{multline*}
\sup_{\theta \in \Theta}
\left|
\frac{1}{n} \sum_{i,j=1}^n
\left[
 K_{\theta}(\mu_i,\mu_j)
-
K_{\theta_0}(\mu_i,\mu_j)
 \right]^2
\right.  \\
 \left .
- \sum_{j=-\infty}^{\infty}
\mathbb{E}
\left(
\left[
 K_{\theta}(\mu_0,\mu_j)
-
K_{\theta_0}(\mu_0,\mu_j)
 \right]^2
\right)
\right| \to 0
\end{multline*}
and the sum in the right-hand side of the above display is a continuous function of $\theta$.
\end{prop}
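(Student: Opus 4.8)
The plan is to ``stationarize'' the double sum by exploiting the translation covariance of the Wasserstein distance, reduce it to an ergodic average over a stationary sequence, and then upgrade to uniformity in $\theta$ using the derivative bounds of Condition~\ref{cond:asymptotics:six}. Write $g_\theta(\mu,\nu) := \left[K_\theta(\mu,\nu)-K_{\theta_0}(\mu,\nu)\right]^2$, so that the quantity of interest is $\frac1n\sum_{i,j=1}^n g_\theta(\mu_i,\mu_j)$. First I would observe that, for $i\in\mathbb{Z}$, the measure $\mu_i$ is the image of a measure $\rho_i$ with density $s\mapsto \exp(Z_i(s))/M_i$ on $[0,L]$ under the translation $t\mapsto t+i$; the sequence $(\rho_i)_{i\in\mathbb{Z}}$ is then i.i.d., and since the quantile function of a translated measure is the translate of the quantile function,
\[
W_2^2(\mu_i,\mu_j) = \int_0^1 \left(F_{\rho_i}^{-1}(t)-F_{\rho_j}^{-1}(t) - (j-i)\right)^2 dt .
\]
Hence $g_\theta(\mu_i,\mu_{i+k})$ is a fixed measurable function of $(\rho_i,\rho_{i+k},k)$ only. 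Combining this with Condition~\ref{cond:asymptotics:deux} and the deterministic bound $W_2(\mu_i,\mu_{i+k})\geq |k|-L$ for $|k|\geq L$ (see \eqref{eq:lowe:bound:distij}) yields $g_\theta(\mu_i,\mu_{i+k})\leq 4A^2/(1+(|k|-L)^{1+\tau})^2$ for $|k|\geq L$, which is summable in $k$ uniformly in $\theta$ and in the realization.

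Next I would set $S_i(\theta) := \sum_{k\in\mathbb{Z}} g_\theta(\mu_i,\mu_{i+k})$; by the previous bound this is an absolutely convergent series, bounded by a deterministic constant, and it is the same fixed measurable functional of the shifted family $(\rho_{i+k})_{k\in\mathbb{Z}}$. Therefore $(S_i(\theta))_{i\in\mathbb{Z}}$ is stationary and ergodic (a factor of the Bernoulli shift on the i.i.d.\ sequence $(\rho_i)$), and Birkhoff's ergodic theorem gives, for each fixed $\theta$, that almost surely $\frac1n\sum_{i=1}^n S_i(\theta) \to \E(S_0(\theta)) = \sum_{k\in\mathbb{Z}} \E(g_\theta(\mu_0,\mu_k))$, which is precisely the right-hand side of the proposition (the interchange of expectation and summation being justified by Tonelli). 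The difference between $\frac1n\sum_{i=1}^n S_i(\theta)$ and $\frac1n\sum_{i,j=1}^n g_\theta(\mu_i,\mu_j)$ is $\frac1n\sum_{i=1}^n\sum_{j\notin\{1,\dots,n\}} g_\theta(\mu_i,\mu_j)$, which by the decay estimate above is $O(1/n)$ deterministically and uniformly in $\theta$ (the unrestricted sum over $i$ of the truncated tails is a convergent series since $2(1+\tau)>2$); this establishes pointwise-in-$\theta$ almost sure convergence to the claimed limit.

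To obtain uniformity in $\theta$, I would differentiate under the sum: by Conditions~\ref{cond:asymptotics:deux} and \ref{cond:asymptotics:six}, $|\partial_{\theta_k} g_\theta(\mu_i,\mu_j)| \leq 4A^2/(1+W_2(\mu_i,\mu_j)^{1+\tau})^2$, and the row-sum argument in the proof of Lemma~\ref{lem:row_sum} (again via \eqref{eq:lowe:bound:distij}) bounds $\sum_{j=1}^n (1+W_2(\mu_i,\mu_j)^{1+\tau})^{-2}$ uniformly in $i$ and $n$. Hence the family $\{\theta\mapsto \frac1n\sum_{i,j=1}^n g_\theta(\mu_i,\mu_j)\}_n$ is uniformly Lipschitz on the compact set $\Theta$. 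Combining this equicontinuity with pointwise almost sure convergence on a fixed countable dense subset of $\Theta$ gives almost sure uniform convergence, and the limit, being a uniform limit of continuous functions (equivalently, a uniformly convergent series whose terms $\theta\mapsto\E(g_\theta(\mu_0,\mu_k))$ are continuous by dominated convergence with the constant dominating function $4A^2$), is continuous. The step I expect to be the main obstacle is the stationarization itself — recognizing that translation covariance of $W_2$ converts the double sum into an ergodic average and carefully controlling the boundary terms; once that is set up, the polynomial decay from Condition~\ref{cond:asymptotics:deux} and the derivative bound from Condition~\ref{cond:asymptotics:six} make the remaining summability, ergodic-theorem, and uniformization arguments routine.
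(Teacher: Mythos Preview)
Your proposal is correct and takes a genuinely different route from the paper for the pointwise part, while the uniformization step is essentially the same.

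The paper does not invoke ergodic theory. Instead it introduces a block length $m_n\to\infty$ with $n/m_n\to\infty$, truncates the double sum to pairs $(i,j)$ lying in the same block, and writes the truncated sum as an average of $\lfloor n/m_n\rfloor$ i.i.d.\ bounded random variables $B_k$ (one per block) plus a negligible remainder. A strong law for triangular arrays of i.i.d.\ bounded variables (Theorem~2.1 in \cite{hu97strong}) then gives $\frac{1}{n/m_n}\sum_k B_k - \E(B_0)\to 0$ a.s., and a separate computation shows $\E(B_0)\to T_\theta$. Your route instead exploits that $g_\theta(\mu_i,\mu_{i+k})$ is a fixed functional of $(\rho_i,\rho_{i+k},k)$, defines the full row sums $S_i(\theta)=\sum_{k\in\mathbb{Z}}g_\theta(\mu_i,\mu_{i+k})$, observes that $(S_i(\theta))_i$ is a bounded stationary ergodic sequence (a factor of the Bernoulli shift on $(\rho_i)$), and applies Birkhoff directly. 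Your handling of the boundary correction --- showing that $\frac{1}{n}\sum_{i=1}^n\sum_{j\notin\{1,\dots,n\}}g_\theta(\mu_i,\mu_j)=O(1/n)$ via the summability $\sum_k k\cdot k^{-2(1+\tau)}<\infty$ --- is cleaner than the blocking truncation and avoids the auxiliary sequence $m_n$ and the specialized triangular-array SLLN. The paper's approach, on the other hand, stays within elementary i.i.d.\ tools and does not require setting up the two-sided shift or citing Birkhoff. For the uniformity step both arguments coincide: bound $\partial_{\theta_k}g_\theta$ via Conditions~\ref{cond:asymptotics:deux} and~\ref{cond:asymptotics:six}, use the row-sum estimate of Lemma~\ref{lem:row_sum}, and pass from a countable dense set to all of $\Theta$ by equicontinuity.
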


\begin{proof}
Let
\[
S_{\theta} = 
\frac{1}{n} \sum_{i,j=1}^n
\left[
 K_{\theta}(\mu_i,\mu_j)
-
K_{\theta_0}(\mu_i,\mu_j)
 \right]^2.
\]
Let $(m_n)_{n \in \mathbb{N}}$ be a sequence of integers so that as $n \to \infty$, $m_n \to \infty$ and $n/m_n \to \infty$. Let
\[
S_{\theta,m_n} = 
\frac{1}{n} \sum_{i,j=1}^n
\mathbf{1}_{ \left\{
\lfloor \frac{i-1}{m_n} \rfloor = \lfloor \frac{j-1}{m_n} \rfloor
\right\} }
\left[
 K_{\theta}(\mu_i,\mu_j)
-
K_{\theta_0}(\mu_i,\mu_j)
 \right]^2.
\]
With the same proof as that of Lemma D.11 in \cite{bachoc14asymptotic}, we can show (using \eqref{eq:lowe:bound:distij}) that $| S_{\theta} - S_{\theta,m_n} |$ goes almost surely to zero as $n \to \infty$. Also
\begin{flalign*}
S_{\theta,m_n}
& =
\frac{1}{n/m_n}
\sum_{k=0}^{ \lfloor \frac{n}{m_n} \rfloor -1}
\frac{1}{m_n}
\sum_{i,j=1}^{m_n}
\left[
 K_{\theta}(\mu_{k m_n + i},\mu_{k m_n + j})
 \right.
 & \\
 &
 \left.
-
K_{\theta_0}(\mu_{k m_n + i},\mu_{k m_n + j})
 \right]^2
& \\
& + \frac{1}{n}    \! \! \! \!  \! \!  \! \!  \! \!  \! \!  \sum_{\substack{i,j=\\m_n \left( \lfloor \frac{n}{m_n} \rfloor - 1 \right) +1}}^n \! \! \! \! \! \! \! \! \! \! \! \! 
\mathbf{1}_{ \left\{
\lfloor \frac{i-1}{m_n} \rfloor = \lfloor \frac{j-1}{m_n} \rfloor
\right\} }
\left[
 K_{\theta}(\mu_i,\mu_j)
-
K_{\theta_0}(\mu_i,\mu_j)
 \right]^2
 & \\
 & =
 \frac{1}{n / m_n} 
  \sum_{k=0}^{ \lfloor \frac{n}{m_n} \rfloor -1}
  B_k
  +r,
\end{flalign*}
say. From \eqref{eq:lowe:bound:distij}, one can show simply that $r \to 0$ almost surely as $n \to \infty$.
Also, the $B_k$ are independent random variables with identical distribution, and they are bounded in absolute value by
\[
2L+1 + \sum_{i=-\infty}^{\infty}
2 \left(
\frac{A}{1+|i|^{1+\tau}}
\right)^2
< \infty
\]
from \eqref{eq:lowe:bound:distij} and Condition \ref{cond:asymptotics:deux}. Hence, applying Theorem 2.1 in \cite{hu97strong} yields
\[
\left(
 \frac{1}{n / m_n} 
  \sum_{k=0}^{ \lfloor \frac{n}{m_n} \rfloor -1}
  B_k
\right)
  -
  \mathbb{E}(B_0)
  \to^{a.s.}_{n \to \infty} 0.
\]
Hence, finally he have obtained almost surely as $n \to \infty$
\[
\left|
S_{\theta}
-
\frac{1}{m_n}
\sum_{i,j=1}^{m_n}
\mathbb{E}
\left[
\left(
K_{\theta}(\mu_i,\mu_j)
-
K_{\theta_0}(\mu_i,\mu_j)
\right)^2
\right]
\right|
\to 0.
\]

Also, we have, for $|i-j| \geq L$
\begin{alignat*}{1}
& \mathbb{E} 
\left[
\left(
K_{\theta}(\mu_i,\mu_j)
-
K_{\theta_0}(\mu_i,\mu_j)
\right)^2
\right]
 \\
& \leq  2
\left(
\frac{A}{1+(|i-j|-L)^{1+\tau}}
\right)^2
\end{alignat*}
from \eqref{eq:lowe:bound:distij}. Hence, we can simply show
\begin{flalign*}
\left|
\frac{1}{m_n}
\sum_{i,j=1}^{m_n}
\mathbb{E}
\left[
\left(
K_{\theta}(\mu_i,\mu_j)
-
K_{\theta_0}(\mu_i,\mu_j)
\right)^2
\right]
-
T_{\theta}
\right|
\to^{a.s.}_{n \to \infty} 0,
\end{flalign*}
with
\[
T_{\theta}
=
 \sum_{j=-\infty}^{\infty}
\mathbb{E}
\left(
\left[
 K_{\theta}(\mu_0,\mu_j)
-
K_{\theta_0}(\mu_0,\mu_j)
 \right]^2
\right).
\]
From \eqref{eq:lowe:bound:distij} and Condition \ref{cond:asymptotics:six}, we can show that there exists a deterministic finite constant $C$ so that
\[
\sup_{\theta \in \Theta}
\max_{i=1,...,p}
\left|
\frac{\partial}{\partial \theta_i}
S_{\theta}
\right|
\leq C.
\]
Also, by dominated convergence $T_{\theta}$ is a continuously differentiable function of $\theta$ and
\[
\sup_{\theta \in \Theta}
\max_{i=1,...,p}
\left|
\frac{\partial}{\partial \theta_i}
T_{\theta}
\right|
\leq C'
\]
where $C'$ is also a deterministic finite constant.
Hence $\sup_{\theta \in \Theta} |S_{\theta} - T_{\theta}|\to 0$ almost surely as $n \to \infty$.

\end{proof}

\begin{proof}[Proof of Proposition \ref{prop:asymptotics:example}]

Assume that 
\[
\liminf \limits_{n\rightarrow \infty} \inf\limits_{\|\theta-\theta_0 \| \geq \alpha} \frac{1}{n} \sum_{i,j=1}^n \left[K_\theta(\mu_i,\mu_j)-K_{\theta_0}(\mu_i,\mu_j) \right]^2 = 0.
\]
Then from Proposition \ref{prop:for:proof:example} and by compacity, there exists $\theta_1 \neq \theta_0$ so that
\[
\sum_{j=-\infty}^{\infty}
\mathbb{E}
\left(
\left[
 K_{\theta_1}(\mu_0,\mu_j)
-
K_{\theta_0}(\mu_0,\mu_j)
 \right]^2
\right)
=0.
\]
From the conditions on $\{F_{\theta}\}$, there exist $\beta>0$, $\delta>0$, $a \geq 0$ so that for $u \in [a-\delta,a+\delta]$ we have $| F_{\theta_1}(u) - F_{\theta_0}(u) | \geq \beta$. Hence, we have
\[
\beta^2 P( W_2( \mu_0 , \mu_{k-1} ) \in [a- \delta , a+ \delta]  )
=0,
\]
for $k$ so that $ a \in (k-1,k]$. 

Let now $g_0: [0,L] \to \mathbb{R}^+$ be defined by $ g_0(u)= D_0 \exp(-1/(1-u^2)) \mathbf{1}_{\{ u \in [-1,1] \}}$ where $ 0 < D_0 < \infty$ is so that $\int_{\mathbb{R}} g_0(u) du = 1$. Then, $g_0$ is infinitely differentiable.
Let $h_0(u) = (1 / \sigma) g_0( (u-\delta/4) / \sigma )$ and $h_{k-1}(u) = (1 / \sigma) g_0( (u-a) / \sigma )$, where $\sigma >0$ is chosen small enough so that, with $\nu_0$ and $\nu_{k-1}$ the distributions with probability density functions $h_0$ and $h_{k-1}$ we have $W_2( \nu_0,\nu_{k-1} ) \in [ a- \delta/2 , a+ \delta/2 ]$ and $\nu_0,\nu_{k-1}$ have supports in $[0,L],[k-1,k-1+L]$.

Let now $P_1,P_2$ be two distributions with support in $[0,L]$, with quantile functions $q_1,q_2$, with cumulative distribution functions $F_1,F_2$ and with probability density functions $f_1,f_2$. Then we have
\begin{alignat*}{1} \label{eq:bound:wasserstein:with:pdf}
W_2( P_1,P_2 )  & = \sqrt{ \int_{0}^1 (q_1 - q_2)^2 } \\ 
& \leq \sqrt{L} \sqrt{ \int_{0}^1 |q_1 - q_2| } \\
&  = \sqrt{L} \sqrt{ \int_{0}^ L  |F_1 - F_2| } \\
& \leq L \sqrt{ \sup_{ u \in [0,L]}  |F_1(u) - F_2(u)| } \\
&  \leq L \sqrt{ \int_0^L  |f_1 - f_2| } \\
&  \leq L^{3/2} \sqrt{ \sup_{u \in [0,L]}  |f_1(u) - f_2 (u) | }. \notag
\end{alignat*}

Let $\tau>0$ be so that $ L^{3/2} \tau^{1/2} \leq \delta/5$.
Then, for any $f: [0,L] \to \mathbb{R}$ and $g:[k-1,k-1+L] \to \mathbb{R}$, we have that $|  f / ( \int_{0}^L f)  - h_0 |_{\infty} \leq \tau$ and $|  g / ( \int_{0}^L g) - h_{k-1} |_{\infty} \leq \tau$ imply $W_2( \nu_f , \nu_g ) \in [a- \delta , a+ \delta]$, where $\nu_f,\nu_g$ are the measures with probability density functions $f$ and $g$. 
Since $h_0$ and $h_{k-1}$ are infinitely differentiable, have integral one, and have respective supports included in $[0,L]$ and $[k-1,k-1+L]$, it is easy to see that there exists $\epsilon>0$ so that $| f - h_0 |_{\infty} \leq \epsilon$ implies $|  f / ( \int_{0}^L f)  - h_0 |_{\infty} \leq \tau$, and similarly for $g$ and $h_{k-1}$. Hence, if we can show that
\[
P( \sup_{u \in [0,L]} | h_0(u) - \exp(Z_0(u)) | \leq \epsilon )
>0
\] 
and
\[
P( \sup_{u \in [0,L]} | h_{k-1}(u + (k-1)) - \exp(Z_{k-1}(u)) | \leq \epsilon )
>0,
\] 
we obtain a contradiction. The two probabilities above are shown to be non-zero similarly and we will address the first one only. It is sufficient to show that
\[
P( \sup_{u \in [0,L]} | h_0(u) + \epsilon/2 - \exp(Z_0(u)) | \leq \epsilon/2 )
>0.
\] 
Since $h_0 + \epsilon/2$ is continuous and bounded away from $0$ and infinity on $[0,L]$, it is sufficient to show that for all $\kappa >0$,
\[
P( \sup_{u \in [0,L]} | \log(  h_0(u) + \epsilon / 2 ) - Z_0(u) | \leq \kappa )
>0.
\] 
From e.g. Theorem 1.1 in \cite{li1999approximation}, since $z_0$ has mean function zero, we have
\[
P( \sup_{u \in [0,L]} |   Z_0(u) | \leq \kappa )
>0.
\] 
Consider now the Gaussian measures $\mathcal{G}_1$ and $\mathcal{G}_2$, on the space of continuous functions from $[0,L] \to \mathbb{R}$, so that $\mathcal{G}_1$ is the measure of the Gaussian process $Z_0$ and $\mathcal{G}_2$ is that of $Z_0 - \log( h_0 + \epsilon/2) $.    
Then, from e.g. the discussion in (22) in Chapter 4.2 of \cite{stein99interpolation}, since $\log(h_0 + \epsilon / 2)$ is infinitely differentiable, and from the assumptions on the covariance function of $Z_0$, the Gaussian measures $\mathcal{G}_1$ and  $\mathcal{G}_2$ are equivalent. Hence, since 
\[
 \mathcal{G}_1
 \left( 
 \{
f \mbox{ continuous}: [0,L] \to \mathbb{R};
| f  |_{\infty} \leq \kappa
 \}
 \right)
 >0,
\]
we also have
\[
 \mathcal{G}_2
 \left( 
 \{
f \mbox{ continuous}: [0,L] \to \mathbb{R};
| f  |_{\infty} \leq \kappa
 \}
 \right)
 >0,
\]
which is exactly
\[
P( \sup_{u \in [0,L]} | \log(  h_0(u) + \epsilon / 2 ) - Z_0(u) | \leq \kappa )
>0.
\] 
This concludes the proof that Condition \ref{cond:asymptotics:cinq} holds.

The proof that Condition \ref{cond:asymptotics:huit} holds can be obtained in the same way. In particular, an analog of Proposition \ref{prop:for:proof:example} can be obtained.
We skip the details.
\end{proof}

% you can choose not to have a title for an appendix
% if you want by leaving the argument blank

% Can use something like this to put references on a page
% by themselves when using endfloat and the captionsoff option.
\ifCLASSOPTIONcaptionsoff
  \newpage
\fi

% trigger a \newpage just before the given reference
% number - used to balance the columns on the last page
% adjust value as needed - may need to be readjusted if
% the document is modified later
%\IEEEtriggeratref{8}
% The "triggered" command can be changed if desired:
%\IEEEtriggercmd{\enlargethispage{-5in}}

% references section

% can use a bibliography generated by BibTeX as a .bbl file
% BibTeX documentation can be easily obtained at:
% http://www.ctan.org/tex-archive/biblio/bibtex/contrib/doc/
% The IEEEtran BibTeX style support page is at:
% http://www.michaelshell.org/tex/ieeetran/bibtex/
%\bibliographystyle{IEEEtran}
% argument is your BibTeX string definitions and bibliography database(s)
%\bibliography{IEEEabrv,../bib/paper}
%
% <OR> manually copy in the resultant .bbl file
% set second argument of \begin to the number of references
% (used to reserve space for the reference number labels box)

\section*{Acknowledgements}
We thank the anonymous reviewers, whose suggestions have greatly contributed to improve the manuscript.

We thank Yann Richet, from the French Radioprotection and Nuclear Safety Institute (IRSN), for introducing us to the problem of
axial burn up analysis of fuel pins, which motivated the present work.

\bibliographystyle{IEEEtran}
\bibliography{biblio-glm-IEEE}

% biography section
% 
% If you have an EPS/PDF photo (graphicx package needed) extra braces are
% needed around the contents of the optional argument to biography to prevent
% the LaTeX parser from getting confused when it sees the complicated
% \includegraphics command within an optional argument. (You could create
% your own custom macro containing the \includegraphics command to make things
% simpler here.)
%\begin{IEEEbiography}[{\includegraphics[width=1in,height=1.25in,clip,keepaspectratio]{mshell}}]{Michael Shell}
% or if you just want to reserve a space for a photo:

%\begin{IEEEbiographynophoto}{M\'elanie Blaz\`ere}
%Biography text here.
%\end{IEEEbiographynophoto}

% if you will not have a photo at all:
%\begin{IEEEbiographynophoto}{Jean-Michel Loubes}
%Biography text here.
%\end{IEEEbiographynophoto}

% insert where needed to balance the two columns on the last page with
% biographies
%\newpage

%\begin{IEEEbiographynophoto}{Fabrice Gamboa}
%Biography text here.
%\end{IEEEbiographynophoto}

% You can push biographies down or up by placing
% a \vfill before or after them. The appropriate
% use of \vfill depends on what kind of text is
% on the last page and whether or not the columns
% are being equalized.

%\vfill

% Can be used to pull up biographies so that the bottom of the last one
% is flush with the other column.
%\enlargethispage{-5in}

% that's all folks
\end{document}